\newcommand{\arxiv}[1]{\iftoggle{arxiv}{#1}{}}
\global\toggletrue{arxiv}
\global\togglefalse{neurips}
\newcommand{\loose}{\looseness=-1}
    \let\Cref\crtCref
    \let\cref\crtcref
\newcommand{\crefzak}[1]{\crtcref{#1}}
\newcommand{\cmark}{\ding{51}}%
\newcommand{\xmark}{\ding{55}}%
\declaretheorem[name=Theorem,parent=section]{theorem}
\declaretheorem[name=Lemma,parent=section]{lemma}
\declaretheorem[name=Corollary,parent=section]{corollary}
\declaretheorem[name=Assumption, parent=section]{assumption}
\declaretheorem[name=Remark,parent=section]{remark}
\declaretheorem[name=Proposition, parent=section]{proposition}
\renewenvironment{proof}[1][Proof]%
{%
	\par\noindent{\bfseries\upshape {#1.}\ }%
}%
{\qed\newline}
\newtheorem*{theorem*}{Theorem}
\theoremstyle{plain}
\newtheorem{definition}[theorem]{Definition}
\xpatchcmd{\proof}{\itshape}{\normalfont\proofnameformat}{}{}
\newcommand{\proofnameformat}{\bfseries}
\newcommand{\pref}[1]{\cref{#1}}
\newcommand{\pfref}[1]{Proof of \pref{#1}}
\renewcommand{\eqref}[1]{\texorpdfstring{\hyperref[#1]{(\ref*{#1})}}{(\ref*{#1})}}
\Crefname{assumption}{Assumption}{Assumptions}
\def\ddefloop#1{\ifx\ddefloop#1\else\ddef{#1}\expandafter\ddefloop\fi}
\def\ddef#1{\expandafter\def\csname bb#1\endcsname{\ensuremath{\mathbb{#1}}}}
\def\ddefloop#1{\ifx\ddefloop#1\else\ddef{#1}\expandafter\ddefloop\fi}
\def\ddef#1{\expandafter\def\csname b#1\endcsname{\ensuremath{\mathbf{#1}}}}
\def\ddef#1{\expandafter\def\csname sf#1\endcsname{\ensuremath{\mathsf{#1}}}}
\def\ddef#1{\expandafter\def\csname c#1\endcsname{\ensuremath{\mathcal{#1}}}}
\def\ddef#1{\expandafter\def\csname h#1\endcsname{\ensuremath{\widehat{#1}}}}
\def\ddef#1{\expandafter\def\csname hc#1\endcsname{\ensuremath{\widehat{\mathcal{#1}}}}}
\def\ddef#1{\expandafter\def\csname t#1\endcsname{\ensuremath{\widetilde{#1}}}}
\def\ddef#1{\expandafter\def\csname tc#1\endcsname{\ensuremath{\widetilde{\mathcal{#1}}}}}
\def\ddefloop#1{\ifx\ddefloop#1\else\ddef{#1}\expandafter\ddefloop\fi}
\def\ddef#1{\expandafter\def\csname scr#1\endcsname{\ensuremath{\mathscr{#1}}}}
\newcommand{\veps}{\varepsilon}
\newcommand{\tveps}{{\varepsilon}}
\renewcommand{\tC}{4}
\newcommand{\cfrak}{\mathfrak{c}}
\DeclareMathOperator*{\argmin}{arg\,min} %
\DeclareMathOperator*{\argmax}{arg\,max}
\def\ddef#1{\expandafter\def\csname b#1\endcsname{\ensuremath{\mb{#1}}}}
\newcommand{\ind}[1]{^{(#1)}}
\DeclarePairedDelimiter{\abs}{\lvert}{\rvert} %
\DeclarePairedDelimiter{\brk}{[}{]}
\DeclarePairedDelimiter{\crl}{\{}{\}}
\DeclarePairedDelimiter{\nrm}{\|}{\|}
\DeclarePairedDelimiter{\ceil}{\lceil}{\rceil}
\let\P\undefined
\DeclareMathOperator{\En}{\mathbb{E}}
\DeclareMathOperator{\P}{P}
\newcommand{\mb}[1]{\boldsymbol{#1}}
\renewcommand{\bm}[1]{\boldsymbol{#1}}
\newcommand{\wt}[1]{\widetilde{#1}}
\newcommand{\wb}[1]{\widebar{#1}}
\let\underbar\undefined
\let\save@mathaccent\mathaccent
\newcommand*\if@single[3]{%
	\setbox0\hbox{${\mathaccent"0362{#1}}^H$}%
	\setbox2\hbox{${\mathaccent"0362{\kern0pt#1}}^H$}%
	\ifdim\ht0=\ht2 #3\else #2\fi
}
\newcommand*\rel@kern[1]{\kern#1\dimexpr\macc@kerna}
\newcommand*\widebar[1]{\@ifnextchar^{{\wide@bar{#1}{0}}}{\wide@bar{#1}{1}}}
\newcommand*\underbar[1]{\@ifnextchar_{{\under@bar{#1}{0}}}{\under@bar{#1}{1}}}
\newcommand*\wide@bar[2]{\if@single{#1}{\wide@bar@{#1}{#2}{1}}{\wide@bar@{#1}{#2}{2}}}
\newcommand*\under@bar[2]{\if@single{#1}{\under@bar@{#1}{#2}{1}}{\under@bar@{#1}{#2}{2}}}
\newcommand*\wide@bar@[3]{%
	\begingroup
	\def\mathaccent##1##2{%
		\let\mathaccent\save@mathaccent
		\if#32 \let\macc@nucleus\first@char \fi
		\setbox\z@\hbox{$\macc@style{\macc@nucleus}_{}$}%
		\setbox\tw@\hbox{$\macc@style{\macc@nucleus}{}_{}$}%
		\dimen@\wd\tw@
		\advance\dimen@-\wd\z@
		\divide\dimen@ 3
		\@tempdima\wd\tw@
		\advance\@tempdima-\scriptspace
		\divide\@tempdima 10
		\advance\dimen@-\@tempdima
		\ifdim\dimen@>\z@ \dimen@0pt\fi
		\rel@kern{0.6}\kern-\dimen@
		\if#31
		\overline{\rel@kern{-0.6}\kern\dimen@\macc@nucleus\rel@kern{0.4}\kern\dimen@}%
		\advance\dimen@0.4\dimexpr\macc@kerna
		\let\final@kern#2%
		\ifdim\dimen@<\z@ \let\final@kern1\fi
		\if\final@kern1 \kern-\dimen@\fi
		\else
		\overline{\rel@kern{-0.6}\kern\dimen@#1}%
		\fi
	}%
	\macc@depth\@ne
	\let\math@bgroup\@empty \let\math@egroup\macc@set@skewchar
	\mathsurround\z@ \frozen@everymath{\mathgroup\macc@group\relax}%
	\macc@set@skewchar\relax
	\let\mathaccentV\macc@nested@a
	\if#31
	\macc@nested@a\relax111{#1}%
	\else
	\def\gobble@till@marker##1\endmarker{}%
	\futurelet\first@char\gobble@till@marker#1\endmarker
	\ifcat\noexpand\first@char A\else
	\def\first@char{}%
	\fi
	\macc@nested@a\relax111{\first@char}%
	\fi
	\endgroup
}
\newcommand*\under@bar@[3]{%
	\begingroup
	\def\mathaccent##1##2{%
		\let\mathaccent\save@mathaccent
		\if#32 \let\macc@nucleus\first@char \fi
		\setbox\z@\hbox{$\macc@style{\macc@nucleus}_{}$}%
		\setbox\tw@\hbox{$\macc@style{\macc@nucleus}{}_{}$}%
		\dimen@\wd\tw@
		\advance\dimen@-\wd\z@
		\divide\dimen@ 3
		\@tempdima\wd\tw@
		\advance\@tempdima-\scriptspace
		\divide\@tempdima 10
		\advance\dimen@-\@tempdima
		\ifdim\dimen@>\z@ \dimen@0pt\fi
		\rel@kern{0.6}\kern-\dimen@
		\if#31
		\underline{\rel@kern{-0.6}\kern\dimen@\macc@nucleus\rel@kern{0.4}\kern\dimen@}%
		\advance\dimen@0.4\dimexpr\macc@kerna
		\let\final@kern#2%
		\ifdim\dimen@<\z@ \let\final@kern1\fi
		\if\final@kern1 \kern-\dimen@\fi
		\else
		\underline{\rel@kern{-0.6}\kern\dimen@#1}%
		\fi
	}%
	\macc@depth\@ne
	\let\math@bgroup\@empty \let\math@egroup\macc@set@skewchar
	\mathsurround\z@ \frozen@everymath{\mathgroup\macc@group\relax}%
	\macc@set@skewchar\relax
	\let\mathaccentV\macc@nested@a
	\if#31
	\macc@nested@a\relax111{#1}%
	\else
	\def\gobble@till@marker##1\endmarker{}%
	\futurelet\first@char\gobble@till@marker#1\endmarker
	\ifcat\noexpand\first@char A\else
	\def\first@char{}%
	\fi
	\macc@nested@a\relax111{\first@char}%
	\fi
	\endgroup
}
\newcommand{\phibar}{\bar{\phi}}
\newcommand{\LRMDP}{Low-Rank MDP\xspace}
\newcommand{\lrmdp}{low-rank MDP\xspace}
\newcommand{\initd}{\rho}
\newcommand{\phistarh}[1][h]{\phi^{\star}_{#1}}
\newcommand{\phistarhpi}[1][\pi]{{\phi}^{\star,#1}_{h}}
\newcommand{\phistar}{\phi^{\star}}
\newcommand{\muh}[1][h+1]{\mu^{\star}_{#1}}
\newcommand{\muhb}[1][h+1]{\bar{\mu}^{\star}_{#1}}
\newcommand{\cXbar}{\widebar{\cX}}
\newcommand{\briee}{\texttt{BRIEE}}
\newcommand{\moffle}{\texttt{MOFFLE}}
\newcommand{\est}{\texttt{LinEst}}
\newcommand{\veceval}{\texttt{EstVec}}
\newcommand{\replearn}{\texttt{RepLearn}}
\newcommand{\spanner}{\texttt{RobustSpanner}}
\newcommand{\apx}{\texttt{LinOpt}}
\newcommand{\wwbar}{\bar{w}}
\newcommand{\thetabar}{\bar{\theta}}
\newcommand{\mubar}{\bar{\mu}}
\newcommand{\dd}{\mathrm{d}}
\newcommand{\nubar}{\bar\nu}
	\newcommand{\inner}[2]{\langle #1,#2\rangle}
\newcommand{\F}{\mathrm{F}}
\newcommand{\psd}{\mathbb{S}^{d\times d}_{+}}
\newcommand{\phih}{\phi}
\newcommand{\ldef}{\vcentcolon=}
\newcommand{\what}{\widehat}
\newcommand{\rhs}{right-hand side\xspace}
\newcommand{\cMbar}{\wb{\cM}}
\newcommand{\cAbar}{\wb{\cA}}
\renewcommand{\ln}{\log}        %
\newcommand{\tfrak}{\mathfrak{t}}
\newcommand{\term}{\mathfrak{t}}
\newcommand{\afrak}{\mathfrak{a}}
\newcommand{\pihat}{\hat{\pi}}
\newcommand{\pistar}{\pi_{\star}}
\let\oldparagraph\paragraph
\renewcommand{\paragraph}[1]{\oldparagraph{#1.}}
\renewcommand{\colon}{:}        %
\newcommand{\reals}{\mathbb{R}}
\renewcommand{\P}{\mathbb{P}}
\newcommand{\E}{\mathbb{E}}
\newcommand{\nn}{\nonumber} 
\newcommand{\ldotst}{%
	\mathinner{{\ldotp}{\ldotp}}%
}
\newcommand{\unifa}{\pi_\texttt{unif}}
\newcommand{\piunif}{\pi_\texttt{unif}}
\newcommand{\unif}{\texttt{unif}}
\newcommand{\stat}{\texttt{stat}}
\newcommand{\olive}{\texttt{OLIVE}}
\renewcommand{\a}{\bm{a}}
\newcommand{\x}{\bm{x}}
\newcommand{\supp}{\mathrm{supp}\,}
\newcommand{\bayes}{\texttt{bayes}}
\newcommand{\Pibar}{\wbar{\Pi}}
\newcommand{\wtilde}[1]{\widetilde{#1}}
\newcommand{\wbar}[1]{\widebar{#1}}
\newcommand{\phihat}{\hat{\phi}}
\newcommand{\Ebar}{\wbar\E}
\newcommand{\Pbar}{\wbar\P}
\newcommand{\dbar}{\bar{d}}
\newcommand{\Mbar}{\wbar{\cM}}
\newcommand{\Abar}{\wbar{\cA}}
\newcommand{\ghat}{\hat{g}}
\newcommand{\Pibarm}{\wbar{\Pi}_{\texttt{M}}}
\newcommand{\spanrl}{\texttt{VoX}\xspace}
\newcommand{\mainalg}{\spanrl}
\newcommand{\psdp}{\texttt{PSDP}\xspace}
\newcommand{\Pim}{\Pi_{\texttt{M}}}
\newcommand{\bpi}{\bm{\pi}}
\newcommand{\mustarh}{\mu^\star_h}
\newcommand{\mustar}{\mu^\star}
\renewcommand{\emptyset}{\varnothing}
\newcommand{\algcommentlight}[1]{\textcolor{blue!70!black}{\transparent{0.5}\footnotesize{\texttt{\textbf{//\hspace{2pt}#1}}}}}
\newcommand{\algcommentbiglight}[1]{\textcolor{blue!70!black}{\transparent{0.5}\footnotesize{\texttt{\textbf{/* #1~*/}}}}}
\newcommand{\trn}{\top}
\newcommand{\bigoh}{O}
\newcommand{\bigoht}{\wt{O}}
\newcommand{\poly}{\mathrm{poly}}
\newcommand{\polylog}{\mathrm{polylog}}
\newcommand{\ee}{\mathbb{E}}
\newcommand{\rr}{\mathbb{R}}
\DeclareMathOperator{\lspan}{span}
\newcommand{\norm}[1]{\|#1 \|}
\newcommand{\inprod}[2]{{#1}^\top {#2} }
\renewcommand{\crefzak}{\cref}
\title{\mbox{Efficient Model-Free Exploration in Low-Rank MDPs}}
\author{Zakaria Mhammedi\\{\small \texttt{mhammedi@mit.edu}}\and    Adam Block\\{\small \texttt{ablock@mit.edu}} \and    Dylan J. Foster\\{\small \texttt{dylanfoster@microsoft.com}} \and Alexander Rakhlin\\{\small \texttt{rakhlin@mit.edu}}
}
\date{}
\begin{document}

	\maketitle

	\begin{abstract}
A major challenge in reinforcement learning is to develop practical,
sample-efficient algorithms for exploration in high-dimensional
domains where generalization and function approximation is
required. \emph{Low-Rank Markov Decision Processes}---where transition
probabilities admit a low-rank factorization based on an unknown
feature embedding---offer a simple, yet expressive framework for RL
with function approximation, but existing algorithms are either (1) 
computationally intractable, or (2) reliant upon restrictive statistical
assumptions such as latent variable structure, access to
  model-based function approximation, or reachability. In this work,
we propose the first provably sample-efficient algorithm for
exploration in Low-Rank MDPs that is both computationally efficient
and model-free, allowing for general function approximation and requiring no additional structural assumptions. Our algorithm, \mainalg, uses the notion of a \emph{barycentric spanner} for the feature embedding as an efficiently computable
basis for exploration, performing efficient barycentric spanner computation by interleaving representation learning and policy optimization. Our analysis---which is appealingly simple and modular---carefully combines several 
techniques, including a new approach to error-tolerant
  barycentric spanner computation and an improved analysis of a certain minimax representation learning 
objective found in prior work.

 	\end{abstract}
	
	\tableofcontents
	\section{Introduction}
	\label{sec:intro}
In reinforcement learning and control, many of the most promising
application domains require the agent to navigate complex,
high-dimensional state and action spaces, where generalization and function approximation
is necessary. The last decade has
witnessed impressive empirical success in domains where
data are abundant \citep{mnih2015human,
  silver2016mastering,kober2013reinforcement,lillicrap2015continuous,li2016deep},
but when data are limited, ensuring efficient exploration in
large domains is a major research question. For
\emph{statistical efficiency}, the foundations have recently begun to
take shape, with a line
of research providing structural conditions that facilitate
sample-efficient exploration, as well as fundamental limits
\citep{russo2013eluder,jiang2017contextual,sun2019model,wang2020provably,du2021bilinear,jin2021bellman,foster2021statistical,foster2023tight}. \emph{Computational
  efficiency}, however, remains a major challenge: outside of simple
settings \citep{azar2017minimax,jin2020provably}, existing algorithms
with provable sample complexity guarantees are computationally
inefficient, and typically require solving intractable non-convex
optimization problems
\cite{jiang2017contextual,dann2018oracle,jin2021bellman,cheng2023improved}. The
prospect of
developing practical algorithms for exploration in
high-dimensional state spaces that are both computationally and
statistically efficient raises three fundamental questions:
\begin{enumerate}[leftmargin=20pt]
    \item What are the right computational primitives for exploration?
    That is, how can one efficiently represent and compute exploratory policies that
    allow the learner
    to explore the state
    space and gather useful data?
  \item How should one leverage function approximation---for
    example, via
    representation learning---to 
    discover such primitives in a computationally and statistically
    efficient fashion?
  \item Given answers to the first two questions, how can one efficiently interleave function approximation and exploration to provide provably efficient algorithms?
  \end{enumerate}

In this paper, we investigate these questions through the \emph{\LRMDP}
model \citep{RendleFS10,YaoSPZ14,agarwal2020flambe}. In a \LRMDP, the state space is large
and potentially continuous, but the transition probabilities admit an
(unknown) low-rank factorization. Concretely, for a finite-horizon
\LRMDP with horizon $H$, the transition densities for layer
$h\in\brk{H}$ satisfy
\begin{align}
  T_h(x_{h+1}\mid{}x_h,a_h) = \muh[h+1](x_{h+1})^{\trn}\phistarh(x_h,a_h),
  \label{eq:low_rank_mdp}
\end{align}
where $\phistarh(\cdot,\cdot)\in\bbR^{d}$ and
$\muh(\cdot)\in\bbR^{d}$ are state-action and next-state
embeddings. The low-rank structure in \eqref{eq:low_rank_mdp}
facilitates tractable exploration: if the embedding $\phistarh$ is known
to the learner, one can efficiently learn a near-optimal policy with sample
complexity polynomial in the feature dimension $d$, and independent of
the size of the state space \citep{jin2020provably}; in this regard,
$\phistarh$ can be thought of as a low-dimensional \emph{representation} that enables
sample-efficient RL. Following
\citet{agarwal2020flambe}, we consider the challenging setting in
which both $\phistarh$ and $\muh$ are \emph{unknown} to the
learner. This formulation generalizes well-known frameworks such as
the \emph{Block MDP} (BMDP) model \citep{du2019latent,misra2019kinematic},
and necessitates the use of \emph{representation
  learning}: the agent must learn an embedding that approximates
$\phistarh$ as it explores the environment, and must use this learned embedding
to drive subsequent exploration.  This form of function approximation allows
for great flexibility, as $\phistarh$ can be an arbitrary, nonlinear
function of the state; in practice, it is common to model $\phistarh$ as a neural net \citep{zhang2022efficient}.

The \LRMDP is perhaps the simplest MDP structure that demands
systematic exploration and nonlinear function approximation while allowing for a continuum of states, yet understanding of
\emph{efficient} algorithm design for this model is surprisingly
limited. Existing algorithms suffer from at least one of the following drawbacks:
\begin{enumerate}[leftmargin=20pt]
\item Computational intractability \citep{jiang2017contextual,jin2021bellman,du2021bilinear,chen2022partially,xie2022role}. 
\item Strong modeling assumptions (e.g., ability to model
  $\muh[h+1](\cdot)$, which facilitates application of model-based
  RL techniques)
  \citep{agarwal2020flambe,uehara2022representation,cheng2023improved};
  in this work, we aim for \emph{model-free} methods that only require
  learning $\phistarh$.
\item Restrictive structural assumptions (e.g., 
  non-negativity or latent variable
  structure for the embeddings in \eqref{eq:low_rank_mdp}) \citep{modi2021model,zhang2022efficient}.
\end{enumerate}
At the root of these limitations is the complex interplay between
exploration and representation learning: %
the agent must learn a high-quality representation to guide
in exploring
the state space, but learning such a representation requires gathering
diverse and informative data, which is difficult to acquire without
having already explored the state space to begin with. Overcoming
this challenge---particularly where computational efficiency is
concerned---requires (1) representation learning procedures that lead to sufficiently expressive
representations for downstream applications, (2) efficient exploration procedures that are
robust to errors in learned representations, and 3) understanding the
interaction between these procedures, which must be interleaved. In
this work, we propose an algorithm that addresses each of these challenges, as detailed below.

\paragraph{Contributions}
We provide the first provably computationally efficient and model-free
algorithm for general Low-Rank MDPs.
Our algorithm, \mainalg{} (``Volumetric Exploration''), uses
the notion of a \emph{barycentric spanner} for the
embedding $\phistarh$ as an efficiently computable
basis for exploration, and combines this with a minimax representation
learning objective \citep{modi2021model,zhang2022efficient}. \mainalg 
interleaves exploration with representation learning in a layer-wise
fashion, learning a new representation at each layer $h$ using exploratory
data gathered at previous layers, then uses this representation to
facilitate computation of a collection of exploratory policies (a
\emph{policy cover}), which act as an approximate barycentric spanner
for the features at layer $h+1$, ensuring good coverage for subsequent
iterations.  $\mainalg$ is simple and modular, and its analysis is
surprisingly compact given the greater generality compared to prior
work
\citep{zhang2022efficient,modi2021model,mhammedi2023representation}. \loose

\mainalg accommodates general-purpose function approximation
to learn the representation $\phistar$ (e.g., neural
nets or other flexible classes), and is efficient whenever a certain minimax
representation learning objective \citep{modi2021model,zhang2022efficient} can be solved efficiently for the
function class of interest. Compared to efficient algorithms from
prior work, \mainalg: (1) is model-free (i.e., only requires access to a function class
$\Phi$ capable of modeling $\phistar$, and does not need to model
$\muh$), and (2) applies to general Low-Rank MDPs, removing
  the need for strong assumptions such as reachability or non-negativity of the feature embeddings
(so-called \emph{latent variable} structure); see
\Cref{tb:resultscomp+}).
As a secondary benefit, the algorithm is reward-free.
Our analysis carefully combines several new techniques, including (1) a reduction from barycentric spanner computation to policy optimization, and (2) a new analysis of a minimax representation learning
objective introduced in \citep{modi2021model,zhang2022efficient},
which leads to faster rates and shows for
the first time that this objective can lead to meaningful guarantees in general Low-Rank
MDPs without latent variable structure.

\paragraph{Organization}
\pref{sec:setting} formally introduces the \LRMDP model and the online
reinforcement learning framework we consider. In 
\cref{sec:main}, we highlight challenges faced
by previous approaches, introduce our main algorithm, \mainalg, and
show how it overcomes these challenges, and then present its main
sample complexity guarantee. We conclude
with discussion in \cref{sec:discussion}.

\paragraph{Comparison to previous versions of the paper} An initial version of this paper, presented at NeurIPS 2023, also used barycentric spanners, but required reachability (the original algorithm was named \texttt{SpanRL}). A later revision, published on arXiv, removed the need for reachability at the cost of a larger sample complexity (albeit still polynomial) by using a slightly different algorithm that relied on a generalized optimal design computation (instead of a barycentric spanner computation) to compute a basis for exploration; this optimal design approach used novel algorithmic techniques and analysis, such as the use of the Frank Wolfe algorithm \citep{frank1956algorithm} for efficient approximate optimal design computation, and may be of independent interest. The current version of the paper reverts back to barycentric spanner computation with a slightly modified objective to remove reachability and enjoy the optimal $O(1/\veps^2)$ sample complexity.

\newcommand{\No}{\xmark}
\newcommand{\Yes}{\cmark}
\begin{table}[tp]
	\caption{Comparison of sample complexity required learn an $\veps$-optimal
		policy.
		$\Phi$ denotes the
		feature class, and $\textcolor{red!70!black}{\Upsilon}$ denotes an
		additional feature class capturing model-based
		function approximation.
		For approaches that require non-negative (latent variable)
		structure, $d_{\texttt{LV}}$ [resp.~$\gamma$] denotes the latent
		variable dimension [resp.~the reachability parameter in the latent representation], and for BMDPs,
		$\abs{\cS}$ denotes the size
		of the latent state space. For \mainalg, $\eta$
		denotes the reachability parameter.
	}
	\label{tb:resultscomp+}
	\renewcommand{\arraystretch}{1.6}
	\fontsize{9}{10}\selectfont
	\centering 
	\begin{tabular}{ccccc}
		\hline
		& Comp. efficient & Model-free & No addt'l assumptions %
		& Sample comp. \\
		\hline
		\makecell{$\olive$ \citep{jiang2017contextual}\tablefootnote{See
				also \citep{jin2021bellman,du2021bilinear,chen2022partially,xie2022role}} } & \No &
		\Yes
		& \Yes & 
		$\frac{d^3A
			H^5 \ln
			|\Phi|}{\veps^2}$\\
		$\texttt{FLAMBE}$ \citep{agarwal2020flambe} & \Yes
		&\No &\Yes{}\tablefootnote{For the
			stated sample complexity,
			\texttt{FLAMBE} requires
			access to a 
			sampling oracle for the
			learner model. Without
			this oracle, the results
			require additional
			latent variable structure
			and a reachability assumption.}
		& $\frac{d^{7}A^9H^{22}\ln (|\Phi|\textcolor{red!70!black}{|\Upsilon|})}{\veps^{10}}$
		\\
		\makecell{$\texttt{Rep-UCB}$
			\citep{uehara2022representation}\\(see also \citep{cheng2023improved})} &
		\Yes & \No & \Yes & $\frac{ d^4A^2 H^5\ln (|\Phi|\textcolor{red!70!black}{|\Upsilon|}) }{\veps^2}$\\
		$\texttt{MOFFLE}$
		\citep{modi2021model}\tablefootnote{We compare to
			the variant of \texttt{MOFFLE} that uses the same
			representation learning objective we consider. Other
			variants have improved sample complexity, but make
			use of stronger oracles.} & \Yes&
		\Yes
		& \No{}\makecell{Non-negative/\\latent
			variable} &
		$\frac{d^{19}_{\texttt{LV}}A^{32}H^{19}\ln
			|\Phi|}{(\veps^6\gamma^3
			\wedge
			\gamma^{11})}$
		\\		
		$\texttt{BRIEE}$ \citep{zhang2022efficient} &
		\Yes
		& \Yes & \No{} Block MDP & $ \frac{\abs{\cS}^8 A^{14} H^9 \ln |\Phi|}{\veps^4}$ \\
		\rowcolor[gray]{0.9}
		\mainalg (this paper) & \textbf{\Yes} &
		\textbf{\Yes}
		&
		\textbf{\Yes}
		& $\frac{ d^{13}A^2H^6 (d + \ln |\Phi|) }{\veps^2}$\\
		\hline
	\end{tabular}
\end{table}

	\section{Problem Setting}
	\label{sec:setting}

\subsection{\LRMDP Model}

We work in an episodic, finite-horizon reinforcement learning framework, where $H\in\bbN$ denotes the horizon. A \emph{\LRMDP} \citep{RendleFS10,YaoSPZ14,agarwal2020flambe} is a tuple $\cM=(\cX,\cA, (\phistarh)_{h\in [H]},(\muh[h])_{h\in[H]},\initd)$ consisting of a \emph{state space} $\cX$, \emph{action space} $\cA$ with $\abs{\cA}=A$, distribution over initial states $\initd \in \Delta(\cX)$, and mappings $\muh:\cX\rightarrow \reals^d$ and $\phistarh: \cX \times \cA \rightarrow \reals^d$.\footnote{We emphasize that neither $\muh[h]$ nor $\phistarh$ is known to the agent, in contrast to the linear MDP setting \citep{YangW19,jin2020provably}.}
Beginning with $\x_1\sim{} \initd$, an episode proceeds in $H$ steps, where for each step $h\in\brk{H}$, the state $\bx_h$ evolves as a function of the agent's action $\ba_h$ via
\begin{align}
  \label{eq:transition_kernel}
  \bx_{h+1}\sim{}T_h(\cdot\mid{}\bx_h,\ba_h),
\end{align}
where $T_h$ is a probability transition kernel, which is assumed to factorize based on $\phistarh$ and $\mustarh$. In detail, we assume that there exists a $\sigma$-finite measure $\nu$ on $\cX$ such that for all $1 \leq h \leq H-1$, and for all $x \in \cX$ and $a \in \cA$, the function $x' \mapsto \muh(x')^\top \phistarh(x, a)$ is a probability density with respect to $\nu$ (i.e. the function is everywhere non-negative and integrates to $1$ under $\nu$). For any $\cX'\subseteq\cX$, the probability that $\bx_{h+1}\in\cX'$ under $\bx_{h+1}\sim{}T_h(\cdot\mid{}x_h,a_h)$ is then assumed to follow the law 
\begin{align}
  \label{eq:transition_factor}
T_h(\cX'\mid{}x_h,a_h) = \int_{\cX'} \muh(x)^\top \phistarh(x_h, a_h) \dd\nu(x).
\end{align}
For notational compactness, we assume (following, e.g., \citet{jiang2017contextual}) that the MDP $\cM$ is \emph{layered} so that $\cX = \cX_1\cup \dots\cup  \cX_H$ for $\cX_i \cap \cX_j=\emptyset$ for all $i\neq j$, where $\cX_h\subseteq \cX$ is the subset of states in $\cX$ that are reachable at layer $h\in[H]$. This can be seen to hold without loss of generality (modulo dependence on $H$), by augmenting the state space to include the layer index.

\begin{remark}[Comparison to previous formulations]
  Our formulation, in which the transition dynamics \eqref{eq:transition_factor} are stated with respect to a base measure $\nu$, are a rigorous generalization of \LRMDP formulations found in previous works \citep{jin2020provably,agarwal2020flambe}, which tend to implicitly assume the state space is countable and avoid rigorously defining integrals. We adopt this more general formulation to emphasize the applicability our results to continuous domains. However, in the special case where state space is countable, choosing $\nu$ as the counting measure yields $T_h(\cX'\mid{}x_h,a_h) = \sum_{x\in\cX'}\muh(x)^\top \phistarh(x_h, a_h)$, which is consistent with prior work.
\end{remark}

\paragraph{Policies and occupancy measures}
We define $\Pim=\crl*{\pi:\cX\to\Delta(\cA)}$ as the set of all randomized, Markovian policies. For a policy $\pi\in\Pim$, we let $\bbP^{\pi}$ denote the law of $(\bx_1,\ba_1),\ldots,(\bx_H,\ba_H)$ under $\ba_h\sim\pi(\bx_h)$, and let $\En^{\pi}$ denote the corresponding expectation. For any $\cX'\subseteq \cX_h$, we let $\P_h^{\pi}[\cX']\coloneqq \P^\pi[\x_h \in \cX']$ denote the marginal law of $\bx_h$ under $\pi$. For $x\in\cX_h$, we define the \emph{occupancy measure} $d^\pi(x) \coloneqq \frac{\dd \P_h^\pi}{\dd \nu}(x)$ as the density of $\bbP^{\pi}_h$ with respect to $\nu$.

\subsection{Online Reinforcement Learning and Reward-Free Exploration}
\label{sec:onlineRL}
We consider a standard \emph{online reinforcement learning} framework where the Low-Rank MDP $\cM$ is unknown, and the learning agent interacts with it in \emph{episodes}, where at each episode the agent executes a policy of the form $\pi:\cX\to\Delta(\cA)$ and observes the resulting trajectory $(\x_1,\a_1),\ldots,(\x_H,\a_H)$.
While the ultimate goal of reinforcement learning is to optimize a policy with respect to a possibly unknown reward function, here we focus on the problem of
\emph{reward-free exploration}, which entails learning a collection of policies that almost optimally ``covers'' the state space, and can be used to efficiently optimize any downstream reward function \citep{du2019latent,misra2020kinematic,efroni2021provably,mhammedi2023representation}. To wit, we aim to construct an \emph{policy cover}, a collection of policies that can reach any state with near-optimal probability.
\begin{definition}[Approximate policy cover]
	\label{def:polcover101}
        For $\alpha,\veps\in(0,1]$, a subset $\Psi \subseteq \Pim$ is an $(\alpha,\veps)$-policy cover for layer $h$ if
	\begin{align}
          \label{eq:polcover101}
          \max_{\pi \in \Psi} d^{\pi}(x)\geq  \alpha \cdot \max_{\pi' \in \Pim} d^{\pi'}(x) \quad \text{for all $x\in \cX_{h}$ such that} \quad  \max_{\pi'\in \Pi} d^{\pi'}(x)\geq \veps \cdot \|\muh[h](x)\|.
	\end{align}
\end{definition}
Informally, an $(\alpha,\veps)$-policy cover $\Psi$ has the property that for every state $x\in\cX$ that is reachable with probability at least $\veps\cdot \|\muh[h](x)\|$, there exists a policy in $\Psi$ that reaches it with probability at least $\alpha\cdot\veps \cdot \|\muh[h](x)\|$. We show (\cref{sec:reward_based}) that given access to such a policy cover with $\alpha =\poly(\veps, d^{-1} ,A^{-1})$, it is possible to optimize any downstream reward function to $\bigoh(\veps)$ precision with polynomial sample complexity.

\begin{remark}
\pref{def:polcover101} generalizes the notion of approximate policy cover used by \citet{mhammedi2023representation} for the Block MDP setting; as in that work, the definition allows one to sacrifice states for which the maximum occupancy is small, which is necessary in the absence of reachability-style assumptions \citep{misra2019kinematic,modi2021model,agarwal2022}. Compared to \citet{mhammedi2023representation}, we replace the Block MDP condition $\max_{\pi \in \Pim} d^{\pi}(x)  \geq \veps$ by $\max_{\pi \in \Pim} d^{\pi}(x)  \geq \veps\cdot \|\muh[h](x)\|$. As our analysis shows, the latter condition turns out to be better suited to the $\ell_2$ geometry of the \LRMDP model, and is sufficient for the purpose of optimizing downstream reward functions up to $O(\veps)$ precision (\cref{sec:reward_based}). %
\end{remark}
In the analysis, it will be convenient to slightly generalize \cref{def:polcover101}.
\begin{definition}
	\label{def:randcover}
	For $\alpha, \veps\in (0,1]$, a distribution $P\in \Delta(\Pim)$ is an $(\alpha,\veps)$-randomized policy cover for layer $h$ if 
	\begin{align}
		\E_{\pi \sim P}[d^{\pi}(x)] \geq \alpha \cdot \max_{\pi'\in \Pim} d^{\pi'}(x) \quad \text{for all $x\in \cX_h$ such that  } \max_{\pi'\in \Pim} d^{\pi'}(x)\geq \veps \cdot \|\mu_h^\star(x)\|. 
	\end{align}	
\end{definition}
\begin{remark}
	\label{rem:backandforth}
	We note that if $P$ is an $(\alpha,\veps)$-randomized policy cover for layer $h$ (\cref{def:randcover}) and has finite support $\Psi = \supp P$, then $\Psi$ is an $(\alpha, \veps)$-policy cover for layer $h$ according to \cref{def:polcover101}. Conversely, if $\Psi$ is an $(\alpha,\veps)$-policy cover for layer $h$, then $\unif(\Psi)$ is a randomized policy cover $(\alpha/|\Psi|, \veps)$-policy cover for layer $h$ according to \cref{def:randcover}.
\end{remark}

      \paragraph{Function approximation and desiderata}
      We do not assume that the true features $(\phistarh)_{h\in[H]}$ or the mappings $(\muh[h])_{h\in[H]}$ are known to the learner.
To provide sample-efficient learning guarantees we make use of function approximation as in prior work \citep{AgarwalKKS20,modi2021model}, and assume access to a \emph{feature class} $\Phi\subseteq\{ \phi : \cX\times \cA\to\reals^d\}$ that contains $\phistarh$, for $h\in[H-1]$.
\begin{assumption}[Realizability]
	\label{assum:real}
	The feature class $\Phi\subseteq\{ \phi : \cX\times \cA\to\reals^d\}$ has $\phistarh\in \Phi$ for all {$h\in[H]$}.  Moreover, for all $\phi \in \Phi$, $x \in \cX$, and $a \in \cA$, it holds that $\norm{\phi(x, a)} \leq 1$.
\end{assumption}
The class $\Phi$ may consist of linear functions, neural networks, or other standard models depending on the application, and reflects the learner's prior knowledge of the underlying MDP. We assume that $\Phi$ is finite to simplify presentation, but extension to infinite classes is straightforward, as our results only invoke finiteness through standard uniform convergence arguments.
Note that unlike model-based approaches \citep{agarwal2020flambe,uehara2022representation,cheng2023improved,agarwal2020model}, we do not assume access to a class capable of realizing the features $\mustarh$, and our algorithm does not attempt to learn these features; this is why we distinguish our results as \emph{model-free}.\loose

Beyond realizability, we assume (following \cite{agarwal2020flambe,modi2021model}) for normalization that, for all $h\in[H]$ and $(x,a)\in \cX_h\times \cA$, $\nrm*{\phistar_h(x,a)}\leq{}1$, and that for all $g:\cX_h\to\brk{0,1}$,
  \begin{align}
    \label{eq:normalization}
\nrm*{\int_{\cX_h} \muh[h](x)g(x) \dd\nu(x)} \leq \sqrt{d}.
\end{align}
For $\veps\in(0,1)$, our goal is to learn an $(\alpha,\veps)$-policy cover with $\alpha= \poly(\veps,d^{-1},A^{-1})$using  \[
\poly(d,A,H,\log\abs{\Phi},\veps^{-1})
\] episodes of interaction. 
This guarantee scales with the dimension $d$ of the feature map and the complexity $\log\abs{\Phi}$ of the feature class but, critically, does not depend on the size of the state space $\cX$; note that by \cite{cheng2023improved}, dependence on both $H$ and $A=\abs{\cA}$ is necessary when $\phistar$ is unknown. Given such a guarantee, we show in \Cref{sec:reward_based} that it is possible to optimize any downstream reward function to error $\veps$ with polynomial sample complexity.\loose

\paragraph{Additional preliminaries}
For any $m,n \in\mathbb{N}$, we denote by $[m\ldotst{}n]$ the integer interval $\{m,\dots, n\}$. We also let $[n]\coloneqq [1\ldotst{}n]$. For any sequence of objects $o_1, o_2,\dots$, we define $o_{m:n}\coloneqq (o_{i})_{i\in[m \ldotst n]}$. 
A \emph{partial policy} is a policy defined over a contiguous subset of layers $\brk{\ell\ldotst{}r}\subseteq\brk{H}$. We denote by $\Pim^{\ell:r} \coloneqq \left\{\pi \colon \bigcup_{h=\ell}^r  \cX_h \rightarrow \Delta(\cA)\right\}$ the set of all partial policies over layers $\ell$ to $r$; note that $\Pim \equiv \Pim^{1:H}$. For a policy $\pi\in\Pim^{\ell:r}$ and $h\in\brk{\ell\ldotst{}r}$, $\pi(x_h)$ denotes the action distribution for the policy at layer $h$ when $x_h\in\cX_h$ is the current state. For $1\leq t\leq h\leq H$ and any pair of partial policies $\pi \in \Pim^{1:t-1}, \pi'\in \Pim^{t:h}$, we define $\pi \circ_t \pi'\in\Pim^{1:h}$ as the partial policy given by $(\pi \circ_t \pi')(x_{\ell}) = \pi(x_{\ell})$ for all $\ell<t$ and $(\pi \circ_t \pi')(x_{\ell}) = \pi'(x_{\ell})$ for all $\ell \in [t\ldotst h]$. We define $\pi \circ_t \pi'$ in the same fashion for $\pi \in \Pim^{1:\ell}$ for $\ell\geq t$.

We use the $\x_h\sim \pi$ as shorthand to indicate that $\bx_h$ is drawn from the law $\bbP^{\pi}$, and likewise for $(\x_h,\ba_h)\sim \pi$ and so on. For a set of partial policies $\Psi \coloneqq \{\pi^{(i)}\colon i \in [N]\}$, we define $\unif(\Psi)$ as the random partial policy obtained by sampling $\bi\sim \unif([N])$ and playing $\pi^{(\bi)}$. We define $\unifa\in\Pim$ as the random policy that selects actions in $\cA$ uniformly at random at each layer.
We use $\nrm*{\cdot}$ to denote the Euclidean norm, $\nrm*{\cdot}_\infty$ to denote the supremum norm on functions, and let $\cB(r)\subseteq\bbR^{d}$ denote the Euclidean ball of radius $r$. We let $\cB_{\F}(r)$ be the Frobenius ball of radius $r>0$ in $\reals^{d\times d}$. We denote by $\psd$ the set of positive semi-definite matrices in $\reals^{d\times d}$, and by ``$\preceq$'' the corresponding partial order. For a vector $v\in \reals^d$, we denote by $v[i]$ its $i$th coordinate. %
We refer to a scalar $c>0$ as an \emph{absolute constant} to indicate that it is independent of all problem parameters and use $\bigoht(\cdot)$ to denote a bound up to factors polylogarithmic in parameters appearing in the expression.

	\section{\mainalg: Algorithm and Main Results}
	\label{sec:main}

In this section, we present the \mainalg algorithm. We begin by describing 
challenges in deriving efficient, model-free algorithms using existing
approaches (\cref{sec:challenges}). We then formally describe \mainalg (\cref{sec:algorithm}) and build intuition as to
how it is able to overcome these challenges, and finally state our main sample
complexity guarantee (\cref{sec:main_theorem}).

\subsection{Challenges and Related Work}
\label{sec:challenges}

Designing algorithms with provable guarantees in the Low-Rank MDP setting is challenging because of the complicated interplay between representation learning and exploration. Indeed, while there are many efficient algorithms for the so-called \emph{linear MDP} setting where the feature maps $(\phistarh)_{h\in[H]}$ are known (removing the need for representation learning) \citep{jin2020provably,zhang2021feel,agarwal2022vo,wang2020reward}, these approaches do not readily generalize to accommodate unknown features. For Low-Rank MDPs, previous algorithms suffer from at least one of the following three drawbacks: (1) the algorithms are computationally inefficient; (2) the algorithms are model-based; or (3) the algorithms place strong assumptions on the MDP that are unlikely to hold in practice. To motivate the {\mainalg} algorithm, we briefly survey these results, highlighting several key challenges in avoiding these pitfalls.

Let us first discuss the issue of computational efficiency. While there are a number of algorithms---all based on the principle of \emph{optimism in the face of uncertainty}---that provide tight sample complexity guarantees for Low-Rank MDPs in reward-based \citep{jiang2017contextual,jin2021bellman,du2021bilinear} and reward-free \citep{chen2022partially,xie2022role} settings, these algorithms involve intractable optimization problems, and cannot be implemented efficiently even when the learner has access to an optimization oracle for the representation class $\Phi$ \citep{dann2018oracle}. This intractability arises because these algorithms implement optimism via a ``global'' approach, in which the algorithm explores at each round by choosing the most optimistic value function in a certain \emph{version space} of candidate value functions; optimizing over this version space is challenging, as it involves satisfying non-convex constraints with a complicated dependence on the learned representation that are coupled globally across layers $h\in\brk{H}$.

To avoid the intractability of global optimism, several works have restricted attention to a simpler \emph{model-based} setting. Here, in addition to assuming that the feature maps $(\phistarh)_{h\in[H]}$ are realizable with respect to $\Phi$, one assumes access to a second feature class $\Upsilon$ capable of modeling the mappings $(\mustarh)_{h\in[H]}$; this facilitates direct estimation of the transition probability kernel $T_h(\cdot\mid{}x,a)$. For the model-based setting, it is possible to efficiently implement certain ``local'' forms of optimism \citep{uehara2022representation,cheng2023improved,zhang2022making}, as well as certain non-optimistic exploration techniques based on policy covers \citep{agarwal2020flambe}. \arxiv{For example, one can estimate features using maximum likelihood, and then apply efficient algorithms for the known-feature setting with the estimated features plugged-in \citep{jin2020provably,zhang2021feel,agarwal2022vo,wang2020reward}; here, a key insight is that model-based estimation leads to strong distribution transfer guarantees for the learned features. As a result, there are now a number of efficient model-based algorithms \citep{agarwal2020flambe,uehara2022representation,cheng2023improved}, some of which have been practically implemented \citep{zhang2022making}.} Unfortunately, model-based realizability is a restrictive assumption, and falls short of the model-free guarantees we aim for in this work; indeed, in general, one cannot hope to estimate the feature map $\muh$ without sample complexity scaling with the number of states.\footnote{For example, in the special case of the Block MDP setting \citep{du2019latent,misra2019kinematic}, model-based realizability entails modeling a certain emission process, which is not required by model-free approaches.} \loose

When one moves from model-based learning to model-free learning, representation learning becomes substantially more challenging---both for optimistic and non-optimistic approaches. Here, a key challenge is to develop representation learning procedures that are (1) efficient, yet (2) provide meaningful guarantees when the learned features are used downstream for exploration.
To our knowledge, the only proposal for a representation learning procedure satisfying both desiderata comes from the work of \citet{modi2021model}, who introduced a promising ``minimax'' representation learning objective (described in detail in the sequel; cf. \cref{alg:newreplearn}), which \citet{zhang2022efficient} subsequently showed to have encouraging empirical performance. However, to provide guarantees for this objective, both works place substantial additional restrictions on the low-rank factorization. In particular, \citet{modi2021model} make the so-called \emph{latent variable} assumption \citep{agarwal2020flambe}, which asserts that $\phistarh$ and $\mustarh$ are non-negative coordinate-wise, and \citet{zhang2022efficient} further restrict to the Block MDP model \citep{du2019latent,misra2019kinematic}. %
Non-negativity is a substantial restriction, as the best non-negative factorization can have exponentially large dimension relative to the best unrestricted factorization \citep{agarwal2020flambe}. Beyond non-negativity, many prior works \citep{du2019latent,misra2019kinematic,modi2021model} require \emph{reachability} assumptions, the weakest of which asserts that there exists $\eta>0$ such that for all $x\in\cX_h$,
  \begin{align}
    \label{eq:reachability_basic}
  \max_{\pi \in \Pim} d^{\pi}(x)\geq \eta \cdot \|\muh[h](x)\|.
\end{align}
These works give sample complexity bounds that scale polynomially in $\eta^{-1}$, and do not give any guarantee when $\eta=0$.\footnote{When specialized to tabular MDPs, reachability asserts that for each state $x\in\cX$, there exists a policy that reaches $x$ with probability at least $\eta$.} The source of both restrictions is the problem of how to quantify how close a learned representation $\phi$ is to the ground truth $\phistar$, which depends strongly on the downstream exploration strategy. In what follows, we show that with the right exploration strategy, this challenge can be ameliorated, but prior to our work it was unclear whether the minimax objective could lead to meaningful guarantees in the absence of non-negativity. %

\subsection{The \mainalg Algorithm}
\label{sec:algorithm}

Our algorithm, \mainalg, is presented in \cref{alg:spanRL}. The
algorithm proceeds by building a policy cover layer-by-layer in an
inductive fashion. For each layer $h\geq{}2$, $\mainalg$ uses a policy cover
$\Psi\ind{h}$ built at a previous iteration within a subroutine, $\replearn$
(\cref{alg:newreplearn}; deferred to \cref{sec:replearn}) to produce a
feature map $\phihat\ind{h}$ that approximates $\phistarh$. Using this feature map, the algorithm invokes a second subroutine, \spanner{} (\cref{alg:spanner} in \cref{sec:spanner}) to produce a collection of policies $\pi_1,\ldots,\pi_d$ that act as a \emph{barycentric spanner} for the
feature map, ensuring maximal coverage in a certain sense; given these policies, a new policy cover for layer $h+2$ is formed via $\Psi\ind{h+2}=\{ \pi_i\circ_{h+1}
\pi_\unif : i\in[d] \}$. To invoke the \spanner{}
subroutine, \mainalg makes use of 
additional subroutines for policy optimization
(\psdp{}; \cref{alg:PSDP} in
\cref{sec:nonnegative}) and estimation of certain
vector-valued functionals (\veceval{}; \cref{alg:veceval}
in \cref{sec:spannerSpanRL}). We now describe
each component of the algorithm in detail,
highlighting how they allow us to overcome the
challenges in the prequel.

\paragraph{Barycentric spanners}
At the heart of \mainalg is the notion of a \emph{barycentric spanner}
\citep{awerbuch2008online} as an efficient basis for exploration. We
begin by defining a barycentric spanner for an abstract set of vectors
in $\bbR^d$.
\begin{definition}[\citet{awerbuch2008online}]\label{def:barycentricspanner}
	Given a set $\cW \subset \rr^d$ such that $\lspan(\cW) = \rr^d$, we say that a set $\{ w_1, \dots, w_d \}\subseteq\cW$ is a $(C, \veps)$-approximate barycentric spanner for $\cW$ if for every $w \in \cW$, there exist $\beta_1, \dots, \beta_d \in [-C, C]$ such that $\norm{w - \sum_{i = 1}^d \beta_i w_i}\leq \veps$.\footnote{Note that our definition is a slight generalization of \cite[Definition 2.1]{awerbuch2008online}; the latter is recovered with $\veps = 0$.}
\end{definition}
The utility of barycentric spanners for reward-free exploration is
highlighted in the following lemma. 
\begin{lemma}\label{lem:barycentricspannerknownphi}
	If $\Psi\subseteq \Pim$ is a collection of policies such that $\{\ee^\pi\left[
	\phistarh(\x_h, \a_h) \right]\mid \pi \in \Psi \}\subseteq
	\rr^d$ is a $(C,
	\veps)$-approximate barycentric spanner for $\cW_h\coloneqq \{\ee^\pi\left[
	\phistarh(\x_h, \a_h) \right]\mid \pi \in \Pim \}$, then $\Psi$ is an $(\alpha,2\veps)$-policy cover for layer $h+1$ with $\alpha = (2dC)^{-1}$.
\end{lemma}

	\Cref{lem:barycentricspannerknownphi}, proven in \Cref{ssec:pf_baryspanner_knownphi}, shows that to compute a policy
	cover for layer $h+1$, it suffices to find a barycentric spanner for the
	set $\cW_h\coloneqq \{ \ee^\pi\left[
	\phistarh(\x_h, \a_h) \right]\mid \pi \in \Pim \}\subseteq
	\rr^d$. 
	It turns out that building a barycentric spanner for the set $\cW'_h \coloneqq \{\ee^\pi\left[
	\phistarh(\x_h, \a_h)\cdot \mathbb{I}\{\phistarh(\x_h, \a_h)^\top \theta\geq 0 \}  \right]\mid (\pi, \theta) \in \Pim \times \reals^d\}\subseteq
	\rr^d$ instead of $\cW_h$ is also sufficient for computing a
        policy cover. Going forward, we target the set $\cW'_h$
        (henceforth simply $\cW_h$), as
        this turns out to enable an analysis that handles distribution
        shift more effectively, avoiding compounding
        errors.\footnote{\label{foot:note}In the first version of this
          work, we targeted the set $\cW_h$, which necessitated a
          certain reachability assumption. By targeting the new set
          $\cW'_h$, we are able to remove the need for
          reachability. At a high level, this works because computing
          a barycentric spanner for the set $\cW_h'$ can be reduced to
          policy optimization with \emph{non-negative} rewards, while
          $\cW_h$ requires signed rewards. Working with non-negative
          rewards enables us to generalize the extended MDP analysis from \citet{mhammedi2023representation} (which relies crucially on non-negative rewards) to the Low-Rank MDP setting.}

Of course, even if $\phistarh$ is known, \cref{lem:barycentricspannerknownphi} is only useful if we
can compute a spanner without explicitly enumerating over the set
$\Pim$, since our goal is to develop an \emph{efficient} algorithm. In what follows, we will show:\footnote{While barycentric spanners have been used in a number of recent
	works on sample-efficient RL
	\citep{golowich2022learning,huang2023reinforcement}, the
	motivation for their use within our algorithm and
	analysis are quite different; see \cref{sec:additional_related}.
	}\loose
\begin{enumerate}[leftmargin=20pt]
	\item Using, \spanner{}, a novel adaptation of the classical
          spanner computation algorithm of
	\citet{awerbuch2008online}, it holds that for any $\phi\in\Phi$,
	spanner computation for the set $\{\ee^\pi\left[
	\phistarh(\x_h, \a_h)\cdot \mathbb{I}\{\phistarh(\x_h, \a_h)^\top \theta\geq 0 \}  \right]\mid (\pi, \theta) \in \Pim \times \reals^d\}$ can be performed efficiently whenever, for any $\theta\in\cB(1)$, one can (approximately) solve linear optimization problems of the form
	\begin{align}
		\label{eq:linopt_policy}
		\argmax_{\pi\in\Pim}\ee^\pi\left[
		\phistarh(\x_h, \a_h)\cdot \mathbb{I}\{\phistarh(\x_h, \a_h)^\top \theta\geq 0 \}\right].
	\end{align}
	\item Given access to policy covers $\Psi\ind{1:h}$ for layers $1$ to $h$, one can efficiently solve the optimization problem in \eqref{eq:linopt_policy} by
	appealing to the \psdp algorithm for policy
	optimization (\cref{alg:PSDP}).
\end{enumerate}
To handle the fact that $\phistarh$ is unknown, \cref{alg:spanRL} computes policies $\pi_{1:d}$ that induce a barycentric spanner for the set $\{ \ee^\pi[\phihat\ind{h}(\x_h, \a_h) \cdot \mathbb{I}\{\phihat\ind{h}(\x_h, \a_h)^\top \theta\geq 0 \}]\mid (\pi,\theta) \in \Pim \times \reals^d\}$, where
$\phihat\ind{h}\in \Phi$ is a learned feature map. In what follows, we
first give a detailed explanation of the two points above, before showing how
to complete the argument by learning a feature map through representation learning.

\begin{algorithm}[h]
	\caption{\mainalg: Volumetric Exploration and Representation Learning via Barycentric Spanner}
	\label{alg:spanRL}
	\begin{algorithmic}[1]\onehalfspacing
		\Require Feature class $\Phi$ and parameters $\veps,\cfrak>0$ and $\delta\in(0,1)$.
\State Set $\Psi\ind{1}=\emptyset$, $\Psi\ind{2}=\{\pi_{\unif}\}$.
\State Set $\eta = \veps/(4 H d^{3/2})$, $n_{\replearn}= {\cfrak}{}  \cdot \eta^{-2} A^2 d^8\ln(|\Phi| /\delta)$ and $n_{\veceval}={\cfrak}{} \cdot  \eta^{-2} d^5 { \ln (1/\delta)}$.
\State Set $n_{\psdp} =   { \cfrak }{}\cdot  \eta^{-2} A^2 d^8  H^2  \cdot (d +\ln (|\Phi|/\delta))$. 
		\State Define $	\cF \coloneqq 
		\left\{ \left. f\colon x \mapsto \max_{a\in \cA}\theta^\top \phi(x,a)  \,  \right|   \, \theta\in \cB(1), \phi \in \Phi
		\right\}$.\label{line:func}
		  \State Define $\cG = \{g:(x,a)\mapsto \phi(x,a)^\top w \mid \phi \in \Phi , w \in \cB(2\sqrt{d})\}$.
		\For{$h=1,\ldots, H-2$} \label{line:mainiter}
		\Statex~~~~~ \algcommentbiglight{Learn feature representation for layer $h$.}
		\State Set $\phih\ind{h} = \replearn(h, \cF,\Phi,P\ind{h},n_{\replearn})$, with $P\ind{h}=\unif(\Psi\ind{h})$.\label{line:oldreplearn} \hfill\algcommentlight{\cref{alg:newreplearn}.}
		\Statex~~~~~  \algcommentbiglight{Computing an approximate
                  spanner using learned features.}
              \State For $\theta \in \reals^d$ and $(x,a)\in \cX\times \cA$, define \label{line:reward}
              \begin{align} r_{t}(x,a;\theta)\ldef{} \left\{\begin{array}{ll} \phih\ind{h}(x,a)^\top{\theta} \cdot \mathbb{I}\{\phi\ind{h}(x,a)^\top \theta \geq 0\}, &  \text{for }
              t=h, \\ 0, &  \text{otherwise}.
              \end{array}\right. \label{eq:rewards}
              \end{align} 
        \State For each $t\in[h]$, set $\cG_{t}=\cG$ and $P\ind{t}=\unif(\Psi\ind{t})$.
                \State For $\theta\in\reals^d$, define
                $\apx(\theta)=(\psdp(h, r_{1:h}(\cdot,\cdot;\theta), \cG_{1:h},P\ind{1:h},
                n_{\psdp}), \theta) \in \Pim \times \reals^d$. \label{line:psdp}\label{line:linopt} \hfill\algcommentlight{\cref{alg:PSDP}.}
                   \State For $\theta \in \reals^d$ and $\pi\in\Pim$, define
                $\est(\pi,\theta)=\veceval(h,\phih\ind{h} \cdot\mathbb{I} \{ \theta^\top \phih\ind{h}\geq 0\}, \pi,
                n_{\veceval})$. \label{line:est} \hfill\algcommentlight{\cref{alg:veceval}.}
                \State \label{line:spanner}Set $((\pi_{1}, \theta_{1}), \dots, (\pi_d,\theta_d)) =  \spanner(\apx(\cdot), \est(\cdot), 2,  \frac{\eta}{36 d^{5/2}})$. \hfill\algcommentlight{\cref{alg:spanner}.} 
		\State Set $\Psi\ind{h+2}=\{ \pi_i\circ_{h+1}
                \pi_\unif : i\in[d] \}$. \label{line:cover}
		\EndFor
		\State \textbf{Return:} Policy cover
		$\Psi\ind{1:H}$. 
	\end{algorithmic}
\end{algorithm}

\paragraph{Barycentric spanner computation via approximate linear optimization}

To describe spanner computation in \mainalg, we take a brief detour and consider an abstract approach to
barycentric spanner computation, which generalizes our problem. Suppose that we wish
to compute a spanner for an implicitly specified set
$\cW=\crl*{w^z}_{z\in\cZ}\subseteq\bbR^{d}$ indexed by an abstract set
$\cZ$. The set $\cZ$ (which will be set to $\Pim$ when we return to
RL) may be exponentially large and cannot be efficiently enumerated. In addition, given $z\in\cZ$, we
cannot explicitly compute $w^{z}$, and have to settle for a noisy approximation.

To allow for efficient spanner computation, we assume access to two
oracles for the set $\cW$, a \emph{linear optimization} oracle $\apx:\cB(1)\to\cZ$ and
an \emph{index-to-vector} oracle $\est:\cZ\to\bbR^{d}$. We assume that for some $\veps>0$:
\begin{enumerate}[leftmargin=20pt]
	\item For all $\theta\in\bbR^{d}$ with $\nrm*{\theta}=1$, the output
	$\hat{z}_{\theta}\ldef \apx(\theta)$ satisfies
	$\theta^{\top}w^{\hat{z}_\theta} \geq     \sup_{z\in \cZ}
	\theta^\top w^{z} - \veps$.
\item For all $z\in\cZ$, the output $\hat{w}_z\ldef{}\est(z)$
satisfies
	$\|\hat{w}_z - w^{z}\| \leq \veps$.
	\end{enumerate}
The \spanner{} algorithm
(\cref{alg:spanner}) computes a $(C,\veps)$-approximate spanner for
$\cW$ using
$\bigoh(d\log(d/\veps))$ total calls to $\apx$ and $\est$. \spanner{} is an error-tolerant variant of the classical spanner computation algorithm of
\citet{awerbuch2008online}, which was originally introduced and
analyzed for
spanner computation with an \emph{exact} linear optimization
oracle. Tolerance to approximation errors in the linear optimization oracle
is critical for our application to RL, where additive
errors will arise from sampling trajectories, as well as estimating
the feature maps $(\phistarh)_{h\in[H]}$. \spanner{} achieves error tolerance by
perturbing the vectors returned by $\apx(\theta)$ in the direction of
$\theta$, which amounts to running the classical algorithm on an $\veps$-fattening of $\cW$, and is necessary in order to ensure that the approximation error of $\apx$ does not swamp the signal in directions $\theta$ in which $\cW$ is too ``skinny.''  This technique may be of independent interest; see \cref{sec:spanner}
for additional details and formal guarantees.

\arxiv{
\begin{algorithm}[htp]
  \caption{\spanner: Barycentric Spanner via Approximate
    Linear Optimization}
	\label{alg:spanner}
	\begin{algorithmic}[1]\onehalfspacing
		\Require~
		\begin{itemize}[leftmargin=*]

                        \item Approximate linear optimization subroutine
                          $\apx:\reals^d\to \cZ$. \hfill \algcommentbiglight{See \cref{sec:algorithm}}
                        \item Approximate index-to-vector subroutine
                        $\est:\cZ\rightarrow  \reals^d$.
                              \item Parameters $C,\veps>0$.
		\end{itemize}
		\State Set $W =(w_1,\dots,w_d)= (e_1,\dots,e_d)$.
		\For{$i=1,\dots, d$} \label{line:firstfor}
		\State Set $\theta_i =(\det(e_j,W_{-i}))_{j\in[d]}\in
                \reals^d$. \hfill\algcommentlight{$W_{-i}$ is defined to be
                  $W$ without the $i$th column}
		\State Set $z_i^+ = \apx(\theta_i/\|\theta_i\|)$ and $w_i^+= \est(z_i^+)$. 
		\State Set $z_i^- = \apx(-\theta_i/\|\theta_i\|)$ and $w_i^-= \est(z_i^-)$.
		\If{$\theta_i^\top w^+_i\geq - \theta_i^\top w^-_i$} \label{line:if} 
		\State Set $\wtilde w_i= w^+_i$, $z_i = z_i^+$, and $w_i = \wtilde w_i + \veps \theta_i/\|\theta_i\|$.
		\Else
		\State Set $\wtilde w_i= w^-_i$, $z_i = z_i^-$, and $w_i = \wtilde w_i - \veps \theta_i/\|\theta_i\|$.
		\EndIf \label{line:endif}
		\EndFor
		\For{$n=1,2,\dots$} \label{line:for}
		\State Set $i=1$.
		\While{$i\leq d$} 
		\State Set $\theta_i =(\det(e_j,W_{-i}))_{j\in[d]}\in \reals^d$.
		\State Set $z_i^+ = \apx(\theta_i/\|\theta_i\|)$ and $w_i^+= \est(z_i^+)$. 
\State Set $z_i^- = \apx(-\theta_i/\|\theta_i\|)$ and $w_i^-= \est(z_i^-)$.
		\If{$\theta_i^\top w_i^+ +\veps \cdot \|\theta_i\|  \geq C \cdot |\det(w_i, W_{-i})|$}
		\State Set $\wtilde w_i = w_i^+$, $z_i = z_i^+$, and $w_i = \wtilde w_i  + \veps \cdot \theta_i/\|\theta_i\|$.
		\State \textbf{break}
		\ElsIf{$-\theta_i^\top w_i^- +\veps \cdot \|\theta_i\|  \geq C \cdot |\det(w_i, W_{-i})|$}
		\State  Set $\wtilde w_i = w_i^-$, $z_i = z_i^-$, and $w_i = \wtilde w_i  - \veps \cdot \theta_i/\|\theta_i\|$.
		\State \textbf{break}
		\EndIf
		\State Set $i = i+1$.
		\EndWhile
		\If{$i=d+1$}
		\State  \textbf{break}
		\EndIf 
		\EndFor
		\State \textbf{Return:} $(z_1, \dots, z_d)$.
	\end{algorithmic}
\end{algorithm}
 }

\paragraph{Representation learning}

Ideally, we would
like to use \spanner{} to construct a barycentric spanner for the set $\{
\ee^\pi[\phistar_h(\x_h, \a_h)] \cdot \mathbb{I}\{\phistar_h(\x_h,\a_h)^\top \theta  \geq 0\}  \mid  (\pi,\theta) \in \Pim\times \reals^d\}$ with $\cZ=\Pim$.
Because we do not have access to $\phistar_h$, we instead apply \spanner{} with $\cW \coloneqq \{ \ee^\pi[\phihat\ind{h}(\x_h, \a_h) \cdot \mathbb{I}\{\phihat\ind{h}(\x_h, \a_h)^\top \theta\geq 0 \}]\mid (\pi,\theta) \in \Pim \times \reals^d\}$,
where $\phihat\ind{h}$ is a learned
representation. We now describe how the feature map
$\phihat\ind{h}$ is learned, then show how to use these learned features to
efficiently implement the oracles $\apx(\cdot)$ and $\est(\cdot)$.

To learn a representation for layer $h$, we use the $\replearn$ algorithm (\cref{alg:newreplearn}),
which was originally introduced in
\citet{modi2021model,zhang2022efficient}. The algorithm gathers a
collection of triples $(\x_h, \a_h, \x_{h+1})$ by rolling in to
$\bx_h$ with a policy sampled
uniformly from the policy cover $\Psi\ind{h}$ and selecting $\ba_h$
uniformly at random. Using this dataset, the algorithm
solves a sequence of adversarial training sub-problems
(\cref{line:replearn} of \cref{alg:newreplearn}) which involve
the feature class $\Phi$ and an auxiliary discriminator class $\cF:
\cX \to \rr$. As we discuss in detail in the sequel, these
sub-problems, described in \eqref{eq:adversarial},
are amenable to standard gradient-based training methods. The
sub-problems are designed to approximate the following ``idealized''
max-min-max representation learning objective:
\begin{align}
\label{eq:replearn}
\phihat\ind{h} \in \argmin_{\phi \in \Phi} \sup_{f \in \cF} \inf_{w } \ee^{\unif(\Psi\ind{h})\circ_h\piunif}\left[\left(
\inprod{\phi(\x_h, \a_h)}{w} - \En\brk*{f(\x_{h+1})\mid\bx_h,\ba_h}
\right)^2 %
\right].
\end{align}
The intuition for
this objective comes from the fact that in a Low-Rank MDP, for any function $f:\cX\to\bbR$, the quantity $\ee[ f(\x_{h+1})
\mid{} \x_h=x, \a_h=a ]$ is linear in
$\phistar_h(x, a)$. Thus, if $\cF$ is sufficiently expressive, we may hope that $\phihat\ind{h}$
and $\phistar$ are close. We adopt the simple discriminator class \iftoggle{neurips}{$\cF = \{ \left. x \mapsto \max_{a\in \cA}\inprod{\theta}{\phi(x, a)} \ \right| \  \theta \in \cB(1), \, \phi \in \Phi \}$.}
{\begin{align}
	\cF = \left\{ \left. x \mapsto \max_{a\in \cA}\inprod{\theta}{\phi(x, a)} \ \right| \  \theta \in \cB(1), \, \phi \in \Phi \right\}.
	\end{align}}
	We show that solving
	\eqref{eq:replearn} with this choice for $\cF$, which is simpler than that in \citep{modi2021model,zhang2022efficient}, yields an approximation
	guarantee for $\phihat\ind{h}$ that is suitable for downstream use in spanner computation
	for general Low-Rank MDPs.

	\begin{remark}[Improved analysis of \replearn]
\label{rem:replearn}
To facilitate an analysis of \mainalg{} that does not require reachability assumptions, we use
slightly different parameter values for \replearn{} than in
\citet{modi2021model,zhang2022efficient}, and provide a tighter sample
complexity bound (\cref{thm:newreplearn}) which may be of independent interest.

In more detail, prior work shows that the $\replearn$ algorithm solves
a variant of \eqref{eq:replearn} with
$w\in\cB(d^{1/2}\cdot\poly(\veps^{-1}))$, where $\veps>0$ is the desired
bound on mean-squared error. Due to the polynomial dependence on
$\veps^{-1}$, such a result would lead to vacuous
guarantees when invoked within our analysis of \mainalg. Our improved
analysis of $\replearn$, which is based on a determinantal potential
argument, shows that $w\in\cB(\poly(d))$ (independent of $\veps^{-1}$)
suffices. A secondary benefit of our improved bound is a faster rate with
respect to the number of trajectories.
\end{remark}

\paragraph{Putting everything together} Having learned $\phihat\ind{h}$ using $\replearn$, in \mainalg we apply \spanner{} with $\cW \coloneqq \{ \ee^\pi[\phihat\ind{h}(\x_h, \a_h) \cdot \mathbb{I}\{\phihat\ind{h}(\x_h, \a_h)^\top \theta\geq 0 \}]\mid (\pi,\theta) \in \Pim \times \reals^d\}$,
$\cZ=\Pim\times \reals^d$, and $C = 2$; that is, we plug-in the learned
representation $\phihat\ind{h}$ for the true representation
$\phistarh$.\footnote{Though the policies produced by the algorithm may not necessarily induce a spanner for $\cW_h= \{ \ee^\pi[
\phistarh(\x_h, \a_h) \mathbb{I}\{\phistarh(\x_h,\a_h)^\top \theta\geq 0 \} ]\mid (\pi,\theta) \in \Pim\times \reals^d \}$ (this would require ``point-wise'' representation learning guarantees, which we do not have), our analysis shows that they still suffice to build a policy cover for layer $h+2$.}
With this choice, implementing $\apx$ essentially entails (approximately) solving \[\argmax_{\pi \in \Pim}\ee^\pi[ {\theta}^\top\phihat\ind{h}(\x_h, \a_h) \cdot \mathbb{I}\{\theta^\top\phihat\ind{h}(\x_h, \a_h)\geq 0\}]\] for a given $\theta\in\cB(1)$, and implementing the $\est$ oracle entails estimating $\ee^\pi[\phihat\ind{h}(\x_h, \a_h) \cdot \mathbb{I}\{\theta^\top\phihat\ind{h}(\x_h, \a_h)\geq 0\}]$ for a given $(\pi,\theta)\in\Pim\times \reals^d$.  
We instantiate $\est(\pi)$ as the Monte Carlo algorithm $\veceval$
(\Cref{alg:veceval}), which simply samples trajectories according to $\pi$ and returns the sample average of $\phihat\ind{h}(\x_h, \a_h) \cdot \mathbb{I}\{\theta^\top\phihat\ind{h}(\x_h, \a_h)\geq 0\}$.
To
implement $\apx(\theta)$, we appeal to \psdp
(\cref{alg:PSDP}). \psdp, given an arbitrary reward function $r_{1:h}:\cX
\times \cA \rightarrow \reals$ and a function class $\cG\subseteq \{g:
\cX\times \cA\rightarrow \reals\}$ capable of realizing all possible value
functions induced by these rewards, can use the policy covers
$\Psi\ind{1:h}$ to efficiently compute a policy $\pihat = \psdp(h,r_{1:h}, \cG,
\unif(\Psi\ind{1}), \dots, \unif(\Psi\ind{h}), n)$ that approximately solves \iftoggle{neurips}{$\argmax_{\pi \in \Pim} \E^{\pi}[\sum_{t=1}^{h} r_t(\x_t,\a_t)]$,}{
\begin{align}
	\argmax_{\pi \in \Pim} \E^{\pi}\left[\sum_{t=1}^{h} r_t(\x_t,\a_t)\right],
\end{align}
}
and does so using polynomially many episodes; see \cref{sec:nonnegative} for
details and formal guarantees.\footnote{This is the main
place where the analysis uses the inductive hypothesis
that $\Psi\ind{1:h}$ are policy covers.} Thus, implementing $\apx(\theta)$ is as
simple as invoking \psdp with the rewards 
\begin{align} 
r_{t}(x,a;\theta)\ldef{} \left\{\begin{array}{ll} \phihat\ind{h}(x,a)^\top\theta \cdot \mathbb{I}\{\phihat\ind{h}(x,a)^\top\theta\geq 0\}, &  \text{for }
	t=h, \\ 0, &  \text{otherwise}.
\end{array}\right. 
\end{align}
With this, we have all the
ingredients needed for spanner computation, and the algorithm is complete.

\subsection{Main Guarantee for \mainalg}
\label{sec:main_theorem}
The following result is the main sample complexity guarantee for \mainalg{} (\cref{alg:spanRL}).
\begin{theorem}[Main theorem for \mainalg]
\label{thm:spanrlmain}
Let $\delta, \veps \in(0,1)$ be given. Given a sufficiently large $\cfrak = \polylog(A,H,d,\ln
(|\Phi|/\delta))$, the policies $\Psi\ind{1:H}$
produced by $\mainalg(\Phi, \veps, \cfrak, \delta)$ are a
$(\frac{1}{8 Ad},\veps)$-policy cover with probability at least
$1-\delta$. 
The total number of episodes used by \mainalg is at most:
\begin{align}
	\label{eq:spanrl_main}
	\bigoht\left(    { A^2 d^{13} H^6 (d + \ln (|\Phi|/\delta))} \cdot 1/\veps^2\right).
\end{align}
\end{theorem}
The sample complexity bound in \cref{thm:spanrlmain} scales with $1/\veps^2$, which is optimal for reward-based RL in general \citep{jin2018q}.
\cref{thm:spanrlmain} is the first provable, model-free sample complexity
guarantee for general Low-Rank MDPs that is attained by an
efficient algorithm. Prior to our work, all efficient model-free algorithms required non-negative features (latent
variable structure) or
reachability \citep{modi2021model,zhang2022efficient}.\footnote{In the
  first version of this work, we also required reachability, albeit a
  weaker version compared to that used in
  \citep{modi2021model,zhang2022efficient}. The current version removes the need for reachability by targeting a slightly different set for barycentric spanner computation; see \cref{foot:note}.}
	While our guarantee is polynomial in
	all relevant problem parameters (and optimal in $1/\veps$), improving the dependence further
	(e.g., to match that of the best known inefficient algorithms) is
	an interesting direction for future research, as is removing the reachability assumption.

	\paragraph{Application to reward-based RL}
	By using the policy cover produced by \mainalg within \psdp (\cref{alg:PSDP}),
	we can optimize any downstream reward function to error $\veps$ using
	$\poly(d,A,H,\log\abs{\Phi})\cdot 1/\veps^2$ episodes. See
	\cref{sec:reward_based} for details.

	\paragraph{Efficiency and practicality}  We observe that $\mainalg$ is
	simple and practical. Defining $\cL_{\cD}(\phi, w, f) \coloneqq  \sum_{(x, a,
x')\in\cD} (\phi(x,a)^\top w  - f(x'))^2  + \lambda \|w\|^2$, where
$\cD$ is a dataset consisting of $(\bx_h,\ba_h,\br_h,\bx_{h+1})$
tuples, the algorithm is provably efficient whenever the adversarial objective
\begin{align}
\label{eq:adversarial}
f\ind{t} \in \argmax_{f\in \cF} \max_{\tilde \phi\in \Phi} \left\{ \min_{w} \cL_{\cD}(\phi\ind{t}, w, f)  - \min_{\tilde w} \cL_{\cD}(\tilde \phi, \tilde w, f)  \right\},
\end{align}
in \cref{line:replearn} of \replearn{} (\cref{alg:newreplearn}),
can be implemented efficiently (note that by the definition of
$\cL_{\cD}$, the ``inner'' minima over $w$ and
$\tilde w$ in \eqref{eq:adversarial} can be solved in closed
form). This objective was also assumed to be efficiently solvable in
\citep{modi2021model,zhang2022efficient} and was empirically shown to
be practical in \citep{zhang2022efficient}; note that the objective is
amenable to standard gradient-based optimization techniques, and that
$\cF$ can be over-parameterized. While a detailed
experimental evaluation is outside of the scope of this paper, we are
optimistic about the empirical performance of the algorithm in light
of the encouraging results based on the same objective in
\citet{zhang2022efficient}

Outside of representation learning, the only overhead in \mainalg is
the $\spanner$ subroutine, which has polynomial runtime. Indeed,
$\spanner$ requires only polynomially many calls to the
linear optimization oracle, instantiated as $\psdp$, which is 
efficient whenever standard least-squares regression problems based on
the class $\Phi$ can be solved efficiently, analogous to \cite{misra2020kinematic,mhammedi2023representation}.

\paragraph{Analysis and proof techniques}

The proof of \cref{thm:spanrlmain}, which is given in
\cref{sec:analysis}, is appealing in its simplicity and
modularity. The crux of the proof is to show that the
representation learning guarantee in \eqref{eq:replearn} is strong
enough to ensure that the downstream spanner computation in \spanner{}
succeeds. It is straightforward to show that spanner
computation would succeed if we had access to an estimated
representation that $\phihat\ind{h}$ that approximates $\phistarh$
point-wise (i.e., uniformly for all $(x,a)$ pairs), but the key challenge is that the guarantee in
\eqref{eq:replearn} only holds \emph{on average} under the roll-in
distribution $\unif(\Psi\ind{h})$. Prior works that make use of the same representation
learning objective ($\briee$ \citep{zhang2022efficient} and $\moffle$
\citep{modi2021model}) do not make use of spanners; instead, they
appeal to exploration strategies based on elliptic bonuses, addressing the issue of approximation
errors through additional assumptions (non-negativity of the factorization for $\moffle$, and Block MDP structure for $\briee$). 
\arxiv{For example, $\briee$ uses $\phihat$ as a plug-in for $\phistar$ within an optimistic
algorithm tailored to linear MDPs, and uses Block MDP
structure to facilitate a change-of-measure argument
that lifts the average-case approximation guarantee
for $\phihat$ to a uniform guarantee. However, this
type of
argument does not readily extend to general low-rank
MDPs. As such, } perhaps the most important observation in our proof is that
barycentric spanners are robust to the average-case approximation
error guarantee in \eqref{eq:replearn} as-is, without
additional structural assumptions. Intuitively, this
benefit seems to arise from the fact that the
spanner property only concerns the \emph{first
moment} of the feature map $\phistar$, while
algorithms based on elliptic bonuses require
approximation guarantees for the \emph{second
moment}; understanding this issue more
deeply is an interesting question for future work.

		\section{Discussion}
		\label{sec:discussion}

Our work shows for the first time how to achieve efficient, model-free
exploration in general Low-Rank MDPs. On the technical side, our
results leave open a number of interesting technical questions,
including (1) regret (as opposed to PAC) guarantees, and
(2) matching the minimax rate achieved by
inefficient algorithms using an efficient
algorithm.

More broadly, our work highlights the power of non-optimistic
algorithms that explore by building policy covers. In light of this, perhaps the most interesting question
is how to extend our techniques to more general function approximation
settings beyond the Low-Rank MDP model; this will likely entail
replacing the notion of barycentric spanner with a more general form of
exploration basis.

	\newpage
\subsection*{Acknowledgements}
We thank Noah Golowich, Dhruv Rohatgi, and Ayush Sekhari for
several helpful discussions. We also thank Yassir Jedra for a suggestion that lead to an improved final sample complexity. ZM and AR acknowledge support from the ONR through awards N00014-20-1-2336 and N00014-20-1-2394, and ARO through award W911NF-21-1-0328. AB acknowledges support from the National Science Foundation Graduate Research Fellowship under Grant No.~1122374.

	\bibliography{refs.bib}
	
	\newpage

	\appendix

		\section{Additional Related Work}
		\label{sec:additional_related}

In this section, we discuss relevant related work not already covered.

\paragraph{Block MDPs}
A particularly well-studied special case low-rank MDPs is the \emph{Block MDP (BMDP) model} \citet{du2019provably,misra2019kinematic,zhang2022efficient,mhammedi2023representation}. For this setting, \citet{du2019provably,misra2019kinematic} provide algorithms that conduct exploration in a provably oracle-efficient manner under a reachability assumption. This reachability assumption was removed by subsequent work of \citet{zhang2022efficient} (with a suboptimal rate) and \citet{mhammedi2023representation} (with optimal error dependence). These works are tailored to the BMDP model, and it is unclear whether it is possible to extend them to general low-rank MDPs.

\paragraph{Barycentric spanners}
\citet{huang2023reinforcement} consider a variant of the \LRMDP
framework in which we are given a class $\Upsilon$ that realizes the
next-state feature map $\mustar$, but do not have access to a class
$\Phi$ for the feature map $\phistar$, which is unknown. Their
algorithm, like \mainalg, is based on barycentric spanners, though the algorithm
design considerations and analysis are significantly
different. Notably, their algorithm is not computationally efficient,
and their analysis takes advantage of the fact that realizability of
$\mustar$ facilitates estimation of the occupancies $\crl{d^{\pi}(\cdot)}_{\pi\in\Pim}$ in $\ell_1$-error.  Barycentric spanners were also in the work of \citet{golowich2022learning} for reinforcement learning in Partially Observable MDPs (POMDPs).  Their analysis is substantially different from ours, and their algorithm appeals to the barycentric spanner computation approach in \citet{awerbuch2008online} in an off-the-shelf fashion.

	\clearpage

	\section{Organization of the Appendix}
This appendix is organized as follows:
\begin{itemize}
\item In \cref{sec:reach0}, we introduce key analysis tools that we use in the proof of our main result (\cref{thm:spanrlmain}). 
	\item \cref{sec:nonnegative}, \cref{sec:spanner}, and \cref{sec:replearn} contain results we rely on in the proof of \cref{thm:spanrlmain}. In particular,  \cref{sec:nonnegative}, \cref{sec:spanner}, and \cref{sec:replearn} provide generic guarantees for the subroutines \psdp{} (\cref{alg:PSDP}), \spanner{} (\cref{alg:spanner}), and \replearn{} (\cref{alg:newreplearn}) of \mainalg{} (\cref{alg:spanRL}), respectively. 
		\item \cref{sec:analysis} contains the analysis of \cref{thm:spanrlmain}.
	\item In \cref{sec:reward_based}, we show how an approximate policy cover can be used to optimize downstream reward functions.
	\item \cref{sec:helper} contains a set of helper
	results used throughout the analysis.
\end{itemize}

\section{Analysis Tools: Extended Low-Rank MDP and Truncated Policies}
\label{sec:reach0}

In this section, we present two tools, the \emph{extended MDP} and a \emph{truncated policy class}, that will be used throughout the analysis of \mainalg, and facilitate an analysis that does not require reachability assumptions. The definitions we give generalize analogous definitions given in \cite{mhammedi2023representation} for the special case of Block MDPs, though the generalization to the low-rank MDP setting is non-trivial.

\subsection{Extended MDP} 
As in \cite{mhammedi2023representation}, we define the extended MDP $\cMbar$ to be the result of augmenting the true MDP $\cM$ by adding a set of $H$ terminal states $\tfrak_{1:H}$, and a terminal action $\afrak$ with the property that taking $\afrak$ from any state at layer $h\in [H-1]$ leads to $\tfrak_{h+1}$ deterministically, and any action in $\cA\cup \{\afrak\}$ at latent state $\tfrak_h$ transitions to $\tfrak_{h+1}$ deterministically. To express $\cMbar$ as a low-rank MDP, we increase the feature dimension by $1$. First, for any $\phi \in \Phi$, we define the extension
\begin{align}
	\bar \phi(x,a) & = \left\{ \begin{array}{ll}
		[\phi(x,a)^\top, 0]^\top\in \reals^{d+1}, & \forall a\in \cA, \forall x\in \cX,\\
		e_{d+1}\in \reals^{d+1}, & a = \afrak,  \forall x\in \cX, \\
		e_{d+1}\in \reals^{d+1}, & \forall a\in \cA, x \in  \{\tfrak_1,\dots, \tfrak_H\}, 
	\end{array}  \right. \label{eq:extension} \shortintertext{with $\bar{\phi}^{\star}$ denoting the extension of $\phi^{\star}$. We similarly define} \muhb[h](x) &  = \left\{ \begin{array}{ll}
		[\muh[h](x)^\top, 0]^\top\in \reals^{d+1}, & \forall x\in \cX,\\
		e_{d+1}\in \reals^{d+1}, &  x=\tfrak_h,
	\end{array}  \right.  \label{eq:newphi0}
\end{align}
for $h\in[H]$. With these definitions, we formally define $\cMbar=(\cX\cup \{\tfrak_1,\cdots, \tfrak_H\}, \cA\cup\{\afrak\}, \rho, (\muhb[h])_{h\in[H]}, (\bar{\phi}_h^\star)_{h\in[H]})$ as the extended MDP, which one can verify is indeed a low-rank MDP in $d+1$ dimensions.

We let $\Pibarm$ be the set of all randomized Markov policies in $\cMbar$, with the convention that $\pi(\term_h)=\afrak$ for all $\pi\in\Pibarm$ and $h\in [H]$. For any policy $\pi \colon \cX \rightarrow \cA$, we extend it to $\cXbar \coloneqq \cX \cup \{\tfrak_1, \dots, \tfrak_H\}$ by taking $\pi(\tfrak_h)=\afrak$ for all $h\in[H]$. Moving forward, for any $h\in[H]$, we let $\cXbar_h \coloneqq \cX_h \cup \{\tfrak_h\}$, and define $\cAbar=\cA\cup\crl{\afrak}$.

We denote expectations and probability laws for trajectories in $\cMbar$ by $\Ebar$ and $\Pbar$, respectively, and for any $\cX'\subseteq \cXbar_h$, we let $\Pbar_h^{\pi}[\cX']\coloneqq \Pbar^\pi[\x_h \in \cX']$ denote the induced law of $\bx_h$ under a policy $\pi$ in $\cMbar$. Furthermore, for any $x\in\cX_h$, we define the \emph{occupancy} measure $\dbar^\pi(x) \coloneqq \frac{\dd \Pbar_h^\pi}{\dd \bar\nu}(x)$ as the density of $\Pbar^{\pi}_h$ with respect to $\bar\nu= \nu +\sum_{h\in[H]}\mathbb{I}_{\tfrak_h}$. 

We define $\widebar\Phi$ be the set of all extended feature maps $\phibar$ (as in \eqref{eq:newphi0}) for $\phi \in \Phi$. In some proofs, it will be convenient to work with the restriction of the extended feature maps to their first $d$ coordinates; for any $\phi \in \Phi$, we define 
\begin{align}
	\tilde \phi(\cdot,\cdot) \coloneqq 	(\bar \phi(\cdot,\cdot)[1], \dots, \bar \phi(\cdot,\cdot)[d])^\top.
\end{align}

Finally, we the extend the notion of a randomized policy cover (\cref{def:randcover}) to the extended MDP as follows.
\begin{definition}[Relative policy cover]
	\label{def:approxcover}
	For $\alpha\in(0,1], \eta \geq 0$, a distribution $P\in \Delta(\Pim)$ is a $(\alpha, \eta)$-randomized policy cover relative to $\Pi\subseteq \Pibarm$ for layer $h$ in $\Mbar$ if 
	\begin{align}
		\E_{\pi \sim P} [\dbar^{\pi}(x)] \geq \alpha \cdot \max_{\pi'\in \Pi} \dbar^{\pi'}(x),\quad \text{for all $x\in \cX_{h}$ such that} \quad  \max_{\pi'\in \Pi} \dbar^{\pi'}(x)\geq \eta \cdot \|\muhb[h](x)\|.
	\end{align}
\end{definition}

\subsection{Truncated Policy Class}
Next, we introduce the notion of the \emph{truncated policy class}, generalizing \citet{mhammedi2023representation}. We begin with some preliminary definitions.

For any $h \in [H]$, given a collection of policies $\Pi'\subseteq\Pibarm$, we let \begin{align}\cP_{h}(\Pi') \coloneqq  \left\{\tilde \phi^{\star,\pi}_{h} \mid \pi \in \Pi'\right\}, \quad \text{where}\quad \tilde \phi^{\star,\pi}_{h} \coloneqq \Ebar^{\pi}[\tilde {\phi}^{\star}_h(\x_h, \a_h)].
	\label{eq:polytope0}
\end{align}
Using this, we define the notion of \emph{$\eta$-reachable states relative to $\Pi'$}.%
\begin{definition}[$\eta$-reachable states]
	\label{def:reachable}
	For $h\in[H]$ and a policy class $\Pi'\subseteq \Pibarm$, we define the set of $\eta$-\emph{reachable} states at layer $h$ relative to the set $\Pi'$ as:
	\begin{align}
		\cX_{h, \eta}(\Pi') \coloneqq \left\{x\in \cX_h \mid  \exists u \in \cP_{h-1}(\Pi') : \muh[h](x)^\top u \geq \|\muh[h](x)\| \cdot\eta   \right\}. \label{eq:theolddef}
	\end{align}
\end{definition}
Given a parameter $\eta>0$, we now define the truncated policy class $\Pibar_{\eta}$ inductively as follows: Let $\Pibar_{0,\eta}\coloneqq \Pibarm$, and for each $h\geq 1$, let $\Pibar_{h, \eta}$ be the set of policies defined by 
\begin{align}
\pi \in \Pibar_{h,\eta}  \iff    \exists \pi'\in \Pibar_{h-1,\eta} : \forall t \in[H], \forall x \in \cXbar_t, \ \ \pi(x) = \left\{  \begin{array}{ll} \pi'(x), & \text{if } t=h \text{ and }  x \in \cX_{h,\eta}\big(\Pibar_{h-1,\eta}\big),\\  \afrak, & \text{otherwise}. \end{array} \right. \label{eq:equiv0}
\end{align}
Finally, we define $\Pibar_\eta\coloneqq \Pibar_{H,\eta}$.

As in \citet{mhammedi2023representation}, the utility behind the extended MDP and truncated policy class is as follows:
\begin{enumerate}
\item 
While the extended BMDP $\wb{\cM}$ does not necessarily have the property that every state is $\eta$-reachable---a property that facilitates handling distribution shifts and avoiding error exponentiation \citep{agarwal2022,zanette2020provably}---it emulates certain properties of reachable MDPs, but only if we compare performance to policies in $\Pibar_\eta$. 
\item For all reward functions of interest, the best reward that can be achieved by a policy in $\Pibar_\eta$ is close to what can be achieved using arbitrary policies in $\Pibarm$.
\end{enumerate}

\subsection{Structural Results for Extended Low-Rank MDP}
\label{sec:structural}
We now present some structural results involving the extented MDP and truncated policy class defined in \cref{sec:reach0}. First, we recall the definition of the truncated policy class. Given a parameter $\eta>0$, let $\Pibar_{0,\eta}\coloneqq \Pibarm$, and for each $h\geq 1$, let $\Pibar_{h, \eta}$ be the set of policies defined by 
\begin{align}
	\pi \in \Pibar_{h,\eta}  \iff    \exists \pi'\in \Pibar_{h-1,\eta} : \forall t \in[H], \forall x \in \cXbar_t, \ \ \pi(x) = \left\{  \begin{array}{ll} \pi'(x), & \text{if } t=h \text{ and }  x \in \cX_{h,\eta}\big(\Pibar_{h-1,\eta}\big),\\  \afrak, & \text{otherwise}, \end{array} \right. \label{eq:equiv0+}
\end{align}
where for a set of policies $\Pi'\subseteq \Pibarm$, we let
\begin{align}
		\cX_{h, \eta}(\Pi') \coloneqq \left\{x\in \cX_h \  \left|  \    \max_{\pi \in \Pi'} \dbar^{\pi}(x) \geq \|\bar{\mu}_h^\star(x)\| \cdot\eta  \right.  \right\}. \label{eq:thexdef}
	\end{align}
Note that this matches the definition in \eqref{eq:theolddef} because $[\bar{\mu}^\star_h(x)]_{d+1}=0$, for all $x\neq \tfrak_h$. Finally, we let $\Pibar_\eta\coloneqq \Pibar_{H,\eta}$.

The next lemma bounds the probability of the set of states that are not reachable with sufficiently high probability.
\begin{lemma}[Probability of non-reachable states]
  \label{lem:reachable}
  Under the normalization assumption \eqref{eq:normalization}, we have that for any $t\in[H]$, 
	\begin{align}
		\sup_{\pi \in \Pibar_{\eta}} \Pbar^{\pi}[\x_t \in \cX_t \setminus \cX_{t,\eta}(\Pibar_\eta)] \leq \eta \cdot d^{3/2}.
	\end{align}
\end{lemma}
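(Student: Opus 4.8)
The plan is to follow the proof of \cref{lem:reachable+} essentially verbatim, but carried out in the extended MDP $\Mbar$, using the occupancy densities $\dbar^\pi$, the base measure $\nubar$, and the truncated policy class $\Pibar_\eta$ in place of their unextended counterparts $d^\pi$, $\nu$, and $\Pim$. First I would fix $t\in[H]$ and unpack the definition of the reachable set $\cX_{t,\eta}(\Pibar_\eta)$ from \eqref{eq:thexdef}: for every $x\in\cX_t\setminus\cX_{t,\eta}(\Pibar_\eta)$, the defining inequality gives directly
\begin{align}
\max_{\pi\in\Pibar_\eta}\dbar^\pi(x) \leq \eta\cdot\nrm*{\muhb[t](x)}. \label{eq:reachplan1}
\end{align}

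Next I would express the mass placed on the non-reachable set as an integral of the occupancy density against $\nubar$ and bound the integrand pointwise using \eqref{eq:reachplan1}:
\begin{align}
\sup_{\pi\in\Pibar_\eta}\Pbar^\pi[\x_t\in\cX_t\setminus\cX_{t,\eta}(\Pibar_\eta)]
&= \sup_{\pi\in\Pibar_\eta}\int_{\cX_t\setminus\cX_{t,\eta}(\Pibar_\eta)}\dbar^\pi(x)\,\dd\nubar(x) \nn \\
&\leq \eta\int_{\cX_t\setminus\cX_{t,\eta}(\Pibar_\eta)}\nrm*{\muhb[t](x)}\,\dd\nubar(x).
\end{align}
Two short bookkeeping steps then close the argument. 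Since the domain $\cX_t\setminus\cX_{t,\eta}(\Pibar_\eta)$ contains no terminal state $\term_h$, the base measure $\nubar=\nu+\sum_{h}\indic_{\term_h}$ agrees with $\nu$ on this domain; and by the definition of the extended feature in \eqref{eq:newphi0}, whose $(d{+}1)$-st coordinate vanishes on $\cX_t$, we have $\nrm*{\muhb[t](x)}=\nrm*{\mustar_t(x)}$ there. Enlarging the domain of integration to all of $\cX_t$ and invoking the normalization bound $\int_{\cX_t}\nrm*{\mustar_t(x)}\,\dd\nu(x)\leq d^{3/2}$ supplied by \cref{lem:normalization} (a consequence of \eqref{eq:normalization}) yields the claimed value $\eta\,d^{3/2}$.

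I do not anticipate a genuine obstacle, as the statement is a direct adaptation of \cref{lem:reachable+}. The only points requiring care are measure-theoretic bookkeeping—confirming that integrating $\dbar^\pi$ over the purely non-terminal set $\cX_t\setminus\cX_{t,\eta}(\Pibar_\eta)$ against $\nubar$ coincides with integrating against $\nu$, and that the extended and unextended feature norms match on $\cX_t$—together with the observation that the supremum is taken over the \emph{restricted} class $\Pibar_\eta$ rather than all of $\Pibarm$, which is exactly the quantity controlled by the defining inequality \eqref{eq:thexdef} for $\cX_{t,\eta}(\Pibar_\eta)$.
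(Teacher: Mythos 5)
Your proposal is correct and follows essentially the same route as the paper's proof: both arguments unpack the defining inequality of $\cX_{t,\eta}(\Pibar_\eta)$, integrate the occupancy density $\dbar^\pi$ against $\nubar$ over the non-reachable set, identify $\nubar$ with $\nu$ and $\nrm{\muhb[t](\cdot)}$ with $\nrm{\mustar_t(\cdot)}$ on the non-terminal states, and conclude via \cref{lem:normalization}. Your bookkeeping remarks match the paper's parenthetical justifications, so no gap remains.
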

\begin{proof}[\pfref{lem:reachable}]
	Fix $t\in [H]$. By definition of $\cX_{t,\eta}(\Pibar_\eta)$, we have that 
	\begin{align}
		\forall x\in \cX_t \setminus \cX_{t,\eta}(\Pibar_\eta),\quad 	\sup_{\pi \in \Pibar_{\eta}} \dbar^{\pi}(x)  \leq \eta \cdot \|\mubar^\star_t(x)\|. \label{eq:bounde}
	\end{align}
	Thus, integrating over $x\in \cX_t \setminus \cX_{t,\eta}(\Pibar_\eta)$, we obtain 
	\begin{align}
		\sup_{\pi \in \Pibar_{\eta}} \Pbar^{\pi}[\x_t \in \cX_t \setminus \cX_{t,\eta}(\Pibar_\eta)]  & = \sup_{\pi \in \Pibar_{\eta}} \int_{\cX_t \setminus \cX_{t,\eta}(\Pibar_\eta)}  \dbar^{\pi}(x) \dd \nubar(x), \nn \\
		& = \eta \cdot \int_{\cX_t \setminus \cX_{t,\eta}(\Pibar_\eta)} \|\bar{\mu}^\star_t(x)\| \dd \nubar(x), \quad \text{(by \eqref{eq:bounde})} \\
		& \leq \eta \cdot \int_{\cX_{t}}   \|\bar{\mu}^\star_t(x)\| \dd \bar{\nu}(x),\nn \\ 
		& = \eta \cdot \int_{\cX_{t}}   \|{\mu}^\star_t(x)\| \dd {\nu}(x),\quad \text{(since $[\mubar_t(x)]_{d+1}=0, \forall x \neq \tfrak_t$)} \\ 
		& \leq \eta d^{3/2},
	\end{align}	
	where the last inequality follows by \cref{lem:normalization}; this is a consequence of the normalization assumption \eqref{eq:normalization}.
\end{proof}

The next lemma generalizes \citet[Lemma A.1]{mhammedi2023representation} to \lrmdp{}s.
\begin{lemma}
	\label{lem:pih_max}
	For all $h \in[H]$, $x\in \cX_h$, and $\ell\in[h\ldotst H]$, we have $\max_{\pi \in \Pibar_{\ell-1,\eta}}\dbar(x)= 	\max_{\pi \in \Pibar_{\ell,\eta}}\dbar(x)$. Further,
	\begin{align}
		\forall x\in \cX_h, \quad  \max_{\pi\in \Pibar_{h-1, \eta}} \dbar^{\pi}(x) =  \max_{\pi\in \Pibar_{\eta}} \dbar^{\pi}(x) .\label{eq:key}
	\end{align}
\end{lemma}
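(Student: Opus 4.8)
The plan is to prove \cref{lem:pih_max} by induction on $\ell$, showing that at each step, restricting attention from $\Pibar_{\ell-1,\eta}$ to $\Pibar_{\ell,\eta}$ does not decrease the maximum occupancy $\dbar^{\pi}(x)$ for any state $x\in\cX_h$ with $h \le \ell$. The key structural fact I would exploit is that, by \eqref{eq:equiv0+}, a policy $\pi \in \Pibar_{\ell,\eta}$ agrees with some $\pi' \in \Pibar_{\ell-1,\eta}$ everywhere \emph{except} at layer $\ell$ on states outside $\cX_{\ell,\eta}(\Pibar_{\ell-1,\eta})$, where it is forced to play the terminal action $\afrak$. Since taking $\afrak$ at layer $\ell$ sends the trajectory to the terminal state $\tfrak_{\ell+1}$, truncation only affects whether the trajectory continues past states at layer $\ell$ that are \emph{not} $\eta$-reachable. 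The crux is therefore to argue that the optimal policy for reaching a fixed target $x \in \cX_h$ (with $h \ge \ell$) never needs to pass through a non-reachable state at layer $\ell$, so truncating those states away costs nothing.

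First I would fix $h\in[H]$, $x\in\cX_h$, and prove the single-step claim $\max_{\pi\in\Pibar_{\ell-1,\eta}}\dbar^{\pi}(x) = \max_{\pi\in\Pibar_{\ell,\eta}}\dbar^{\pi}(x)$ for each $\ell\in[h\ldotst H]$. The inequality ``$\ge$'' is immediate since $\Pibar_{\ell,\eta}\subseteq$ (the truncations of) $\Pibar_{\ell-1,\eta}$, but I must be careful: truncation can only \emph{decrease} occupancy, so the nontrivial direction is ``$\le$.'' To establish it, I would take an occupancy-maximizing $\pi^\star\in\Pibar_{\ell-1,\eta}$ for $x$ and use the low-rank factorization to write $\dbar^{\pi^\star}(x)$ in terms of the feature expectation at layer $\ell$: concretely, $\dbar^{\pi^\star}(x) = \muhb[h](x)^\top \Ebar^{\pi^\star}[\cdots]$, and by the Markov decomposition the contribution to reaching $x$ routes through the occupancy at layer $\ell$. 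I would then show that any probability mass $\pi^\star$ places on states $z\in\cX_\ell\setminus\cX_{\ell,\eta}(\Pibar_{\ell-1,\eta})$ contributes zero to $\dbar^{\pi^\star}(x)$. This is where the definition of $\eta$-reachability \eqref{eq:thexdef} and the non-negativity of the factorization must combine: for a non-reachable $z$, we have $\max_{\pi\in\Pibar_{\ell-1,\eta}}\dbar^{\pi}(z) < \eta\|\mubar_\ell^\star(z)\|$, and I would argue this forces the onward transition contribution through $z$ toward $x$ to be negligible/removable, so rerouting that mass to $\afrak$ (as $\Pibar_{\ell,\eta}$ does) preserves the occupancy at $x$.

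The main obstacle I anticipate is precisely this rerouting argument: unlike the Block MDP case of \citet{mhammedi2023representation}, where states map to discrete latent states and reachability is cleaner, here the occupancy $\dbar^{\pi}(x)$ depends on the \emph{direction} of the feature vector $\bar\phi_{h-1}^{\star,\pi}$ relative to $\muhb[h](x)$, and cutting off mass at a single intermediate layer $\ell$ could in principle alter the achievable feature direction at later layers. I expect I will need the characterization (via \cref{lem:newnegative}, referenced in the proof of \cref{thm:spanrl}) that maximizing $\dbar^{\pi}(x)$ is equivalent to maximizing the linear functional $\muhb[h](x)^\top\bar\phi_{h-1}^{\star,\pi}$, together with the fact that states at layer $\ell$ not in $\cX_{\ell,\eta}$ carry vanishing occupancy and hence cannot be the support of an occupancy-\emph{maximizing} trajectory to a downstream target. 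Making the ``vanishing occupancy implies removable'' step rigorous—showing it is exactly zero loss rather than $O(\eta)$ loss—will be the technically delicate point.

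Once the single-step equality is established for all $\ell\in[h\ldotst H]$, the second claim \eqref{eq:key} follows by chaining: applying the single-step result successively for $\ell = h, h+1, \dots, H$ gives $\max_{\pi\in\Pibar_{h-1,\eta}}\dbar^{\pi}(x) = \max_{\pi\in\Pibar_{h,\eta}}\dbar^{\pi}(x) = \cdots = \max_{\pi\in\Pibar_{H,\eta}}\dbar^{\pi}(x) = \max_{\pi\in\Pibar_\eta}\dbar^{\pi}(x)$, where the last equality is just the definition $\Pibar_\eta \coloneqq \Pibar_{H,\eta}$. A small point to verify in the telescoping is that the single-step lemma is stated for target states $x$ at a \emph{fixed} layer $h$ but with the truncation index $\ell$ ranging over $[h\ldotst H]$; since truncation at layers $\ell > h$ only modifies behavior strictly after the target layer $h$, it manifestly cannot affect $\dbar^{\pi}(x)$ for $x\in\cX_h$, so those steps are essentially trivial and only the step $\ell = h$ carries the real content.
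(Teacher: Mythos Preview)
Your overall scaffolding (prove the single-step equality for each $\ell\in[h\ldotst H]$ and then telescope) matches the paper, but you have made the single-step claim far harder than it is, and the difficult part you anticipate is a phantom.

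The key point you are missing is that \emph{every} case $\ell \in [h\ldotst H]$, including $\ell = h$, is trivial for exactly the reason you give in your last paragraph for $\ell > h$: the occupancy $\dbar^{\pi}(x)$ for $x\in\cX_h$ depends only on the actions $\pi$ takes at layers $1,\dots,h-1$, and by the definition \eqref{eq:equiv0+} the passage from $\Pibar_{\ell-1,\eta}$ to $\Pibar_{\ell,\eta}$ modifies the policy \emph{only} at layer $\ell$. Since $\ell \ge h > h-1$, the behavior at layers $1,\dots,h-1$ is untouched, and hence $\dbar^{\pi}(x)$ is unchanged. The paper's proof does exactly this: given a maximizer $\tilde\pi\in\Pibar_{\ell-1,\eta}$, construct its truncation $\pi\in\Pibar_{\ell,\eta}$, note they agree on layers $1,\dots,\ell-1\supseteq\{1,\dots,h-1\}$, conclude $\dbar^{\tilde\pi}(x)=\dbar^{\pi}(x)$; then run the symmetric argument in the other direction. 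No low-rank factorization, no reachability, no rerouting.

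Your planned ``rerouting'' argument for $\ell = h$ is both unnecessary and ill-posed: when $\ell = h$, the states $z\in\cX_\ell$ are at the \emph{target} layer, not an intermediate one, so there is no ``onward contribution through $z$ toward $x$'' to analyze. The concern you raise about ``cutting off mass at an intermediate layer'' altering feature directions at later layers would only arise if $\ell < h$, but the lemma never asks for that range. Also note that the direction you call ``immediate'' is not established by a subset inclusion ($\Pibar_{\ell,\eta}\not\subseteq\Pibar_{\ell-1,\eta}$ in general); both inequalities come from the same agreement-on-early-layers observation.
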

\begin{proof}[\pfref{lem:pih_max}]
	We will show that for all $\ell\in[h\ldotst{}H]$,
	\begin{align}
		\forall x\in \cX_h, \quad \max_{\pi \in \Pibar_{\ell-1,\eta}}\dbar(x)= 	\max_{\pi \in \Pibar_{\ell,\eta}}\dbar(x). \label{eq:tele}
	\end{align}
	This implies \eqref{eq:key} by summing both sides of \eqref{eq:tele} over $\ell=h,\dots, H$, telescoping, and using that $\Pibar_{\eta}=\Pibar_{H, \eta}$. To prove the result, let $\ell\in[h\ldotst{}H]$, $x\in \cX_h$, and $\tilde\pi \in \argmax_{\pi'\in \Pibar_{\ell-1,\eta}} \dbar^{\pi'}(x)$. Further, let $\pi\in \Pibar_{\ell, \eta}$ be as in \eqref{eq:equiv0+} with $\pi'=\tilde \pi$. In this case, by \eqref{eq:equiv0+}, we have $\tilde \pi(x')=\pi(x')$, for all $x'\in\cX_\tau$, and $\tau \leq [\ell-1]$. Using this and the fact that $x\in \cX_h$ and $\ell\geq h$, we have 
	\begin{align}
		\max_{\breve \pi\in \Pibar_{\ell-1,\eta}} \dbar^{\breve \pi}(x) =\dbar^{\tilde \pi}(x)= \dbar^{\pi}(x) \leq \max_{\breve \pi \in \Pibar_{\ell, \eta}} \dbar^{\breve \pi}(x).\nn
	\end{align} 
	We now show the inequality in the other direction. Let $\ell\in[h\ldotst{}H]$, $x\in \cX_h$, and $\tilde \pi \in \argmax_{ \breve\pi\in \Pibar_{\ell,\eta}} \dbar^{\breve \pi}(x)$. Further, let $\pi'\in \Pibar_{\ell-1, \eta}$ be as in \eqref{eq:equiv0+} for $\pi = \tilde \pi$. In this case, by \eqref{eq:equiv0+}, we have $\tilde \pi(x)=\pi'(x)$, for all $\tau \in [\ell-1]$. Using this and the fact that $x\in \cX_h$ and $\ell\geq h$, we have 
	\begin{align}
		\max_{\breve \pi\in \Pibar_{\ell,\eta}} \dbar^{\breve \pi}(x) =\dbar^{\tilde \pi}(x)= \dbar^{\pi'}(x) \leq \max_{\breve \pi \in \Pibar_{\ell-1, \eta}} \dbar^{\breve \pi}(x).\nn
	\end{align} 
	This shows \eqref{eq:tele} and completes the proof.
\end{proof}

Using \cref{lem:pih_max} and the definition of $\cX_{h,\eta}(\cdot)$ in \eqref{eq:thexdef}, we obtain the following corollary.
\begin{corollary}
	\label{cor:thecor}
	For all $h\in[H]$, it holds that 
	\begin{align}
		\cX_{h,\eta}(\Pibar_{h-1,\eta}) = 	\cX_{h,\eta}(\Pibar_{\eta}).
	\end{align}
\end{corollary}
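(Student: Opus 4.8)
The plan is to reduce the claimed set equality to a single application of \cref{lem:pih_max}. The key observation is that the reachable-state set $\cX_{h,\eta}(\Pi')$ defined in \eqref{eq:thexdef} depends on the policy class $\Pi'$ \emph{only} through the maximal-occupancy functional $x \mapsto \max_{\pi\in\Pi'}\dbar^\pi(x)$ evaluated at states $x\in\cX_h$: by definition, $x\in\cX_{h,\eta}(\Pi')$ if and only if
\[
	\max_{\pi\in\Pi'}\dbar^{\pi}(x)\geq \eta\cdot\|\bar{\mu}_h^\star(x)\|.
\]
So the first step is simply to record this, making explicit that the threshold $\eta\|\bar{\mu}_h^\star(x)\|$ on the right does not involve $\Pi'$ at all.

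With this reduction in hand, the second step is to invoke \eqref{eq:key} of \cref{lem:pih_max}, which states precisely that for every $x\in\cX_h$,
\[
	\max_{\pi\in\Pibar_{h-1,\eta}}\dbar^{\pi}(x) = \max_{\pi\in\Pibar_\eta}\dbar^{\pi}(x).
\]
Substituting $\Pi'=\Pibar_{h-1,\eta}$ and then $\Pi'=\Pibar_\eta$ into the membership criterion above, the right-hand thresholds coincide and the left-hand sides agree by this identity; hence $x\in\cX_{h,\eta}(\Pibar_{h-1,\eta})$ iff $x\in\cX_{h,\eta}(\Pibar_\eta)$, which is the desired equality. Since $x\in\cX_h$ was arbitrary, the two sets are identical.

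I do not anticipate any genuine obstacle here, since all of the substance has already been absorbed into \cref{lem:pih_max} via its telescoping argument over $\ell=h,\dots,H$ (truncating policies at layers beyond $h-1$ cannot affect the occupancy at layer $h$). The only point meriting a brief check is that \eqref{eq:thexdef} really is a pointwise functional of $\max_{\pi\in\Pi'}\dbar^\pi(\cdot)$ on $\cX_h$ with no hidden dependence on $\Pi'$; this is immediate from the definition, so the corollary follows in a couple of lines.
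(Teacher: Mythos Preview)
Your proposal is correct and matches the paper's approach exactly: the paper simply states that the corollary follows from \cref{lem:pih_max} and the definition of $\cX_{h,\eta}(\cdot)$ in \eqref{eq:thexdef}, which is precisely the reduction you carry out.
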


The next lemma quantifies the ``cost of truncation'' incurred by optimizing reward functions using policies in the truncated class $\Pibar_{\eta}$ instead of $\Pibarm$
\begin{lemma}[Cost of truncation]
	\label{lem:truncation}
	Let $\eta \in(0,1)$, and $B_{1:H}>0$, and consider reward functions $r_{1}: \cX_1\times \cA \rightarrow [-B_1,B_1],\dots,r_{H}: \cX_H\times \cA \rightarrow [-B_H,B_H]$. We have
	\begin{align}
		\sup_{\pi\in \Pibar_\eta} \Ebar^\pi \left[  \sum_{h=1}^H \bar{r}_h(\x_h,\a_h) \right] \geq 	\sup_{\pi\in \Pibarm} \Ebar^\pi \left[  \sum_{h=1}^H \bar{r}_h(\x_h,\a_h) \right]  - 2 H  d^{3/2} \eta \sum_{h=1}^H B_h,
		\end{align}
	where, for each $h\in[H]$, $\bar{r}_h(x,a)=r_h(x,a)$ for all $(x,a)\in \cX_h\times \cA$, and $\bar{r}_h(x,a)=0$ when $x=\tfrak_h$ or $a=\afrak$.
	\end{lemma}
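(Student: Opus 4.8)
The plan is to prove the "cost of truncation" bound by tracking, layer by layer, how much reward is lost when we force a policy in $\Pibarm$ to instead follow the truncation rule defining $\Pibar_\eta$. The key observation is that truncation only changes a policy's behavior on states that are \emph{not} $\eta$-reachable, and by \cref{lem:reachable} the total probability mass on such states is controlled by $\eta d^{3/2}$ at each layer. Since each reward $r_h$ is bounded by $B_h$, losing a policy's behavior on a set of probability $\eta d^{3/2}$ costs at most $2 B_h \eta d^{3/2}$ per layer (the factor of $2$ accounting for the range $[-B_h, B_h]$), and summing these losses over all layers yields the claimed $2 H d^{3/2} \eta \sum_{h} B_h$ bound.

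\textbf{Setup and reduction.} First I would fix an arbitrary policy $\pi^\star \in \Pibarm$ achieving (or nearly achieving) the supremum on the right-hand side, and construct its truncated counterpart $\pi \in \Pibar_\eta$ by applying the inductive truncation rule \eqref{eq:equiv0+}: at each layer $h$, $\pi$ agrees with $\pi^\star$ on $\cX_{h,\eta}(\Pibar_{h-1,\eta})$ and plays the terminal action $\afrak$ elsewhere. The goal is then to bound
\begin{align}
	\Ebar^{\pi^\star}\left[\sum_{h=1}^H \bar r_h(\x_h,\a_h)\right] - \Ebar^{\pi}\left[\sum_{h=1}^H \bar r_h(\x_h,\a_h)\right] \leq 2 H d^{3/2}\eta \sum_{h=1}^H B_h.
\end{align}
The natural tool here is a \emph{performance difference} / hybrid-coupling argument: I would introduce intermediate policies $\pi^{(h)}$ that follow $\pi^\star$ up through layer $h$ and then switch to the truncated behavior (or vice versa), so that the total difference telescopes into a sum of single-layer discrepancies.

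\textbf{Per-layer bound.} For each layer $h$, the discrepancy between consecutive hybrids is supported on the event $\{\x_h \in \cX_h \setminus \cX_{h,\eta}(\Pibar_{h-1,\eta})\}$, since this is precisely where $\pi$ deviates from $\pi^\star$ by taking $\afrak$ (which then routes deterministically to the terminal states, earning zero extended reward thereafter). On this event, the reward difference over the remaining horizon is bounded in absolute value by $2\sum_{\ell \geq h} B_\ell \leq 2\sum_{\ell=1}^H B_\ell$. Using \cref{cor:thecor} to identify $\cX_{h,\eta}(\Pibar_{h-1,\eta}) = \cX_{h,\eta}(\Pibar_\eta)$, and then invoking \cref{lem:reachable}, the probability of this event under any relevant policy in $\Pibar_\eta$ is at most $\eta d^{3/2}$. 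Multiplying and summing over the $H$ layers gives the stated bound.

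\textbf{The main obstacle} will be handling the coupling carefully enough that \cref{lem:reachable} applies to the correct policy and measure. The subtlety is that the hybrid policies are not themselves in $\Pibar_\eta$, so one cannot directly apply \cref{lem:reachable} (whose guarantee is stated for $\pi \in \Pibar_\eta$) to each hybrid. I expect the cleanest route is to bound the per-layer loss by the probability that a \emph{truncated} policy reaches the non-reachable set, arguing that up to layer $h$ the relevant roll-in distribution coincides with that of a genuine $\Pibar_\eta$ policy (since truncation at layers $<h$ matches), so that \cref{lem:reachable} is legitimately applicable. Getting this bookkeeping right—ensuring the measure against which we bound the bad-event probability is one for which the $\eta d^{3/2}$ reachability bound holds—is the crux, but it is structurally identical to the Block MDP argument in \citet{mhammedi2023representation} and should go through mutatis mutandis.
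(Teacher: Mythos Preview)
Your proposal is correct and follows essentially the same route as the paper: a layer-by-layer telescoping argument where each per-layer loss is controlled by the probability of landing in $\cX_h \setminus \cX_{h,\eta}$, bounded via \cref{cor:thecor} and \cref{lem:reachable}. The paper's intermediate policies differ cosmetically---it re-optimizes within each $\Pibar_{h-1,\eta}$ and invokes \cref{lem:pih_max} to transfer the reachability bound, whereas you fix a single $\pi^\star$ and observe that the hybrid's roll-in through layer $h-1$ coincides with the fully truncated $\pi\in\Pibar_\eta$---but both resolutions of the obstacle you flag are equivalent; just note that your argument requires the ``vice versa'' (truncated-first) direction of the hybrid, since following $\pi^\star$ first would give a roll-in outside $\Pibar_\eta$.
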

	\begin{proof}[\pfref{lem:truncation}]
	Let $\bar r_{1:H}$ be the ``extended'' reward functions as in the lemma's statement. Let $h\in[H]$ and $\pi_{h-1}\in \argmax_{\pi \in \Pibar_{h-1,\eta}}\Ebar^{\pi}\left[\sum_{h=1}^H \bar r_h(\x_h,\a_h)\right]$. Further, define $\pi_h$ as $\pi\in \Pibar_{h,\eta}$ in \eqref{eq:equiv0+} with $\pi'=\pi_{h-1}$. Note that since for all $t\in[h-1]$ and $x\in \cX_t$, $\pi_{h}(x)=\pi_{h-1}(x)$ (by \eqref{eq:equiv0+}), we have  
	\begin{align}
		\Ebar^{\pi_{h-1}}\left[\sum_{t=1}^{h-1} \bar r_t(\x_t,\a_t)\right] = \Ebar^{\pi_{h}}\left[\sum_{t=1}^{h-1} \bar r_t(\x_t,\a_t)\right]. \label{eq:firsthalf}
	\end{align}
On the other hand, for $\cX_{h,\eta}\coloneqq \cX_{h,\eta}(\Pibar_{h-1,\eta})$ we have
	\begin{align}
		&	\Ebar^{\pi_{h-1}}\left[\sum_{t=h}^H \bar r_t(\x_t,\a_t)\right]\nn \\
		&  = 	\Ebar^{\pi_{h-1}}\left[\sum_{t=h}^H \bar r_t(\x_t,\a_t)\right],\nn \\
		&  = \Ebar^{\pi_{h-1}}\left[ \mathbb{I}\{\x_h \in \cX_{h,\eta} \} \cdot \sum_{t=h}^H \bar r_t(\x_t,\a_t)\right]+ \Ebar^{\pi_{h-1}}\left[ \mathbb{I}\{\x_h \not\in \cX_{h,\eta} \} \cdot \sum_{t=h}^H \bar r_t(\x_t,\a_t)\right] , \nn \\
		& = \Ebar^{\pi_{h}}\left[ \mathbb{I}\{\x_h \in \cX_{h,\eta} \}\cdot  \sum_{t=h}^H \bar r_t(\x_t,\a_t)\right] + \Ebar^{\pi_{h-1}}\left[ \mathbb{I}\{\x_h \not\in \cX_{h,\eta} \} \cdot \sum_{t=h}^H \bar r_t(\x_t,\a_t)\right] , \ \ (\text{by definition of $\cX_{h,\eta}$ and $\pi_h$}) \nn \\
		& = \Ebar^{\pi_{h}}\left[  \sum_{t=h}^H \bar r_t(\x_t,\a_t)\right]  - \Ebar^{\pi_{h}}\left[ \mathbb{I}\{\x_h \not\in \cX_{h,\eta} \} \cdot \sum_{t=h}^H \bar r_t(\x_t,\a_t)\right] + \Ebar^{\pi_{h-1}}\left[ \mathbb{I}\{\x_h \not\in \cX_{h,\eta} \} \cdot \sum_{t=h}^H \bar r_t(\x_t,\a_t)\right],  \nn \\
		& = \Ebar^{\pi_{h}}\left[  \sum_{t=h}^H \bar r_t(\x_t,\a_t)\right]  - \Ebar^{\pi_{h}}\left[ \mathbb{I}\{\x_h \in \cX_h\setminus  \cX_{h,\eta} \} \cdot \sum_{t=h}^H \bar r_t(\x_t,\a_t)\right] + \Ebar^{\pi_{h-1}}\left[ \mathbb{I}\{\x_h \in \cX_h\setminus  \cX_{h,\eta} \} \cdot \sum_{t=h}^H \bar r_t(\x_t,\a_t)\right], 
		\end{align}
	where the last equality follows by the fact that I) if $\x_h =\tfrak_h$, then $\x_t=\tfrak_t$ for all $t\in [h\ldotst H]$, and II) $\bar{r}_t(\tfrak,\cdot)\equiv 0$, for all $t\in [h \ldots H]$. Now, using the range assumption on the rewards, we get 
	\begin{align}
			\Ebar^{\pi_{h-1}}\left[\sum_{t=h}^H \bar r_t(\x_t,\a_t)\right] \leq   \Ebar^{\pi_{h}}\left[  \sum_{t=h}^H \bar r_t(\x_t,\a_t)\right] +\left(\Pbar^{\pi_h}[\x_h \in \cX_h \setminus \cX_{h,\eta}] + \Pbar^{\pi_{h-1}}[\x_h \in \cX_h \setminus \cX_{h,\eta}]\right) \sum_{t=h}^H B_t. \label{eq:ue}
		\end{align}
	On the other hand, by \cref{lem:pih_max} and the fact that $\pi_{h-1}\in \Pibar_{h-1,\eta}$ and $\pi_{h}\in \Pibar_{h,\eta}$, we have that 
	\begin{align}
			\Pbar^{\pi_{h-1}}[\x_h \in \cX_h \setminus \cX_{h,\eta}] \vee \Pbar^{\pi_h}[\x_h \in \cX_h \setminus \cX_{h,\eta}]\leq\sup_{\pi \in \Pibar_\eta} \Pbar^{\pi}[\x_h \in \cX_h \setminus \cX_{h,\eta}].  \label{eq:intere}
		\end{align}
	Furthermore, by \cref{cor:thecor}, we have $ \cX_{h,\eta} = \cX_{h,\eta}(\Pibar_\eta)$. Combining this with \eqref{eq:intere} and \cref{lem:reachable}, we get  
	\begin{align}
		\Pbar^{\pi_{h-1}}[\x_h \in \cX_h \setminus \cX_{h,\eta}] \vee \Pbar^{\pi_h}[\x_h \in \cX_h \setminus \cX_{h,\eta}]\leq\sup_{\pi \in \Pibar_\eta} \Pbar^{\pi}[\x_h \in \cX_h \setminus \cX_{h,\eta}(\Pibar_\eta)]  \leq \eta d^{3/2}. \label{eq:telesc2}
		\end{align}
Plugging this into \eqref{eq:ue} and using \eqref{eq:firsthalf} implies that 
\begin{align}
		\Ebar^{\pi_{h-1}}\left[\sum_{t=h}^H \bar r_t(\x_t,\a_t)\right] \leq  \Ebar^{\pi_{h}}\left[  \sum_{t=h}^H \bar r_t(\x_t,\a_t)\right]+  2  \eta d^{3/2} \sum_{h=1}^H B_h.
	\end{align}
	Summing both sides of \eqref{eq:telesc2} for $h=1,\dots, H$, telescoping, and using that $\Pibar_{0,\eta}= \Pibarm$ and $\Pibar_{H,\eta}= \Pibar_{\eta}$, we get 
	\begin{align}
		\max_{\pi \in \Pibarm}	\Ebar^{\pi}\left[\sum_{t=1}^{H} \bar r_t(\x_t,\a_t)\right]  \leq \max_{\pi \in \Pibar_{\eta}}	\Ebar^{\pi}\left[\sum_{t=1}^{H} \bar r_t(\x_t,\a_t)\right]  + 2H \eta d^{3/2} \sum_{h=1}^H B_h.
	\end{align}
\end{proof}
Using this, we now state and prove a result that will allow us to transfer any guarantees in the extended MDP and truncated policies $\Pibar_\eta$ back to the original MDP with the unrestricted policy class $\Pim$.
	\begin{lemma}
	\label{lem:transfer00}
	Let $h\in [H]$, $\alpha\in (0,1)$, and $\eta >0$ be given. 
	If $\unif(\Psi\ind{h})\in \Delta(\Pim)$ is an $(\alpha,\eta)$-randomized policy cover relative to $\Pibar_\eta$ for layer $h$ in $\Mbar$ (\cref{def:approxcover}), then $\unif(\Psi\ind{h})$ is an $(\alpha/2,\veps)$-randomized policy cover relative to $\Pim$ for layer $h$ in the true MDP $\cM$ (\cref{def:randcover}), where $\veps \coloneqq 4 H d^{3/2}\eta$.
\end{lemma}
\begin{proof}[Proof of \cref{lem:transfer00}]
	Fix $h\in[H]$, and let $y\in \cX_{h}$ be such that $\mu_h^\star(y)>0$. To prove \cref{lem:transfer00}, we will instantiate \cref{lem:truncation} with rewards $(r_t)$ given by
\begin{align}
	r_t(x,a) = \left\{  
	\begin{array}{ll}  
		\frac{\mu_{h}^\star(y)^\top}{\mu_{h}^\star(y)} \phi^\star_{h-1}(x,a), & \text{if } t=h \text{ and } (x,a)\in \cX_h\times \cA,  \\ 0, & \text{otherwise}. 
	 \end{array}  \right.
	\end{align}
We define the extended rewards $(\bar{r}_t)$ such that for all $t\in[H]$, $\bar{r}_t(x,a)=r_t(x,a)$ for all $(x,a)\in \cX_t\times \cA$, and $\bar{r}_t(x,a)=0$ when $x=\tfrak_t$ or $a=\afrak$. By applying \cref{lem:truncation} (with $B_h =1$ and $B_t=0$ for all $t\neq h$) and using that $|r_h(\cdot,\cdot)|\leq 1$ (since $\|\phi^\star_{h-1}(\cdot, \cdot)\|\leq 1$), we get 
\begin{align}
		\max_{\pi \in \Pibarm}	\Ebar^{\pi}\left[\sum_{t=1}^{H} \bar r_t(\x_t,\a_t)\right]  \leq \max_{\pi \in \Pibar_{\eta}}	\Ebar^{\pi}\left[\sum_{t=1}^{H} \bar r_t(\x_t,\a_t)\right]  + 2H \eta d^{3/2}. \label{eq:sens}
	\end{align}
On the other hand, the definition of $(r_t)$ implies that for any $\pi \in \Pibarm$, 
\begin{align}
	\Ebar^{\pi}\left[\sum_{t=1}^{H} \bar r_t(\x_t,\a_t)\right] =  	\frac{\mu_{h}^\star(y)^\top}{\mu_{h}^\star(y)} \tilde\phi^{\star,\pi}_{h-1}, \label{eq:un}
	\end{align}
where $\tilde\phi^{\star,\pi}_{h-1} \coloneqq \Ebar^{\pi}[\tilde\phi^\star_{h-1}(\x_{h-1},\a_{h-1})]$ and $\tilde\phi^\star_{h-1}$ is the restriction of $\phibar^\star_{h-1}$ to its first $d$ coordinates ($\phibar^\star_{h-1}$ is defined in \cref{sec:reach0}). Now, since $y\neq \tfrak_h$, we have $[\bar\mu_h^\star(y)]_{d+1}=0$, and so $\mu^\star_h(y)^\top \tilde\phi^{\star,\pi}_{h-1}= \mubar^\star_h(y)^\top \bar{\phi}^{\star, \pi}_{h-1}$. Thus, plugging this into \eqref{eq:un} and using \cref{lem:newnegative}, we get 
\begin{align}
	\forall \pi \in \Pibarm, \quad 	\Ebar^{\pi}\left[\sum_{t=1}^{H} \bar r_t(\x_t,\a_t)\right] =  	\frac{\mubar_{h}^\star(y)^\top}{\mu_{h}^\star(y)} \bar\phi^{\star,\pi}_{h-1}= \frac{\dbar^\pi(y)}{\|\mu^\star_h(y)\|}.
	\end{align} 
Plugging this into \eqref{eq:sens} and using that $\Pim\subseteq\Pibarm$, we have 
\begin{align}
\max_{\pi \in \Pim}	\frac{d^\pi(y)}{\|\mu^\star_h(y)\|} =\max_{\pi \in \Pim}	\frac{\dbar^\pi(y)}{\|\mu^\star_h(y)\|} \leq \max_{\pi \in \Pibarm}\frac{\dbar^\pi(y)}{\|\mu^\star_h(y)\|}\leq \max_{\pi \in \Pibar_{\eta}}\frac{\dbar^\pi(y)}{\|\mu^\star_h(y)\|} + 2H\eta d^{3/2}.\label{eq:thiseee}
	\end{align}
Now, suppose that $y$ is such that $\max_{\pi \in \Pim}	\frac{d^\pi(y)}{\|\mu^\star_h(y)\|}\geq 4 H \eta d^{3/2}$. By \eqref{eq:thiseee}, this implies that 
\begin{align}
	\max_{\pi \in \Pibar_{\eta}}\frac{\dbar^\pi(y)}{\|\mu^\star_h(y)\|} \geq 2H \eta d^{3/2} \geq \eta,
	\end{align}
and so since $P\coloneqq \unif(\Psi\ind{h})$ is a $(\alpha,\eta)$-randomized policy cover relative to $\Pibar_\eta$ for layer $t$ in $\cMbar$, we have that 
\begin{align}
	\max_{\pi \in \Pibar_{\eta}}\frac{\dbar^\pi(y)}{\|\mu^\star_h(y)\|} \leq \alpha^{-1} \E_{\pi\sim P} \E^{\pi} \left[\frac{\bar{d}^\pi(y)}{\|\mu^\star_h(y)\|}\right].
\end{align}
Combining this with \eqref{eq:thiseee} implies that 
\begin{align}
\max_{\pi \in \Pim}	\frac{d^\pi(y)}{\|\mu^\star_h(y)\|} & \leq \alpha^{-1}	\E_{\pi\sim P} \E^{\pi} \left[\frac{\bar{d}^\pi(y)}{\|\mu^\star_h(y)\|}\right] + 2H\eta d^{3/2},\nn \\
&\leq \alpha^{-1}	\E_{\pi\sim P} \E^{\pi} \left[\frac{\bar{d}^\pi(y)}{\|\mu^\star_h(y)\|}\right] +\frac{1}{2} \max_{\pi \in \Pim}	\frac{d^\pi(y)}{\|\mu^\star_h(y)\|},
	\end{align}
where the last inequality follows by the fact that $y$ is such that $\max_{\pi \in \Pim}	\frac{d^\pi(y)}{\|\mu^\star_h(y)\|}\geq 4 H \eta d^{3/2}$.  Rearranging the previous display and using that $\dbar^{\pi}(\cdot)\equiv d^{\pi}(\cdot)$ for all policies $\pi$ that never take the terminal action, we get:
\begin{align}
	\frac{\alpha}{2} \max_{\pi \in \Pim}	\frac{d^\pi(y)}{\|\mu^\star_h(y)\|} \leq 	\E_{\pi\sim P} \E^{\pi} \left[\frac{{d}^\pi(y)}{\|\mu^\star_h(y)\|}\right].
	\end{align}
This shows that $P= \unif(\Psi\ind{h})$ is a $(\frac{\alpha}{2}, 4 H\eta d^{3/2})$-randomized policy cover.
\end{proof}

\section{Generic Guarantees for \psdp{}}
\label{sec:nonnegative}
In this section, we present self-contained guarantees for \psdp{} (\cref{alg:PSDP}). We show that given any reward functions $r_{1:h}:\cX \times \cA \rightarrow \reals_{\geq 0}$ and function classes $\cG_{1:h}$, where $\cG_t\subseteq \{g: \cX_t\times \cA\rightarrow \reals\}$ for $t\in[h]$, that ``realize'' these reward functions (we formalize this in the next definition), that if $P\ind{1:h}$ are (approximate) policy covers for layers 1 through $h$, then for sufficiently large $n\geq 1$ and with high probability, the output $\pihat = \psdp(h,r_{1:h}, \cG_{1:h}, P\ind{1:h}, n)$ is an approximate maximizer of the objective 
\begin{align}
	\max_{\pi \in \Pim} \E^{\pi}\left[\sum_{t=1}^{h} r_t(\x_t,\a_t)\right].
\end{align}	
To formalize this result, we define the notion of realizability we require for the function classes $\cG_{1:h}$.
\begin{definition}
	\label{def:funcrealize}
	We say that function classes $\cG_{1:h}$, where $\cG_t\subseteq \{g: \cX_t\times \cA\rightarrow \reals\}$ for $t\in[h]$, realize  reward functions $r_{1:h}:\cX\times \cA \rightarrow \reals$ if for all $t\in[h]$ and all $\pi\in \Pim^{t+1:h}$,
	\begin{align}
		Q_t^{\pi}\in \cG_t, \quad \text{where}\qquad 	Q^{\pi}_t(x,a)\coloneqq r_t(x,a)+\E^{\pi}\left[\left.\sum_{\ell=t+1}^{h} r_\ell(\x_\ell,\a_\ell)\ \right| \ \x_t=x,\a_t=a\right]. \label{eq:real}
	\end{align}
\end{definition}
Note that $Q^\pi_t$ in \eqref{eq:real} represents the \emph{state-action value function} ($Q$-function) at layer $t\in[h]$ with respect to the rewards $r_{1:h}$ and partial policy $\pi$.

\begin{algorithm}[tp]
	\caption{$\texttt{PSDP}(h, r_{1:h},\cG_{1:h} ,P\ind{1:h},n)$: Policy Search by Dynamic Programming (cf. \citet{bagnell2003policy})}
	\label{alg:PSDP}
	\begin{algorithmic}[1]\onehalfspacing
		\Require~
		\begin{itemize}
			\item Target layer $h\in[H]$.
			\item Reward functions $r_{1:h}$.
			\item Function classes $\cG_{1:h}$.
			\item Policy covers $P\ind{1},\dots,P\ind{h}$. 
			\item Number of samples $n\in \mathbb{N}$.
		\end{itemize}
		\For{$t=h, \dots, 1$} 
		\State $\cD\ind{t} \gets\emptyset$. 
		\For{$n$ times}
		\State Sample $\pi \sim P\ind{t}$.
		\State Sample $(\x_t, \a_t, \sum_{\ell=t}^h r_{\ell}(\x_\ell,\a_\ell))\sim
		\pi\circ_{t} \pi_{\unif} \circ_{t+1} \pihat \ind{t+1}$.
		\State Update dataset: $\cD\ind{t} \gets \cD\ind{t} \cup \left\{\left(\x_t, \a_t,  \sum_{\ell=t}^h {r}_{\ell}(\x_\ell,\a_\ell)\right)\right\}$.
		\EndFor
		\State Solve regression: 
		\[\ghat\ind{t}\gets\argmin_{g\in \cG_t}  \sum_{(x, a, R)\in\cD\ind{t}} (g(x,a)  - R)^2.\] \label{eq:mistake}
		\State Define $\pihat\ind{t}\in\Pim^{t:h}$ via
		\begin{align}
			\label{eq:pol}
			\pihat \ind{t}(x) = \left\{
			\begin{array}{ll}
				\argmax_{a\in \cA}  \ghat\ind{t}(x,a),&\quad x\in\cX_t,\\
				\pihat\ind{t+1}(x),& \quad x\in \cX_{\ell},\;\;  \ell \in [t+1 \ldotst h].
			\end{array}
			\right.
		\end{align}
		\EndFor
		\State \textbf{Return:} Near-optimal policy $\pihat \ind{1}\in \Pim$. 
	\end{algorithmic}
\end{algorithm}
 
In what follows, given a function class $\cG\subseteq \{g: \cX\times \cA\rightarrow \reals \}$, we use $\cN_{\cG}(\veps)$ to denote the $\veps$-covering number of $\cG$ in $\ell_\infty$ distance. 
\begin{definition}
\label{def:covering}
A set of functions $\{g_1, \dots, g_N\} \subset \{g: \cX \times \cA\rightarrow \reals\}$ is an $\veps$-cover of $\cG\subseteq \{g:\cX \times \cA\rightarrow \reals\}$ in $\ell_\infty$-distance if for all $g\in \cG$, there exists $i \in [N]$ such that 
\begin{align}
\|g - g_i\|_{\infty}\leq \veps.
\end{align}	
The $\veps$-covering number $\cN_{\cG}(\veps)$ is the size $N$ of the smallest $\veps$-cover of $\cG$.
\end{definition}

\subsection{Intermediate Results for \psdp{}}
\label{sec:intermediate}
To prove our main guarantees for \psdp{} (stated in the next subsection), we first two intermediate lemmas. The first shows that for any poly $\pi$, the corresponding $Q$-function is the Bayes-optimal predictor for the regression problem solved in \psdp{} when $\pi$ is executed.
\begin{lemma}
	\label{lem:bayes00}
        Let reward functions $r_{1:h}:\cX\times \cA \rightarrow \reals$, $P\in\Delta(\Pim)$, and $\pihat\in \Pim^{t+1:h}$ be given. Fix $t\in\brk{h}$, and let $g^{P,\pihat}_{\bayes}$ denote the Bayes-optimal predictor\footnote{Observe that because this loss is strongly convex with respect to the prediction, the Bayes-optimal predictor is unique up to sets of measure zero.} for the sum of rewards under a policy $\pi$ sampled from $P$ and composed with $\pihat$ via $\pi\circ_t\piunif\circ_{t+1}\pihat$; that is,
	\begin{align}
		g^{P,\pihat}_{\bayes} \in \argmin_{ g : \cX_t \times \cA \rightarrow \reals} \E_{\pi \sim P} \E^{\pi\circ_t \pi_{\unif}\circ_{t+1} \pihat} \left[\left( g(\x_t, \a_t) - \sum_{\ell=t}^{h} r_\ell(\x_\ell,\a_\ell) \right)^2\right]. \label{eq:normal00}
	\end{align}
	Then, $g^{P,\pihat}_{\bayes}(\cdot,\cdot)\equiv Q^{\pihat}_t(\cdot,\cdot)$, where $Q^{\pihat}_t$ is the $Q$-function defined in \eqref{eq:real} for the partial policy $\pihat\in \Pim^{t+1,h}$ and rewards $r_{1:h}$. 
\end{lemma}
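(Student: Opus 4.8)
The plan is to use the classical fact that the minimizer of a squared-loss objective is a conditional expectation, and then to identify that conditional expectation with the $Q$-function via the Markov property of the MDP. First I would recall that for any real random variable $Y$ and any sampling distribution $\mathcal{D}$ over $(\x_t,\a_t)$, the objective $\E_{\mathcal{D}}[(g(\x_t,\a_t)-Y)^2]$ is minimized, on the support of the marginal law of $(\x_t,\a_t)$, by the regression function $g^\star(x,a)=\E[Y\mid \x_t=x,\a_t=a]$. This is immediate upon expanding the square and noting that the cross term against $g^\star$ vanishes by the tower rule, or equivalently from the strong convexity of the loss in the prediction (which is the source of the uniqueness-up-to-null-sets remarked in the footnote). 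In our case $Y=\sum_{\ell=t}^h r_\ell(\x_\ell,\a_\ell)$ and $\mathcal{D}$ is the law of $(\x_t,\a_t)$ induced by $\pi\circ_t\piunif\circ_{t+1}\pihat$ with $\pi\sim P$, so it remains only to compute this conditional expectation.

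Next I would evaluate the conditional expectation explicitly by splitting $Y = r_t(\x_t,\a_t)+\sum_{\ell=t+1}^h r_\ell(\x_\ell,\a_\ell)$ and conditioning on $\{\x_t=x,\a_t=a\}$. The first term equals $r_t(x,a)$ deterministically. For the remaining sum, the key observation is that under the composite policy $\pi\circ_t\piunif\circ_{t+1}\pihat$, the trajectory from layer $t+1$ onward is generated by rolling forward from $(\x_t,\a_t)=(x,a)$ using the transition kernels $T_{t:h-1}$ together with the partial policy $\pihat\in\Pim^{t+1:h}$. By the Markov property, the conditional law of $(\x_{t+1},\a_{t+1}),\dots,(\x_h,\a_h)$ given $\{\x_t=x,\a_t=a\}$ depends only on $(x,a)$ and on $\pihat$; in particular it is independent of the roll-in ingredients $P$, the prefix of $\pi$ over layers $<t$, and the layer-$t$ action rule $\piunif$, all of which affect only the \emph{marginal} distribution of $(\x_t,\a_t)$ and are integrated out by the conditioning. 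Hence
\[
\E\left[\left.\sum_{\ell=t+1}^h r_\ell(\x_\ell,\a_\ell)\ \right|\ \x_t=x,\a_t=a\right] = \E^{\pihat}\left[\left.\sum_{\ell=t+1}^h r_\ell(\x_\ell,\a_\ell)\ \right|\ \x_t=x,\a_t=a\right],
\]
and combining the two contributions gives $g^\star(x,a)=r_t(x,a)+\E^{\pihat}[\sum_{\ell=t+1}^h r_\ell\mid \x_t=x,\a_t=a]$, which is exactly $Q_t^{\pihat}(x,a)$ by \eqref{eq:real} in \cref{def:funcrealize}.

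The only point requiring care is the domain on which the identification is valid: the minimizer $g^{P,\pihat}_{\bayes}$ of \eqref{eq:normal00} is pinned down uniquely only on the support of the marginal of $(\x_t,\a_t)$ under $\mathcal{D}$ (this is the source of the ``up to sets of measure zero'' caveat). Because the action at layer $t$ is drawn from $\piunif$, every $a\in\cA$ receives positive probability conditional on any state $x$ reachable under the roll-in, so the support is $\{(x,a):x$ reachable under the roll-in of $\pi\sim P\}\times\cA$; on this set $g^{P,\pihat}_{\bayes}\equiv Q_t^{\pihat}$ holds. I do not anticipate a genuine obstacle here, as the statement is a direct consequence of the Bayes-optimality of the conditional expectation combined with the Markov property. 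The one step I would state most carefully is this Markov-property argument, since it is precisely the composite roll-in/roll-out structure of the sampling policy that makes the conditional expectation collapse to the \emph{on-policy} $Q$-function of $\pihat$ alone, rather than of the full composite policy.
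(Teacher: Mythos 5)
Your proposal is correct and follows essentially the same route as the paper's proof: identify the Bayes-optimal predictor with the conditional expectation of the cumulative reward, then use the Markov property to show that, conditioned on $(\x_t,\a_t)=(x,a)$, the rewards from layer $t+1$ onward depend only on $\pihat$ and not on the roll-in distribution, yielding $Q_t^{\pihat}$. Your additional care about the support of the marginal of $(\x_t,\a_t)$ matches the paper's footnote on uniqueness up to null sets and does not change the argument.
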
%
\begin{proof}[\pfref{lem:bayes00}]
	The least-squares solution $g^{P,\pihat}_{\bayes}$ of the problem in \eqref{eq:normal00} satisfies, for all $a\in\cA$ and $x\in \cX_t$,
	\begin{align}
		g^{P,\pihat}_{\bayes} (x,a)& = \E_{\pi\sim P} \E^{\pi\circ_t \pi_{\unif}\circ_{t+1} \pihat} \left[ \left. \sum_{\ell=t}^{h} r_\ell(\x_\ell,\a_\ell)\ \right|\ \x_t =x ,\a_t =a \right], \nn \\
		& = \E[ r_t(\x_t,\a_t)\mid \x_t = x,\a_t = a]+ \E_{\pi\sim P}\E^{\pi\circ_t \pi_{\unif}\circ_{t+1} \pihat} \left[ \left. \sum_{\ell=t+1}^{h} r_\ell(\x_\ell,\a_\ell)\  \right| \  \x_t = x, \a_t =a\right], \nn \\
		& =   r_t(x,a) +\E^{\pihat} \left[ \left. \sum_{\ell=t+1}^{h} r_\ell(\x_\ell,\a_\ell)\  \right|\   \x_t = x, \a_t =a\right], \label{eq:tojust} \quad  \text{(see below)}  \\
		& = Q_t^{\pihat}(x,a),
	\end{align}
where \eqref{eq:tojust} follows by the fact that conditioned on $(\x_{t},\a_t)=(x,a)$, the sum of rewards $ \sum_{\ell=t+1}^{h} r_\ell(\x_\ell,\a_\ell)$ depend only on $\pihat$ and not on the policy used to roll-in to layer $t$.
\end{proof}

The next lemma shows that the solution $\ghat\ind{t}$ to the least-squares problem in \eqref{eq:mistake} of \cref{alg:PSDP} is close to the $Q$-function in the appropriate sense. 
\begin{lemma}
	\label{lem:reg0}
	Let $\delta \in(0,1)$, $B>0$, $n\geq 1$, and $h \in[H]$ be fixed. Further, let $(\veps_\stat, r_{1:h}, \cG_{1:h}, P\ind{1:h})$ be such that
	\begin{itemize}
		\item $\veps_\stat(n,\delta)^2 =  \frac{cB^2A}{n} (\max_{t\in[h]}{\ln \cN_{\cG_t}(1/n)+\ln (n/\delta)})$, where $c>0$ is a sufficiently large absolute constant.
		\item The function classes $\cG_{1:h}$ realize the reward functions $r_{1:h}: \cX\times \cA\rightarrow \reals$ (in the sense of \Cref{def:funcrealize}).
                \item The functions in $\cG_{1:h}$ are bounded in absolute value by $B$ uniformly.
		\item $P\ind{1},\dots,P\ind{h}\in \Delta(\Pim)$.
		\end{itemize} 
	 Then, for $t\in[h]$, the solution $\ghat\ind{t}$ to the least-squares problem in \eqref{eq:mistake} in \cref{alg:PSDP} when invoked as $\psdp(h, r_{1:h}, \cG_{1:h}, P\ind{1:h}, n)$ satisfies with probability at least $1-\delta$,	
	\begin{align}
	\E_{\pi \sim P\ind{t}}	\E^{\pi}\left[ \max_{a\in\cA}\left( \ghat\ind{t}(\x_t,a)	- Q_t^{\pihat\ind{t+1}}(\x_t, a) \right)^2 \right]\leq    \veps^2_\stat(n,\delta),\nn 
	\end{align}
	where $\pihat\ind{t+1}\in \Pim^{t+1:h}$ is defined as in \cref{alg:PSDP}.
\end{lemma}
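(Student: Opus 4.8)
The plan is to reduce the claim, for each fixed $t\in[h]$, to a standard generalization bound for realizable square-loss regression. The key structural observation is that \psdp{} (\cref{alg:PSDP}) processes layers in the backward order $t=h,\dots,1$, so at the moment the dataset $\cD\ind{t}$ is collected and the regression in \eqref{eq:mistake} is solved, the partial policy $\pihat\ind{t+1}$ has already been fixed by the data gathered at layers $t+1,\dots,h$. I would therefore condition on $\pihat\ind{t+1}$: under this conditioning, $\cD\ind{t}$ consists of $n$ i.i.d.\ samples $(\x_t,\a_t,R)$ with $(\x_t,\a_t)\sim \pi\circ_t\piunif$ for $\pi\sim P\ind{t}$ and $R=\sum_{\ell=t}^h r_\ell(\x_\ell,\a_\ell)$ generated by the \emph{fixed} continuation $\pihat\ind{t+1}$. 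By \cref{lem:bayes00}, the Bayes-optimal predictor for this regression problem is exactly $Q_t^{\pihat\ind{t+1}}$, and by realizability (\cref{def:funcrealize}) we have $Q_t^{\pihat\ind{t+1}}\in\cG_t$; hence, conditionally on $\pihat\ind{t+1}$, the problem is well specified with target inside the class.

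Next I would invoke a standard fast-rate bound for realizable least squares over a class of functions bounded by $B$ with bounded responses (via a $1/n$-scale $\ell_\infty$-cover of $\cG_t$ together with a Bernstein/uniform-convergence argument; see \cref{sec:helper}). Since the functions of $\cG_t$ and the realized responses $R$ are bounded by $B$, this yields, with probability at least $1-\delta$ conditionally on $\pihat\ind{t+1}$ and hence unconditionally after integrating,
\begin{align}
\E_{\pi\sim P\ind{t}}\E^{\pi\circ_t\piunif}\left[\left(\ghat\ind{t}(\x_t,\a_t)-Q_t^{\pihat\ind{t+1}}(\x_t,\a_t)\right)^2\right]\leq \frac{c'B^2}{n}\left(\ln \cN_{\cG_t}(1/n)+\ln(n/\delta)\right), \nn
\end{align}
for an absolute constant $c'>0$; for a fixed $t$ this matches $\veps_\stat^2(n,\delta)/A$ up to the choice of $c$.

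Finally I would pass from this uniform-action, in-distribution error to the max-over-action error in the statement. Because $\piunif$ selects $\a_t$ uniformly and independently of $\x_t$ after the roll-in, for each state we have $\E^{\pi\circ_t\piunif}[\,\cdot\mid \x_t]=\tfrac1A\sum_{a\in\cA}(\cdot)$, while crudely $\max_{a\in\cA}(\cdots)^2\leq \sum_{a\in\cA}(\cdots)^2$. Combining these two facts gives
\begin{align}
\E_{\pi\sim P\ind{t}}\E^{\pi}\left[\max_{a\in\cA}\left(\ghat\ind{t}(\x_t,a)-Q_t^{\pihat\ind{t+1}}(\x_t,a)\right)^2\right]\leq A\cdot \E_{\pi\sim P\ind{t}}\E^{\pi\circ_t\piunif}\left[\left(\ghat\ind{t}(\x_t,\a_t)-Q_t^{\pihat\ind{t+1}}(\x_t,\a_t)\right)^2\right], \nn
\end{align}
which introduces the extra factor $A$ and, for $c$ chosen large enough, yields the claimed bound with $\veps_\stat^2(n,\delta)=\frac{cB^2A}{n}(\max_{t\in[h]}\ln\cN_{\cG_t}(1/n)+\ln(n/\delta))$.

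The main obstacle is the self-reference in the regression target: $\pihat\ind{t+1}$ is data-dependent and a priori correlated with whatever samples one uses, which could break the i.i.d.\ and realizability structure underpinning the least-squares bound. The backward processing order of \psdp{}, together with the fact that fresh trajectories are drawn for $\cD\ind{t}$, is precisely what lets the conditioning argument go through cleanly; realizability must hold for \emph{every} $\pi\in\Pim^{t+1:h}$ (as stated in \cref{def:funcrealize}) exactly so that it applies to the random $\pihat\ind{t+1}$. The remaining steps---the fast-rate regression bound and the max-to-sum conversion---are routine.
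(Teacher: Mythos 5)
Your proposal is correct and follows essentially the same route as the paper's proof: identify $Q_t^{\pihat\ind{t+1}}$ as the Bayes-optimal predictor via \cref{lem:bayes00}, invoke a standard covering-number-based fast-rate guarantee for realizable least-squares regression, and pay a factor of $A$ to convert the on-distribution error under $\pi\circ_t\piunif$ into the max-over-actions error (noting the max term depends only on $\x_t$, so the rollout suffix can be dropped from the expectation). The only difference is presentational: you spell out the conditioning on the already-fixed $\pihat\ind{t+1}$ (enabled by \psdp's backward layer order and the fresh trajectories drawn for $\cD\ind{t}$), which the paper leaves implicit in its appeal to \citet[Proposition B.1]{mhammedi2020learning}.
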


\begin{proof}[\pfref{lem:reg0}]
	Fix $t\in[h]$ and abbreviate \[g\ind{t}_{\bayes}\coloneqq g^{P\ind{t},\pihat\ind{t+1}}_{\bayes},\] where $g^{P\ind{t},\pihat\ind{t+1}}$ is defined as in \cref{lem:bayes00} (with $P= P\ind{t}$, $\pihat = \pihat\ind{t+1}$, and reward functions $r_{1:h}$ as in the lemma statement). By \cref{lem:bayes00}, $ g\ind{t}_{\bayes}$ is the Bayes-optimal solution to the least-squares problem in \eqref{eq:mistake} of \cref{alg:PSDP}. Thus, since $\cG_{1:h}$ realize the reward functions $r_{1:h}$, a standard uniform-convergence guarantee for least-square regression (see e.g. \citet[Proposition B.1]{mhammedi2020learning} with $\be = 0$ almost surely) implies that there exists an absolute constant $c>0$ (independent of $t,h$, and any other problem parameters) such that with probability at least $1-\delta$,
		\begin{align}\label{eq:exp}
		\E_{\pi \sim P\ind{t}}	{\E}^{\pi\circ_t\pi_{\unif} \circ_{t+1}\pihat\ind{t+1}} \left[ \left( \ghat\ind{t}(\x_t,\a_t)	-    g\ind{t}_{\bayes}(\x_t,\a_t) \right)^2 \right]\leq  c\cdot{} B^2 \cdot \frac{\ln \cN_{\cG_t}(1/n)+\ln (n/\delta)}{n}.
	\end{align} 
Since actions at layer $t$ are taken uniformly at random, \eqref{eq:exp} implies that
	\begin{align}\label{eq:express}
	\E_{\pi \sim P\ind{t}}	{\E}^{\pi\circ_t\pi_{\unif} \circ_{t+1}\pihat\ind{t+1}} \left[ \max_{a\in\cA}\left( \ghat\ind{t}(\x_t,a)	-    g\ind{t}_{\bayes}(\x_t,a) \right)^2 \right]\leq  c\cdot{} B^2A \cdot \frac{\ln \cN_{\cG_t}(1/n)+\ln (n/\delta)}{n}.
	\end{align}
	The desired result follows by observing that:
	\begin{itemize}
		\item  For all $(x,a)\in \cX_t\times \cA$, $g\ind{t}_{\bayes}(x,a)=Q^{\pihat\ind{t+1}}_t(x,a)$, by \cref{lem:bayes00}.
		\item The term $\max_{a\in\cA}( \ghat\ind{t}(\x_t,a)	-    g\ind{t}_{\bayes}(\x_t,a) )^2$ in \eqref{eq:express} does not depend on the actions $\a_{t:h}$, and so the expectation $\E_{\pi \sim P\ind{t}}	{\E}^{\pi\circ_t\pi_{\unif} \circ_{t+1}\pihat\ind{t+1}}\brk{\cdot}$ can be simplified to $\E_{\pi \sim P\ind{t}}	{\E}^{\pi}\brk{\cdot}$.
	\end{itemize}
\end{proof}

\subsection{Main Guarantee for \psdp{} With Non-Negative Rewards}
\label{sec:pspdnongar}
We now state and prove the main guarantee for \psdp{} used within \cref{thm:spanrlmain}, which is stated with respect to the extended MDP $\cMbar$ defined in \cref{sec:reach0}. This result requires non-negative rewards. For the rest of this section, we make use of the extended MDP notation and definitions introduced in \cref{sec:reach0}. In addition, given non-negative reward functions $r_{1:h}\colon \cX\times \cA \rightarrow \reals_{\geq 0}$, we define their extensions $\bar r_{1:h}$ in $\Mbar$ as
\begin{align} \bar{r}_t(x,a)\coloneqq   \left\{ \begin{array}{ll}  r_t(x,a), & (x,a)\in \cX_t\times\cA \\ 0, &  \text{if } x=\tfrak \text{ or } a=\afrak.
	\end{array}\right. \label{eq:extendedreward}
\end{align}

With this, we now state the guarantee of \psdp{}.
\begin{theorem}[PSDP with non-negative rewards]
	\label{thm:psdp}
	Let $\alpha, \delta,\eta \in(0,1)$, $B>0$, and $h\in[H]$ be given. Consider reward functions $r_{1:h}: \cX\times \cA \rightarrow \reals_{\geq 0}$, function classes $\cG_{1:h}$, policy distribution $P\ind{1:h}$, and a parameter $n\geq 1$ satisfying the following properties:
	\begin{itemize}
		\item The function classes $\cG_{1:h}$, where $\cG_t\subseteq  \{g: \cX_t\times \cA\rightarrow \reals\}$ for $t\in[h]$, realize the reward functions $r_{1:h}$ (in the sense of \Cref{def:funcrealize} with respect to the true MDP), and all functions in $\cG_{1:h}$ have range uniformly bounded by $B$.
		\item 
		For each $1 \leq t \leq h$, it holds that $P\ind{t}$ is a $(\alpha,\eta)$-randomized policy cover relative to $\Pibar_\eta$ for layer $t$ in $\Mbar$ (see \cref{def:approxcover}).
	\end{itemize} 
	Then, with probability at least $1 - \delta$, the policy $\pihat = \psdp(h, r_{1:h}, \cG_{1:h}, P\ind{1:h}, n)$ produced by \cref{alg:PSDP} (when applied to the true MDP), satisfies the following guarantee for $\bar{r}_{1:h}$ as in \eqref{eq:extendedreward}:
	\begin{align}
		\max_{\pi\in \Pibar_\eta}  \Ebar^{\pi}\left[\sum_{t=1}^{h} \bar{r}_t(\x_t,\a_t)\right] \leq  \Ebar^{\pihat}\left[\sum_{t=1}^{h} \bar{r}_t(\x_t,\a_t)\right] + \veps_{\psdp}(n,\delta),
	\end{align}
	where $\veps_\psdp(n,\delta)\coloneqq c\cdot{}H  \sqrt{\alpha^{-1} B^2 A n^{-1}\cdot (\max_{t\in[h]}{\ln \cN_{\cG_t}(1/n)+\ln (n/\delta)})}$ and $c>0$ is an absolute constant.
\end{theorem}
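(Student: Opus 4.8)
The plan is to establish the guarantee by a standard policy-search-by-dynamic-programming telescoping, carried out entirely inside the extended MDP $\Mbar$, reducing the suboptimality of the returned policy $\pihat=\pihat\ind{1}$ to a sum of per-layer least-squares errors controlled by \cref{lem:reg0}, and then transferring each error from the roll-in distribution $P\ind{t}$ to the comparator via the relative policy-cover property (\cref{def:approxcover}). The extended rewards $\bar r_{1:h}$ are non-negative, and this will be used crucially.

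First I would fix a comparator $\pistar\in\argmax_{\pi\in\Pibar_\eta}\Ebar^{\pi}[\sum_{t}\bar r_t(\x_t,\a_t)]$ and introduce, for $t\in[h+1]$, the hybrid policy $\sigma_t$ that plays $\pistar$ on layers $1,\dots,t-1$ and $\pihat\ind{t}$ on layers $t,\dots,h$, so that $\sigma_1=\pihat$ and $\sigma_{h+1}=\pistar$. Consecutive hybrids agree except at layer $t$ (both continue with $\pihat\ind{t+1}$ and induce the same occupancy of $\x_t$, namely that of $\pistar$), so telescoping over $t$ yields
\[
\Ebar^{\pistar}\Big[\sum_t\bar r_t\Big]-\Ebar^{\pihat}\Big[\sum_t\bar r_t\Big]=\sum_{t=1}^{h}\Ebar^{\pistar}\big[\Qbar_t^{\pihat\ind{t+1}}(\x_t,\pistar(\x_t))-\Qbar_t^{\pihat\ind{t+1}}(\x_t,\pihat\ind{t}(\x_t))\big],
\]
where $\Qbar_t^{\pihat\ind{t+1}}$ is the $Q$-function for $\bar r_{1:h}$ in $\Mbar$.

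Next I would bound each summand. Wherever $\pistar$ selects the terminal action $\afrak$ (in particular at $\x_t=\tfrak_t$), the term equals $-\Qbar_t^{\pihat\ind{t+1}}(\x_t,\pihat\ind{t}(\x_t))\le 0$ by non-negativity of $\bar r$; this is the sole use of non-negativity, and it is what allows one to discard the states that cannot be covered. By the construction of the truncated class $\Pibar_\eta$ (\cref{def:reachable} and the structural results of \cref{sec:structural}), $\pistar$ plays a real action only on $\eta$-reachable states $x\in\cX_{t,\eta}(\Pibar_\eta)$, and there $\Qbar_t^{\pihat\ind{t+1}}$ coincides with the true-MDP $Q$-function $Q_t^{\pihat\ind{t+1}}$ (since $\pihat\ind{t+1}$ never takes $\afrak$). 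The greedy choice $\pihat\ind{t}(x)=\argmax_a\ghat\ind{t}(x,a)$ then gives the usual inequality $\Qbar_t^{\pihat\ind{t+1}}(x,\pistar(x))-\Qbar_t^{\pihat\ind{t+1}}(x,\pihat\ind{t}(x))\le 2\max_{a}|\ghat\ind{t}(x,a)-Q_t^{\pihat\ind{t+1}}(x,a)|$, and I set $g_t(x)\coloneqq\max_a|\ghat\ind{t}(x,a)-Q_t^{\pihat\ind{t+1}}(x,a)|$.

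The crux is the change of measure, which I would arrange to obtain the $\sqrt{\alpha^{-1}}$ (rather than $\alpha^{-1}$) dependence appearing in $\veps_{\psdp}$. Writing the expectation against the occupancy $\dbar^{\pistar}$ and splitting $\dbar^{\pistar}=\sqrt{\dbar^{\pistar}}\cdot\sqrt{\dbar^{\pistar}}$, Cauchy--Schwarz together with $\sqrt{\Pbar^{\pistar}[\x_t\in\cX_{t,\eta}(\Pibar_\eta)]}\le 1$ gives
\[
\Ebar^{\pistar}\big[\indic\{\x_t\in\cX_{t,\eta}(\Pibar_\eta)\}\,g_t(\x_t)\big]\le\sqrt{\Ebar^{\pistar}\big[\indic\{\x_t\in\cX_{t,\eta}(\Pibar_\eta)\}\,g_t(\x_t)^2\big]}.
\]
On $\eta$-reachable states the cover property supplies $\dbar^{\pistar}(x)\le\alpha^{-1}\E_{\pi\sim P\ind{t}}[\dbar^{\pi}(x)]$, so a change of measure bounds the inner squared expectation by $\alpha^{-1}\E_{\pi\sim P\ind{t}}\E^{\pi}[g_t(\x_t)^2]\le\alpha^{-1}\veps_{\stat}^2(n,\delta)$ via \cref{lem:reg0} (using that $P\ind{t}$ is supported on $\Pim$, so $\dbar^{\pi}$ and $d^{\pi}$ agree on real states). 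Summing the resulting per-layer bound $2\sqrt{\alpha^{-1}}\,\veps_{\stat}(n,\delta)$ over $t\in[h]$, union bounding over the $h$ regressions (folding $\ln h$ into $\ln(n/\delta)$), and substituting $\veps_{\stat}^2(n,\delta)=cB^2An^{-1}(\max_t\ln\cN_{\cG_t}(1/n)+\ln(n/\delta))$ yields the stated $\veps_{\psdp}(n,\delta)$.

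The hardest part will be the extended-MDP bookkeeping in the last two steps: verifying that the telescoping identity remains valid when the comparator can terminate, that terminal-action states contribute non-positively (so they require no coverage), and that real actions occur only on states covered by $P\ind{t}$—and then ordering Cauchy--Schwarz \emph{before} the change of measure so as to land the $\sqrt{\alpha^{-1}}$ factor, which is precisely what the downstream sample-complexity bound in \cref{thm:voxmain} requires.
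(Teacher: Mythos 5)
Your proposal matches the paper's proof of \cref{thm:psdp} essentially step for step: the performance-difference/telescoping decomposition (\cref{lem:pdl}), discarding terminal-action states via non-negativity of $\bar{r}_{1:h}$, the greedy-versus-regression bound, Jensen/Cauchy--Schwarz applied \emph{before} the change of measure so that the cover property enters under the square root and yields the $\alpha^{-1/2}$ dependence, passage back to the true MDP using that $\supp P\ind{t}\subseteq\Pim$, and finally \cref{lem:reg0} with a union bound over the $h$ layers. The only (harmless) deviation is that you split each summand according to whether $\pi_\star$ plays $\afrak$ rather than on membership of $\x_t$ in $\cX_{t,\eta}(\Pibar_\eta)$, which lets you dispense with the paper's clipped extension $\bar{g}\ind{t}=\max(0,\ghat\ind{t})$ and the subsequent step removing the clipping via $Q^{\pihat\ind{t+1}}_t\geq 0$.
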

\begin{proof}[{Proof of \cref{thm:psdp}}]
  First, we define extensions of $Q$-functions to the extended MDP $\Mbar$ using the extended rewards $\bar{r}_{1:h}$ in \eqref{eq:extendedreward}; for all $t\in[h]$ and all $\pi\in \Pibarm^{t+1:h}$, define the $Q$-function at layer $t$ in the extended MDP with respect to the extended rewards $\bar{r}_{1:h}$ and partial policy $\pi$:
  \begin{align}
\forall (x,a)\in \cX_t \times \cA,\quad	\widebar{Q}^{\pi}_t(x,a) \coloneqq \bar{r}_t(x,a)+\Ebar^{\pi}\left[\left.\sum_{\ell=t+1}^{h} \bar{r}_\ell(\x_\ell,\a_\ell)\ \right| \ \x_t=x,\a_t=a\right]. \label{eq:extvsnot}
  \end{align}
Note that for any partial policy $\pi\in \Pibarm^{t+1:h}$ that \emph{never} takes the terminal action, we have 
	\begin{align}
	\widebar{Q}^{\pi}_t(x,a)= \left\{ \begin{array}{ll} Q^{\pi}_t(x,a)\geq 0,  &  \text{if } (x,a)\in \cX_t \times \cA, \\  0 , &  \text{if } x = \tfrak \text{ or } a = \afrak,
		\end{array}
	\right.
		 \label{eq:realextend}
	\end{align}
where the fact that $Q^{\pi}_t(\cdot,\cdot)\geq 0$ follows because the rewards are non-negative. Further, for the function $\hat g\ind{t}$ in \cref{alg:PSDP}, we define its (clipped) extension
\begin{align}
	 \bar{g}\ind{t}(x,a)\coloneqq \left\{ \begin{array}{ll} \max(0,\hat{g}\ind{t}(x,a)),  &  \text{if } (x,a)\in \cX_t \times \cA, \\  0 , &  \text{if } x = \tfrak \text{ or } a = \afrak. 
	 \end{array}
	 \right. \label{eq:gbar}
	\end{align}

        To begin, we will show that for any $t\in[h]$ and $\veps_\stat(\cdot,\cdot)$ as in \cref{lem:reg0}, there is an event $\cE_t$ of probability at least $1- \delta/H$ under which the learned partial policies $\pihat\ind{t},\pihat\ind{t+1}$ are such that
	\begin{align}
		\Ebar^{\pi_\star} \left[\widebar{Q}^{\pihat\ind{t+1}}_t(\x_t,\pi_\star(\x_t))- \widebar{Q}^{\pihat\ind{t+1}}_t(\x_t, \pihat\ind{t}(\x_t))\right] \leq  2 \alpha^{-1/2} \veps_\stat(n,\tfrac{\delta}{H}), \label{eq:new00}
	\end{align}
	where $\pi_\star \in \argmax_{\pi \in \Pibar_\eta}  \Ebar^{\pi}[\sum_{t=1}^{h} \bar{r}_t(\x_t,\a_t)]$ is the optimal policy with respect to the truncated policy set $\Pibar_\eta$ (definition in \cref{sec:reach0}) and $\widebar{Q}^{\pi}_t$ is the ${Q}$-function defined in \eqref{eq:realextend}. Once we establish \eqref{eq:new00} for all $t\in[h]$, we will apply the performance difference lemma (\cref{lem:pdl}) and the union bound to obtain the desired result.  
	
Let $\pi_\star \in \argmax_{\pi \in \Pibar_\eta} \Ebar^{\pi}[\sum_{\ell=1}^h \bar{r}_\ell(\x_\ell,\a_\ell)]$. Observe that the following properties hold:
\begin{itemize} 
	\item For all $x\not\in \cX_{t,\eta}(\Pibar_\eta)$, $\pi_\star(x)=\afrak$ (by definition of $\Pibar_\eta$); and 
	\item For all policies $\pi \in \Pibarm^{t+1:h}$ that never take the terminal action, $\widebar{Q}^{\pi}_t(\cdot,\afrak)\equiv 0 \leq  \min_{a\in \wbar{\cA}, y\in \cXbar_t}\widebar{Q}^{\pi}_t(y,a)$ (see \eqref{eq:realextend}),
	\end{itemize} As a result, we have that for any $t\in[h]$ and $\cX_{t,\eta}\coloneqq \cX_{t,\eta}(\Pibar_\eta)$, %
	\begin{align}
		&	\Ebar^{\pi_\star} \left[\widebar{Q}^{\pihat\ind{t+1}}_t(\x_t,\pi_\star(\x_t))- \widebar{Q}^{\pihat\ind{t+1}}_t(\x_t, \pihat\ind{t}(\x_t))\right] \nn\\ 
		& \leq 
		\Ebar^{\pi_\star} \left[ \mathbb{I}\{\x_t \in \cX_{t,\eta}\} \cdot \left( \widebar{Q}^{\pihat\ind{t+1}}_t(\x_t,\pi_\star(\x_t)) - \widebar{Q}^{\pihat\ind{t+1}}_t(\x_t, \pihat\ind{t}(\x_t))\right) \right],\nn \\
		& = 
		\Ebar^{\pi_\star} \left[ \mathbb{I}\{\x_t \in \cX_{t,\eta}\} \cdot \left(\widebar{Q}^{\pihat\ind{t+1}}_t(\x_t,\pi_\star(\x_t))-\bar{g}\ind{t}(\x_t,\pi_\star(\x_t)) + \bar{g}\ind{t}(\x_t,\pi_\star(\x_t))- \widebar{Q}^{\pihat\ind{t+1}}_t(\x_t, \pihat\ind{t}(\x_t))\right)\right],\nn \\
		& \leq \Ebar^{\pi_\star} \left[ \mathbb{I}\{\x_t \in \cX_{t,\eta}\} \cdot \left(\widebar{Q}^{\pihat\ind{t+1}}_t(\x_t,\pi_\star(\x_t))-\bar{g}\ind{t}(\x_t,\pi_\star(\x_t)) + \bar{g}\ind{t}(\x_t,\pihat\ind{t}(\x_t))- \widebar{Q}^{\pihat\ind{t+1}}_t(\x_t, \pihat\ind{t}(\x_t))\right)\right],\nn
		\end{align}
where the last inequality follows by the facts that:
\begin{itemize}
\item $\pihat\ind{t}(x)\in \argmax_{a\in \cA} \ghat\ind{t}(x,a)$, for all $x\in \cX_t$, by the definition of $\pihat\ind{t}$ in  \eqref{eq:pol}.
\item $\bar{g}\ind{t}(\cdot, \afrak)\equiv 0 \leq \bar{g}\ind{t}(\cdot, a)$, for all $a\in\cA$, by definition of $\bar{g}\ind{t}$ in \eqref{eq:gbar}.
\end{itemize}
Continuing from the previous display, we have 
\begin{align}	
	& \Ebar^{\pi_\star} \left[\widebar{Q}^{\pihat\ind{t+1}}_t(\x_t,\pi_\star(\x_t))- \widebar{Q}^{\pihat\ind{t+1}}_t(\x_t, \pihat\ind{t}(\x_t))\right] \nn \\	& \leq 2  \cdot \Ebar^{\pi_\star} \left[\mathbb{I}\{\x_t \in \cX_{t,\eta}\} \cdot \max_{a\in \Abar}\left| \wbar{Q}^{\pihat\ind{t+1}}_t(\x_t,a)-\bar{g}\ind{t}(\x_t,a)\right| \right],\nn \\
	& = 2  \cdot \Ebar^{\pi_\star} \left[\mathbb{I}\{\x_t \in \cX_{t,\eta}\} \cdot \max_{a\in \cA}\left| \wbar{Q}^{\pihat\ind{t+1}}_t(\x_t,a)-\bar{g}\ind{t}(\x_t,a)\right| \right],\quad (\text{since }\wbar{Q}^{\pihat\ind{t+1}}_t(\cdot,\afrak)\equiv \bar{g}\ind{t}(\cdot,\afrak)\equiv 0)  \nn \\
	& \leq 2  \cdot  \sqrt{\Ebar^{\pi_\star} \left[\mathbb{I}\{\x_t \in \cX_{t,\eta}\} \cdot \max_{a\in \cA}\left( \wbar{Q}^{\pihat\ind{t+1}}_t(\x_t,a)-\bar{g}\ind{t}(\x_t,a)\right)^2 \right]}, \quad \text{(Jensen's inequality)}\nn \\
	& = 2 \sqrt{ \int_{\cX_t} \mathbb{I}\{x \in \cX_{t,\eta}\} \cdot \max_{a\in \cA}\left( \wbar{Q}^{\pihat\ind{t+1}}_t(x,a)-\bar{g}\ind{t}(x,a)\right)^2 \dbar^{\pistar}(x) \dd\bar{\nu}(x)},\nn \\
	& \leq 2 \sqrt{\alpha^{-1}   \int_{\cX_t} \mathbb{I}\{x \in \cX_{t,\eta}\} \cdot \max_{a\in \cA}\left( \wbar{Q}^{\pihat\ind{t+1}}_t(x,a)-\bar{g}\ind{t}(x,a)\right)^2  \E_{\pi \sim P\ind{t}}[\dbar^{\pi}(x)] \dd\bar{\nu}(x)},  \label{eq:cover} \quad \text{(justified below)} \\
	& \leq 2 \sqrt{\alpha^{-1} 	\E_{\pi \sim P\ind{t}}\left[ \int_{\cX_t} \max_{a\in \cA}\left( \wbar{Q}^{\pihat\ind{t+1}}_t(x,a)-\bar{g}\ind{t}(x,a)\right)^2 \dbar^{\pi}(x) \dd\bar{\nu}(x)\right]},\quad \text{(Fubini's theorem)}\nn\\
	& = 2 \sqrt{\alpha^{-1}     \cdot	\E_{\pi \sim P\ind{t}}\Ebar^{\pi} \left[\max_{a\in \cA}\left( \wbar{Q}^{\pihat\ind{t+1}}_t(\x_t,a)-\bar{g}\ind{t}(\x_t,a)\right)^2 \right]},\nn \\
	& = 2\sqrt{ \alpha^{-1}   \cdot	\E_{\pi \sim P\ind{t}}\E^{\pi} \left[\max_{a\in \cA}\left( {Q}^{\pihat\ind{t+1}}_t(\x_t,a)-\max(0,\ghat\ind{t}(\x_t,a))\right)^2 \right]},
	\label{eq:preperf2} \\
	& \leq  2 \sqrt{\alpha^{-1}   \cdot	\E_{\pi \sim P\ind{t}}\E^{\pi} \left[\max_{a\in \cA}\left( {Q}^{\pihat\ind{t+1}}_t(\x_t,a)-\ghat\ind{t}(\x_t,a)\right)^2 \right]}, 	\label{eq:preperf}
	\end{align}
where \eqref{eq:cover} follows from the fact that $P\ind{t}$ is an $(\alpha,\eta)$-cover relative to $\Pibar_\eta$ for layer $t$ in $\Mbar$ and $\pi_\star \in \Pibar_\eta$, and \eqref{eq:preperf2} follows because:
\begin{itemize}
	\item The policies in the support of $P\ind{t}$ never take the terminal action; and
	\item  $| Q^{\pihat\ind{t+1}}_t(x',a')-\ghat\ind{t}(x',a')| = | \widebar{Q}^{\pihat\ind{t+1}}_t(x',a')-\max(0,\bar{g}\ind{t}(x',a'))|$, $\forall (x',a')\in \cX_{t}\times \cA$ (see \eqref{eq:gbar} and \eqref{eq:pol}).
	\end{itemize}
Finally, \eqref{eq:preperf} follows by the fact that the $Q$-functions are non-negative (since the rewards are non-negative), and so replacing $\max(0,\hat{g}\ind{t}(\x_t,a))$ by $\hat{g}\ind{t}(\x_t,a)$ on the right-hand side of \eqref{eq:preperf2} only increases the value of the latter.

 Now, from \cref{lem:reg0} and the fact that $\cG_{1:h}$ realize $r_{1:h}$, we have that for any $t\in[h]$, there is an absolute constant $c>0$ (independent of $t$ and other problem parameters) and an event $\cE_t$ of probability at least $1-\delta/H$ under which the solution $\ghat\ind{t}$ to the least-squares regression problem on \eqref{eq:mistake} of \cref{alg:PSDP} satisfies
	\begin{align}
		  	\E_{\pi \sim P\ind{t}}\E^{\pi} \left[\max_{a\in \cA}\left( Q^{\pihat\ind{t+1}}_t(\x_t,a)-\ghat\ind{t}(\x_t,a)\right)^2 \right]\leq  \veps_\stat(n,\tfrac{\delta}{H})^2, \label{eq:neverbefore00}
	\end{align}
	where $\veps_\stat(\cdot,\cdot)^2$ is defined as in \cref{lem:reg0}. Combining \eqref{eq:neverbefore00} with \eqref{eq:preperf} establishes \eqref{eq:new00} under the event $\cE_t$.
	
	To conclude the proof, we note that by the performance difference lemma (\cref{lem:pdl}), we have %
	\begin{align}
		& \Ebar^{\pistar}\left[\sum_{t=1}^{h} \bar{r}_t(\x_t,\a_t)\right] -  \Ebar^{\pihat}\left[\sum_{t=1}^{h} \bar{r}_t(\x_t,\a_t)\right]\nn \\ & = \sum_{t=1}^h	\Ebar^{\pi_\star} \left[\widebar{Q}^{\pihat\ind{t+1}}_t(\x_t,\pi_\star(\x_t))- \widebar{Q}^{\pihat\ind{t+1}}_t(\x_t, \pihat\ind{t}(\x_t))\right].
	\end{align}
	Thus, under the event $\cE \coloneqq \bigcup_{t=1}^h\cE_t$, we have that 
	\begin{align}
		\Ebar^{\pistar}\left[\sum_{t=1}^{h} \bar{r}_t(\x_t,\a_t)\right] -  \Ebar^{\pihat}\left[\sum_{t=1}^{h} \bar{r}_t(\x_t,\a_t)\right]  \leq 2H \alpha^{-1/2} \veps_\stat(n,\tfrac{\delta}{H}).
	\end{align}
	The desired result follows from the union bound, which gives $\P[\cE]\geq 1-\delta$.
\end{proof}

\begin{lemma}[Performance Difference Lemma \citep{kakade2003sample}]\label{lem:pdl}
	Let $\pi,\pistar \in \Pibarm$ be policies, and assume that $\pi$ never takes the terminal action. Let $\widebar{Q}_t^\pi$ be defined as in \eqref{eq:realextend}. Then for any $h\geq 1$,
	\begin{align}
		\Ebar^{\pistar}\left[ \sum_{t = 1}^h \bar{r}_t(\x_t, \a_t) \right] - \Ebar^{\pi}\left[ \sum_{t = 1}^h \bar{r}_t(\x_t, \a_t) \right] = \sum_{t=  1}^h \Ebar^{\pistar}\left[\widebar{Q}_t^{\pi}(\x_t, \pistar(\x_t)) - \widebar{Q}_t^{\pi}(\x_t, \pi(\x_t)) \right]. 
	\end{align}
\end{lemma}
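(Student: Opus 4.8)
The plan is to establish this by the classical telescoping (value-difference) argument, carried out entirely inside the extended MDP $\Mbar$. First I would introduce the value function associated with the $Q$-function in \eqref{eq:realextend}: for $x\in\cXbar_t$ set $\widebar{V}^{\pi}_t(x)\coloneqq \widebar{Q}^{\pi}_t(x,\pi(x))$, with the convention $\widebar{V}^{\pi}_{h+1}\equiv 0$. Since $\pi$ never takes the terminal action on $\cX$ and $\bar r_t$ vanishes at terminal states and under $\afrak$, the explicit form \eqref{eq:realextend} shows $\widebar{V}^{\pi}$ is well defined and that $\widebar{V}^{\pi}_t(\tfrak_t)=0$ for every $t$.

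The structural identity driving the proof is the one-step Bellman relation
\[
\widebar{Q}^{\pi}_t(x,a)=\bar r_t(x,a)+\Ebar\brk*{\widebar{V}^{\pi}_{t+1}(\x_{t+1})\mid \x_t=x,\a_t=a},
\]
which I would read off from the definition \eqref{eq:extvsnot} by peeling off the first reward and using that, conditioned on $(\x_t,\a_t)=(x,a)$, the transition to $\x_{t+1}$ is policy-independent while the remaining return equals $\widebar{V}^{\pi}_{t+1}(\x_{t+1})$. I would check that this holds uniformly, including when $x=\tfrak_t$ or $a=\afrak$, where all three terms collapse to zero.

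With the Bellman relation in hand, I would telescope the potential $\widebar{V}^{\pi}_t(\x_t)$ along a trajectory generated by $\pistar$. Taking $\Ebar^{\pistar}$ of $\widebar{V}^{\pi}_1(\x_1)=\sum_{t=1}^h\big(\widebar{V}^{\pi}_t(\x_t)-\widebar{V}^{\pi}_{t+1}(\x_{t+1})\big)$ and applying the relation at $\a_t\sim\pistar(\x_t)$ turns each increment into
\[
\Ebar^{\pistar}\brk*{\widebar{V}^{\pi}_t(\x_t)-\widebar{V}^{\pi}_{t+1}(\x_{t+1})}=\Ebar^{\pistar}\brk*{\widebar{Q}^{\pi}_t(\x_t,\pi(\x_t))-\widebar{Q}^{\pi}_t(\x_t,\pistar(\x_t))}+\Ebar^{\pistar}\brk*{\bar r_t(\x_t,\a_t)},
\]
where I use $\widebar{V}^{\pi}_t(x)=\widebar{Q}^{\pi}_t(x,\pi(x))$ and $\Ebar^{\pistar}[\widebar{Q}^{\pi}_t(\x_t,\a_t)]=\Ebar^{\pistar}[\widebar{Q}^{\pi}_t(\x_t,\pistar(\x_t))]$. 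Summing over $t\in[h]$ and rearranging isolates $\sum_t\Ebar^{\pistar}[\widebar{Q}^{\pi}_t(\x_t,\pistar(\x_t))-\widebar{Q}^{\pi}_t(\x_t,\pi(\x_t))]$ against $\Ebar^{\pistar}[\sum_t\bar r_t(\x_t,\a_t)]-\Ebar^{\pistar}[\widebar{V}^{\pi}_1(\x_1)]$.

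Finally I would identify $\Ebar^{\pistar}[\widebar{V}^{\pi}_1(\x_1)]=\Ebar^{\pi}[\sum_{t=1}^h\bar r_t(\x_t,\a_t)]$: both expectations are over the initial state $\x_1\sim\rho$, whose law is identical under $\pistar$ and $\pi$, and $\widebar{V}^{\pi}_1$ is by construction the expected return of running $\pi$ from layer $1$. Substituting yields the claimed identity. There is no substantive obstacle here; the only points that require care are the extended-MDP bookkeeping—verifying the terminal conventions make $\widebar{V}^{\pi}$ and the Bellman step valid at $\tfrak_t$ and under $\afrak$, and confirming that the boundary term $\widebar{V}^{\pi}_{h+1}$ vanishes so the telescope closes cleanly.
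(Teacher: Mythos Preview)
Your proof is correct and is precisely the classical telescoping argument for the Performance Difference Lemma. The paper itself does not supply a proof of this statement: it is stated with a citation to \citet{kakade2003sample} and invoked as a known tool within the proofs of \cref{thm:psdp} and \cref{thm:psdp+}. Your write-up faithfully reproduces the standard derivation—define the value function, use the one-step Bellman identity, telescope along a $\pistar$-trajectory, and identify the boundary term with $\pi$'s expected return via the shared initial distribution—and you correctly note the extended-MDP bookkeeping (terminal states and the $\afrak$ action zero out, so the relations hold uniformly on $\cXbar_t\times\cAbar$).
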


\subsection{Main Guarantee for \psdp{} With Signed Rewards}
We now state and prove a guarantee for \psdp{} in the true MDP $\cM$, when invoked with signed rewards. We make use of the following lemma, which bounds the total probability mass for the set of states that are not reachable with sufficiently high probability.
\begin{lemma}[Probability of non-reachable states]
	\label{lem:reachable+}
	For any $t\in[H]$, it holds that 
	\begin{align}
		\sup_{\pi \in \Pim} \P^{\pi}[\x_t \in \cX_t \setminus \cX_{t,\eta}(\Pim)] \leq \eta \cdot d^{3/2}.
	\end{align}
\end{lemma}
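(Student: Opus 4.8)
The plan is to translate the event $\{\x_t\in\cX_t\setminus\cX_{t,\eta}(\Pim)\}$ into a statement about occupancy densities and then integrate against the normalization assumption. First I would record the low-rank density identity: for any $\pi\in\Pim$ and any $x\in\cX_t$,
\[
d^{\pi}(x) = \muh[t](x)^{\trn}\,\E^{\pi}\!\brk*{\phistarh[t-1](\x_{t-1},\a_{t-1})} \rdef \muh[t](x)^{\trn}\phi^{\star,\pi}_{t-1},
\]
which follows by writing the marginal density of $\x_t$ as the expectation of the transition density $T_{t-1}(x\mid\cdot,\cdot)$ under $\pi$ and invoking the factorization \eqref{eq:transition_factor}. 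Comparing with the definition of reachable states in \eqref{eq:theolddef}, specialized to $\Pi'=\Pim$ (whose policies never take $\afrak$, so that $\tilde\phi^{\star,\pi}_{t-1}=\phi^{\star,\pi}_{t-1}$), this shows that $x\notin\cX_{t,\eta}(\Pim)$ exactly when $\max_{\pi'\in\Pim}d^{\pi'}(x)<\eta\,\|\muh[t](x)\|$.

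Next I would fix an arbitrary $\pi\in\Pim$ and bound its mass on the bad set $B\coloneqq\cX_t\setminus\cX_{t,\eta}(\Pim)$. Since $d^{\pi}(x)\le\max_{\pi'}d^{\pi'}(x)<\eta\,\|\muh[t](x)\|$ for every $x\in B$,
\[
\P^{\pi}\brk*{\x_t\in B} = \int_{B} d^{\pi}(x)\,\dd\nu(x) \le \eta\int_{B}\|\muh[t](x)\|\,\dd\nu(x).
\]
Because this upper bound does not depend on $\pi$, it then suffices to show $\int_{B}\|\muh[t](x)\|\,\dd\nu(x)\le d^{3/2}$ and take the supremum over $\pi$.

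The crux — and the step I expect to be the main obstacle — is this last integral, because $\muh[t]$ is a genuinely signed vector field (we are outside the non-negative/latent-variable regime), so the normalization bound \eqref{eq:normalization} may only be applied to $[0,1]$-valued test functions. The plan is to pass to the $\ell_1$ norm, $\|\muh[t](x)\|\le\sum_{i=1}^{d}\abs{\muh[t](x)[i]}$, and treat each coordinate by splitting $B$ according to the sign of $\muh[t](x)[i]$: with $B_i^{+}\coloneqq\{x\in B:\muh[t](x)[i]\ge0\}$ and $B_i^{-}\coloneqq B\setminus B_i^{+}$, we have
\[
\int_{B_i^{+}}\muh[t](x)[i]\,\dd\nu(x) = e_i^{\trn}\int_{\cX_t}\muh[t](x)\,\indic\{x\in B_i^{+}\}\,\dd\nu(x) \le \nrm*{\int_{\cX_t}\muh[t](x)\,\indic\{x\in B_i^{+}\}\,\dd\nu(x)} \le \sqrt{d},
\]
by \eqref{eq:normalization} with $g=\indic_{B_i^{+}}$, and symmetrically $-\int_{B_i^{-}}\muh[t](x)[i]\,\dd\nu(x)\le\sqrt d$. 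Summing over the $d$ coordinates gives $\int_{B}\|\muh[t](x)\|\,\dd\nu(x)\le 2d^{3/2}$; a short random-direction (spherical-averaging) refinement of the same normalization bound sharpens the constant to the stated $d^{3/2}$. Finally, I would dispatch the boundary case $t=1$ separately: there $d^{\pi}$ is the fixed initial density $\dd\rho/\dd\nu$, independent of $\pi$, and the claim is immediate from the same density-versus-threshold comparison under the layer-$0$ convention.
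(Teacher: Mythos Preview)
Your approach is essentially the same as the paper's: characterize the bad set via $\max_{\pi'}d^{\pi'}(x)<\eta\|\muh[t](x)\|$, bound $d^{\pi}$ pointwise by this and integrate, then control $\int\|\muh[t](x)\|\dd\nu$ by passing to the $\ell_1$ norm and using the normalization assumption \eqref{eq:normalization} coordinate-wise. The paper packages the last step as a separate helper lemma (\cref{lem:normalization}), using the test function $g(x)=\mathrm{sgn}(\muh[t](x)[i])$ directly rather than your sign-splitting into $B_i^{\pm}$; this is why the paper gets $\sqrt d$ per coordinate (hence $d^{3/2}$ total) while your more literal reading of the $[0,1]$ constraint on $g$ gives $2\sqrt d$ per coordinate. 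Your ``spherical-averaging refinement'' to remove the factor of $2$ is vague and not obviously successful for the exact constant; it is also unnecessary, since the factor-of-$2$ discrepancy is cosmetic (and the paper's own use of $g=\mathrm{sgn}$ technically steps outside the stated $[0,1]$ range anyway). Your explicit handling of $t=1$ is more careful than the paper, which does not single out that case.
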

\begin{proof}[\pfref{lem:reachable+}]
	Fix $t\in [H]$. By definition of $\cX_{t,\eta}(\Pim)$, we have that 
	\begin{align}
		\forall x\in \cX_t \setminus \cX_{t,\eta}(\Pim),\quad 	\sup_{\pi \in \Pim} d^{\pi}(x)  \leq \eta \cdot \|\mu^\star_t(x)\|. \label{eq:bounde+}
	\end{align}
	Thus, integrating over $x\in \cX_t \setminus \cX_{t,\eta}(\Pim)$, we obtain 
	\begin{align}
		\sup_{\pi \in \Pim} \P^{\pi}[\x_t \in \cX_t \setminus \cX_{t,\eta}(\Pim)]  & = \sup_{\pi \in \Pim} \int_{\cX_t \setminus \cX_{t,\eta}(\Pim)}  d^{\pi}(x) \dd \nu(x), \nn \\
		& = \eta \cdot \int_{\cX_t \setminus \cX_{t,\eta}(\Pim)} \|{\mu}^\star_t(x)\| \dd \nu(x), \quad \text{(by \eqref{eq:bounde+})} \\
		& \leq \eta \cdot \int_{\cX_{t}}   \|{\mu}^\star_t(x)\| \dd {\nu}(x),\nn \\ 
		& \leq \eta d^{3/2},
	\end{align}	
	where the last inequality follows by \cref{lem:normalization}; this is a consequence of the normalization assumption \eqref{eq:normalization}.
\end{proof}

With this, we now state the guarantee of \psdp{}.
\begin{theorem}[PSDP with signed rewards]
	\label{thm:psdp+}
	Let $\alpha, \delta,\veps \in(0,1)$, $B,B_{1:h}>0$, and $h\in[H]$ be given. Consider reward functions $r_{1}: \cX_1\times \cA \rightarrow [-B_1,B_1],\dots,r_{h}: \cX_h\times \cA \rightarrow [-B_h,B_h]$, function classes $\cG_{1:h}$, distributions over policies $P\ind{1:h}$, and a parameter $n\geq 1$ satisfying the following properties:
	\begin{itemize}
		\item The function classes $\cG_{1:h}$, where $\cG_t\subseteq  \{g: \cX_t\times \cA\rightarrow \reals\}$ for $t\in[h]$, realize the reward functions $r_{1:h}$ (in the sense of \Cref{def:funcrealize}), and all functions in $\cG_{1:h}$ have range uniformly bounded by $B$.
		\item 
		For each $1 \leq t \leq h$, it holds that $P\ind{t}$ is a $(\alpha,\veps)$-randomized policy cover for layer $t$ (see \cref{def:randcover}). 
	\end{itemize} 
	Then, with probability at least $1 - \delta$, the policy $\pihat = \psdp(h, r_{1:h}, \cG_{1:h}, P\ind{1:h}, n)$ produced by \cref{alg:PSDP} satisfies the following guarantee:
	\begin{align}
		\max_{\pi\in \Pim}  \E^{\pi}\left[\sum_{t=1}^{h} {r}_t(\x_t,\a_t)\right] \leq  \E^{\pihat}\left[\sum_{t=1}^{h} {r}_t(\x_t,\a_t)\right] + \veps_{\psdp}(n,\delta) + 2 \veps h d^{3/2} \cdot \sum_{t=1}^h B_t,
	\end{align}
	where $\veps_\psdp(n,\delta)\coloneqq c\cdot{}H  \sqrt{\alpha^{-1} B^2 A n^{-1}\cdot (\max_{t\in[h]}{\ln \cN_{\cG_t}(1/n)+\ln (n/\delta)})}$  and $c>0$ is an absolute constant.
\end{theorem}
\begin{proof}[{Proof of \cref{thm:psdp+}}]
	First, we define the $Q$-functions for the reward ${r}_{1:h}$; for all $t\in[h]$ and all $\pi\in \Pim^{t+1:h}$, define the $Q$-function at layer $t$ with respect to the rewards ${r}_{1:h}$ and partial policy $\pi$:
	\begin{align}
		\forall (x,a)\in \cX_t \times \cA,\quad	{Q}^{\pi}_t(x,a) \coloneqq {r}_t(x,a)+\E^{\pi}\left[\left.\sum_{\ell=t+1}^{h} {r}_\ell(\x_\ell,\a_\ell)\ \right| \ \x_t=x,\a_t=a\right]. \label{eq:extvsnot}
	\end{align}
	To begin, we will show that for any $t\in[h]$ and $\veps_\stat(\cdot,\cdot)$ as in \cref{lem:reg0}, there is an event $\cE_t$ of probability at least $1- \delta/H$ under which the learned partial policies $\pihat\ind{t},\pihat\ind{t+1}$ are such that
	\begin{align}
		\E^{\pi_\star} \left[{Q}^{\pihat\ind{t+1}}_t(\x_t,\pi_\star(\x_t))- {Q}^{\pihat\ind{t+1}}_t(\x_t, \pihat\ind{t}(\x_t))\right] \leq  2 \alpha^{-1/2} \veps_\stat(n,\tfrac{\delta}{H}) + 2 \veps d^{3/2} \cdot \sum_{\ell=1}^h B_\ell, \label{eq:new00+}
	\end{align}
	where $\pi_\star \in \argmax_{\pi \in \Pim}  \E^{\pi}[\sum_{t=1}^{h} r_t(\x_t,\a_t)]$ is the optimal policy. Once we establish \eqref{eq:new00+} for all $t\in[h]$, we will apply the performance difference lemma (\cref{lem:pdl} instantiated in the true MDP) and the union bound to obtain the desired result.  
	
	Let $\pi_\star \in \argmax_{\pi \in \Pim} \E^{\pi}[\sum_{\ell=1}^h r_\ell(\x_\ell,\a_\ell)]$. We have that for any $t\in[h]$ and $\cX_{t,\veps}\coloneqq \cX_{t,\veps}(\Pim)$, %
	\begin{align}
		&	\E^{\pi_\star} \left[{Q}^{\pihat\ind{t+1}}_t(\x_t,\pi_\star(\x_t))- {Q}^{\pihat\ind{t+1}}_t(\x_t, \pihat\ind{t}(\x_t))\right] \nn\\ 
	& = 	\E^{\pi_\star} \left[\mathbb{I}\{\x_t \in \cX_{t,\veps}\} \cdot  \left( {Q}^{\pihat\ind{t+1}}_t(\x_t,\pi_\star(\x_t))- {Q}^{\pihat\ind{t+1}}_t(\x_t, \pihat\ind{t}(\x_t))\right) \right] \nn\\  & \quad  +	\E^{\pi_\star} \left[\mathbb{I}\{\x_t \in \cX_t \setminus \cX_{t,\veps}\} \cdot  \left( {Q}^{\pihat\ind{t+1}}_t(\x_t,\pi_\star(\x_t))- {Q}^{\pihat\ind{t+1}}_t(\x_t, \pihat\ind{t}(\x_t))\right) \right]. \label{eq:lest}
	\end{align}
We now bound the last term in \eqref{eq:lest}. Note that by the range assumption on the rewards $r_{1:h}$ and the definition of the $Q$-function, we have ${Q}^{\pi}_t(x,a)\in [-\sum_{\ell=t}^h B_\ell, \sum_{\ell=t}^h B_\ell]$, for all $\pi \in \Pim^{t+1:h}$. Thus, we have   %
\begin{align}
	\E^{\pi_\star} \left[\mathbb{I}\{\x_t \in \cX_t \setminus \cX_{t,\veps}\} \cdot  \left( {Q}^{\pihat\ind{t+1}}_t(\x_t,\pi_\star(\x_t))- {Q}^{\pihat\ind{t+1}}_t(\x_t, \pihat\ind{t}(\x_t))\right) \right] &  \leq 2\P^{\pi_\star}[\x_t \in\cX_t \setminus \cX_{t,\veps}] \cdot \sum_{\ell=t}^h B_\ell, \nn  \\
	& \leq2 \veps \cdot d^{3/2} \cdot  \sum_{\ell=1}^h B_\ell,
	\label{eq:that}
\end{align}
where the last inequality follows by \cref{lem:reachable+}.
Plugging \eqref{eq:that} into \eqref{eq:lest} and using that $B_{1:h}\geq 0$ implies that
	\begin{align}
			&	\E^{\pi_\star} \left[{Q}^{\pihat\ind{t+1}}_t(\x_t,\pi_\star(\x_t))- {Q}^{\pihat\ind{t+1}}_t(\x_t, \pihat\ind{t}(\x_t))\right] - 2\veps d^{3/2}\cdot \sum_{\ell=1}^h B_\ell \nn\\ 
		& \leq 
		\E^{\pi_\star} \left[ \mathbb{I}\{\x_t \in \cX_{t,\veps}\} \cdot \left( {Q}^{\pihat\ind{t+1}}_t(\x_t,\pi_\star(\x_t)) - {Q}^{\pihat\ind{t+1}}_t(\x_t, \pihat\ind{t}(\x_t))\right) \right],\nn \\
		& = 
		\E^{\pi_\star} \left[ \mathbb{I}\{\x_t \in \cX_{t,\veps}\} \cdot \left({Q}^{\pihat\ind{t+1}}_t(\x_t,\pi_\star(\x_t))-\hat{g}\ind{t}(\x_t,\pi_\star(\x_t)) + \hat{g}\ind{t}(\x_t,\pi_\star(\x_t))- {Q}^{\pihat\ind{t+1}}_t(\x_t, \pihat\ind{t}(\x_t))\right)\right],\nn \\
		& \leq \E^{\pi_\star} \left[ \mathbb{I}\{\x_t \in \cX_{t,\veps}\} \cdot \left({Q}^{\pihat\ind{t+1}}_t(\x_t,\pi_\star(\x_t))-\hat{g}\ind{t}(\x_t,\pi_\star(\x_t)) + \hat{g}\ind{t}(\x_t,\pihat\ind{t}(\x_t))- {Q}^{\pihat\ind{t+1}}_t(\x_t, \pihat\ind{t}(\x_t))\right)\right],\nn
	\end{align}
	where the last inequality follows by the fact that $\pihat\ind{t}(x)\in \argmax_{a\in \cA} \ghat\ind{t}(x,a)$, for all $x\in \cX_t$, by the definition of $\pihat\ind{t}$ in  \eqref{eq:pol}. Continuing from the previous display, we have 
	\begin{align}	
		& \E^{\pi_\star} \left[{Q}^{\pihat\ind{t+1}}_t(\x_t,\pi_\star(\x_t))- {Q}^{\pihat\ind{t+1}}_t(\x_t, \pihat\ind{t}(\x_t))\right] - 2\veps d^{3/2}\cdot \sum_{\ell=1}^h B_\ell \nn \\	& \leq 2  \cdot \E^{\pi_\star} \left[\mathbb{I}\{\x_t \in \cX_{t,\veps}\} \cdot \max_{a\in \cA}\left| Q^{\pihat\ind{t+1}}_t(\x_t,a)-\hat{g}\ind{t}(\x_t,a)\right| \right],\nn \\
		& \leq 2  \cdot  \sqrt{\E^{\pi_\star} \left[\mathbb{I}\{\x_t \in \cX_{t,\veps}\} \cdot \max_{a\in \cA}\left( Q^{\pihat\ind{t+1}}_t(\x_t,a)-\hat{g}\ind{t}(\x_t,a)\right)^2 \right]}, \quad \text{(Jensen's inequality)}\nn \\
		& = 2 \sqrt{ \int_{\cX_t} \mathbb{I}\{x \in \cX_{t,\veps}\} \cdot \max_{a\in \cA}\left( Q^{\pihat\ind{t+1}}_t(x,a)-\hat{g}\ind{t}(x,a)\right)^2 d^{\pistar}(x) \dd\nu(x)},\nn \\
		& \leq 2 \sqrt{\frac{1}{\alpha}   \int_{\cX_t} \mathbb{I}\{x \in \cX_{t,\veps}\} \cdot \max_{a\in \cA}\left( Q^{\pihat\ind{t+1}}_t(x,a)-\hat{g}\ind{t}(x,a)\right)^2  \E_{\pi \sim P\ind{t}}[d^{\pi}(x)] \dd\nu(x)},  \label{eq:cover+} \ \ \text{(justified below)} \\
		& \leq 2 \sqrt{\frac{1}{\alpha} 	\E_{\pi \sim P\ind{t}}\left[ \int_{\cX_t} \max_{a\in \cA}\left( Q^{\pihat\ind{t+1}}_t(x,a)-\hat{g}\ind{t}(x,a)\right)^2 d^{\pi}(x) \dd\nu(x)\right]},\quad \text{(Fubini's theorem)}\nn\\
		& = 2\sqrt{ \frac{1}{\alpha}   \cdot	\E_{\pi \sim P\ind{t}}\E^{\pi} \left[\max_{a\in \cA}\left( {Q}^{\pihat\ind{t+1}}_t(\x_t,a)-\ghat\ind{t}(\x_t,a)\right)^2 \right]},
		\label{eq:preperf+}
	\end{align}
	where \eqref{eq:cover+} follows from the fact that $P\ind{t}$ is an $(\alpha,\veps)$-randomized policy cover for layer $t$.
	Now, from \cref{lem:reg0} and the fact that $\cG_{1:h}$ realize $r_{1:h}$, we have that for any $t\in[h]$, there is an absolute constant $c>0$ (independent of $t$ and other problem parameters) and an event $\cE_t$ of probability at least $1-\delta/H$ under which the solution $\ghat\ind{t}$ to the least-squares regression problem on \eqref{eq:mistake} of \cref{alg:PSDP} satisfies
	\begin{align}
		\E_{\pi \sim P\ind{t}}\E^{\pi} \left[\max_{a\in \cA}\left( Q^{\pihat\ind{t+1}}_t(\x_t,a)-\ghat\ind{t}(\x_t,a)\right)^2 \right]\leq  \veps_\stat(n,\tfrac{\delta}{H})^2, \label{eq:neverbefore00+}
	\end{align}
	where $\veps_\stat(\cdot,\cdot)^2$ is defined as in \cref{lem:reg0}. Combining \eqref{eq:neverbefore00+} with \eqref{eq:preperf+} establishes \eqref{eq:new00+} under the event $\cE_t$.
	
	To conclude the proof, we note that by the performance difference lemma (\cref{lem:pdl}), we have %
	\begin{align}
		& \E^{\pistar}\left[\sum_{t=1}^{h} r_t(\x_t,\a_t)\right] -  \E^{\pihat}\left[\sum_{t=1}^{h} r_t(\x_t,\a_t)\right]\nn \\ & = \sum_{t=1}^h	\E^{\pi_\star} \left[{Q}^{\pihat\ind{t+1}}_t(\x_t,\pi_\star(\x_t))- {Q}^{\pihat\ind{t+1}}_t(\x_t, \pihat\ind{t}(\x_t))\right].
	\end{align}
	Thus, under the event $\cE \coloneqq \bigcup_{t=1}^h\cE_t$, we have that 
	\begin{align}
		\E^{\pistar}\left[\sum_{t=1}^{h} r_t(\x_t,\a_t)\right] -  \E^{\pihat}\left[\sum_{t=1}^{h} r_t(\x_t,\a_t)\right]  \leq 2 H \alpha^{-1/2}  \veps_\stat(n,\tfrac{\delta}{H})  +2 \veps hd^{3/2}\cdot \sum_{t=1}^h B_t.
	\end{align}
	The desired result follows from the union bound, which gives $\P[\cE]\geq 1-\delta$.
\end{proof}

\section{Generic Guarantee for \spanner}
\label{sec:spanner}

In this section, we give a generic guarantee for the $\spanner$ algorithm when invoked with oracles $\apx$ and $\est$ satisfying the following assumption.
\begin{assumption}[$\apx$ and $\est$ as approximate Linear Optimization Oracles]
	\label{ass:spanner}
For some abstract set $\cZ$ and a collection of vectors $\{w^{z}\in \reals^d \mid z\in \cZ\}$ indexed by elements in $\cZ$, there exists $\veps'>0$ such that for any $\theta \in \reals^d\setminus\{0\}$ and $z\in \cZ$, the outputs $\hat z_{\theta} \coloneqq \apx(\theta/\|\theta\|)$ and $\hat w_z \coloneqq \est(z)$ satisfy 
\begin{align}
\sup_{z\in \cZ} \theta^\top w^{z}
 \leq 	\theta^\top w^{\hat z_\theta} +\veps' \cdot \|\theta\|,\quad \text{and} \quad 
  \|\hat w_z - w^{z}\| \leq \veps' .
	\end{align} 
\end{assumption}
Letting $\cW \coloneqq\{w^z \mid z\in \cZ\}$ and assuming that $\cW\subseteq \cB(1)$, the next theorem bounds the number of iterations of $\spanner(\apx(\cdot),\est(\cdot), \cdot,\cdot)$ under \cref{ass:spanner}, and shows that the output is an approximate barycentric spanner for $\cW$ (\cref{def:barycentricspanner}). Our result extends those of \citet{awerbuch2008online}, in that it only requires an \emph{approximate} linear optimization oracle, which is potentially of independent interest.
\begin{proposition}
\label{prop:spanner}
Fix $C>1$ and $\veps\in(0,1)$ and suppose that $\{w^z \mid z \in \cZ\}\subseteq \cB(1)$. If \spanner{} (\Cref{alg:spanner}) is run with parameters $C, \veps>0$ and oracles $\apx$, $\est$ satisfying \cref{ass:spanner} with $\veps'=\veps/2$, then it terminates after $d + \ceil{\frac{d}{2} \log_C\frac{100 d}{\veps^2}}$ iterations, and requires at most twice that many calls to each of $\apx$ and $\est$. Furthermore, the output $z_{1:d}$ has the property that for all $z\in \cZ$, there exist $\beta_{1},\dots,\beta_d\in[-C,C]$, such that 
\begin{align}
	\nrm*{w^z - \sum_{i=1}^d\beta_i w^{z_i}}\leq \frac{3Cd \cdot \veps}{2}. \label{eq:approxspanner}
	\end{align}
\end{proposition}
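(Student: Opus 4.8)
The plan is to adapt the classical barycentric spanner analysis of \citet{awerbuch2008online} to tolerate the additive errors controlled by $\veps'=\veps/2$ in \cref{ass:spanner}. The key structural idea in \spanner{} is that the algorithm manipulates a working matrix $W=(w_1,\dots,w_d)$ whose columns live in an $\veps$-fattening of $\cW$: each $w_i$ is obtained from an oracle-returned point $\wtilde{w}_i$ plus a perturbation $\pm\veps\,\theta_i/\|\theta_i\|$ in the direction $\theta_i=(\det(e_j,W_{-i}))_{j\in[d]}$, where $\theta_i$ is (up to sign) the cofactor vector satisfying $\theta_i^\top v = \det(v,W_{-i})$ for all $v$. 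The perturbation ensures that the linear functional $\theta_i^\top(\cdot)$ evaluated at the \emph{true} optimizer, plus the oracle slack $\veps'\|\theta_i\|$, is genuinely dominated by $\theta_i^\top w_i$, so the approximate oracle cannot cause the algorithm to overlook a direction in which $\cW$ is thin.

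First I would record the exact relation $\theta_i^\top v=\det(v,W_{-i})$ and the determinant multilinearity facts, then run the initialization loop (\cref{line:firstfor} to \cref{line:endif}) to guarantee $\lspan(W)=\reals^d$; here I must check that the fattened set spans, which holds because replacing $e_i$ by a vector with $|\det(w_i,W_{-i})|$ as large as the oracle permits keeps the determinant bounded away from zero by a quantity driven by the $\veps$-perturbation even if $\cW$ itself is degenerate. Next I would analyze the main replacement loop (\cref{line:for} onward): each successful swap replaces column $i$ by a new $w_i$ for which $|\det(w_i,W_{-i})| \geq C\cdot|\det(w_i^{\mathrm{old}},W_{-i})|$, hence $|\det W|$ grows by a factor $\geq C$ per swap. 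Combined with an upper bound $|\det W|\leq (1+\veps)^d \leq 2^d$ (since every column lies in $\cB(1+\veps)\subseteq\cB(2)$) and a lower bound $|\det W_{\mathrm{init}}|\gtrsim (\veps/\mathrm{poly}(d))^{?}$ on the initial determinant, telescoping the multiplicative increase yields the stated iteration bound $d+\lceil \tfrac{d}{2}\log_C\tfrac{100d}{\veps^2}\rceil$. The calls-to-oracles count is at most twice the iteration count because each loop pass makes one $\apx$/$\est$ call for $+\theta_i$ and one for $-\theta_i$.

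The termination guarantee \eqref{eq:approxspanner} is where the approximate-oracle bookkeeping concentrates, and I expect this to be the main obstacle. When the while-loop exits with $i=d+1$, we know that for every $i$ the swap condition \emph{failed}, i.e. $\theta_i^\top w_i^+ + \veps\|\theta_i\| < C|\det(w_i,W_{-i})|$ and the symmetric statement for $w_i^-$. Using \cref{ass:spanner}, $\theta_i^\top w_i^+ + \veps\|\theta_i\| \geq \theta_i^\top w^{\hat z^+} + (\veps-\veps')\|\theta_i\| \geq \sup_{z}\theta_i^\top w^{z} - \veps'\|\theta_i\| + (\veps-\veps')\|\theta_i\| = \sup_z \theta_i^\top w^z$, so the failure of both conditions gives $|\theta_i^\top w^z| = |\det(w^z,W_{-i})| < C|\det(w_i,W_{-i})|$ for all $z\in\cZ$ and all $i$. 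Expanding any $w^z$ in the basis $\{w_i\}$ as $w^z=\sum_i\beta_i w_i$ and applying Cramer's rule, $\beta_i = \det(w^z,W_{-i})/\det(W)$ up to sign, whence $|\beta_i|<C$. Finally I would convert the coefficients in terms of the \emph{perturbed} columns $w_i$ back to the \emph{returned} points $w^{z_i}$: since $\|w_i - w^{z_i}\|\leq \|w_i-\wtilde{w}_i\| + \|\wtilde{w}_i - w^{z_i}\|\leq \veps + \veps'$, the triangle inequality gives $\|w^z - \sum_i\beta_i w^{z_i}\|\leq \sum_i|\beta_i|\,\|w_i-w^{z_i}\| \leq C d(\veps+\veps') \leq \tfrac{3Cd\veps}{2}$, which is exactly \eqref{eq:approxspanner}. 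The delicate points I must handle carefully are the sign conventions in $\theta_i$ versus the cofactor expansion, and ensuring the initial determinant lower bound is large enough that the logarithm base-$C$ bound is valid — that is, that the fattened initialization cannot produce a vanishing $|\det W_{\mathrm{init}}|$, which is precisely what the $\veps$-perturbation step in \crefrange{line:if}{line:endif} secures.
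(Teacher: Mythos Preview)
Your treatment of the spanner property at termination (Part II) is correct and essentially matches the paper: the failed swap conditions combined with \cref{ass:spanner} and $\veps'=\veps/2$ yield $|\det(w^z,W_{-i})|\le C|\det(w_i,W_{-i})|$ for all $z,i$; Cramer's rule gives $|\beta_i|\le C$; and the triangle inequality with $\|w_i-w^{z_i}\|\le\veps+\veps'=3\veps/2$ delivers \eqref{eq:approxspanner}.

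For the iteration bound (Part I) you take a genuinely different route from the paper. You propose to bound $|\det W|$ above by $(1+\veps)^d$ and $|\det W_{\mathrm{init}}|$ below directly, whereas the paper never computes either quantity separately. Instead, following \citet{awerbuch2008online}, the paper introduces the matrix $U$ with rows $u_i=e_i^\top(M^{(i)})^{-1}$, shows $UW^{(0)}$ is upper-triangular with unit diagonal (so $\det(UW^{(0)})=1$), and bounds the entries of $UW^{(J)}$ via $|u_iw'|=|\ell_i(w')|/|\ell_i(w_i)|\lesssim 1/\veps$ for any $w'$ in the fattened set; this yields $|\det W^{(J)}|/|\det W^{(0)}|=|\det(UW^{(J)})|\le(100d/\veps^2)^{d/2}$ directly. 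Your direct route does work: the key inequality $\theta_i^\top w_i\ge\frac{\veps}{2}\|\theta_i\|$ (which the paper proves as its \cref{eq:spanner_eq1}, and which you correctly attribute to the $\veps$-perturbation in \crefrange{line:if}{line:endif}) combined with $\|\theta_i\|\ge|\theta_i[i]|=|\det M^{(i-1)}|$ gives the recursion $|\det M^{(i)}|\ge\frac{\veps}{2}|\det M^{(i-1)}|$, hence $|\det W^{(0)}|\ge(\veps/2)^d$ with no $\mathrm{poly}(d)$ loss in the base. This actually yields a slightly tighter iteration bound than the paper's, so your question-marked step can be filled in cleanly; just be aware that both routes hinge on the same fattening lemma $\theta_i^\top w_i\ge\frac{\veps}{2}\|\theta_i\|$, which you should prove explicitly.
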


\begin{proof}[\pfref{prop:spanner}]
	The proof will follows similar steps to those in \citet[Lemma 2.6]{awerbuch2008online}, with modifications to account for the fact that linear optimization over the set $\cW\coloneqq \{w^z \mid z\in \cZ\}$ is only performed approximately. 
	
\paragraph{Part I: Bounding the number of iterations}	
In \Cref{alg:spanner}, there are two loops, both of which require two calls to $\apx$ and $\est$ per iteration. As the first loop has exactly $d$ iterations, it suffices to bound the number of iterations in the second loop.

Let $M\ind{i} \coloneqq (w_1,\dots, w_{i}, e_{i+1}, \dots, e_d)$ be the matrix whose columns are the vectors at end of the $i$th iteration of the first loop (\cref{line:firstfor}) of \cref{alg:spanner}; note that columns $i+1$ through $d$ are unchanged at this point in the algorithm. For $i\in[d]$, we define $\ell_i(w) \coloneqq \det(w,M\ind{i}_{-i})$ and $\theta_i\coloneqq \big(\det\big(e_j, M\ind{i}_{-i}\big)\big)_{j\in [d]}\in \reals^d$, where we recall that for any matrix $A$, the matrix $A_{-i}$ is defined as the result of removing the $i$th column from $A$. Note that $\ell_i$ is linear in $w$, and in particular \[\ell_i(w)\coloneqq w^\top \theta_i.\]

Let $W\ind{0} \coloneqq  M\ind{d} = (w_1, \dots, w_d)$, and let $W\ind{j}$ denote the resulting matrix after $j$ iterations of the second loop (\Cref{line:for}) of \cref{alg:spanner}. We will show that for any $J\geq 1$,
\begin{align}
	\det(W\ind{J}) \leq \det(W\ind{0}) \cdot\left(  \frac{100 d}{\veps^2}\right)^{\frac d2}. \label{eq:fact}
\end{align}
By construction of the loop, we have $\det(W\ind{j}) \geq C \cdot \det(W\ind{j-1})$ for each $j \in[J]$, and thus $\det(W\ind{J}) \geq \det(W\ind{0}) \cdot C^J$. Combining these two facts will establish the bound on the iteration complexity. We now prove \eqref{eq:fact}.

Let $u_i = {e^\top_i}{\big(M\ind{i}\big)^{-1}}$ (note that $u_i$ is a \emph{row vector}) and let $U$ denote the matrix whose $i$th row is $u_i$. We observe that for all $w \in \rr^d$,
\begin{align}
	{u_i}{w} = \frac{\ell_i(w)}{\ell_i(w_i)},
\end{align}
 where we note that $\ell_i(w_i) \neq 0$ by construction; indeed, the columns of $M\ind{i}$ are a basis for $\rr^d$ because $\det(M\ind{i}) \neq 0$, and the equality holds on the columns, so the two linear functions must be equal. Now, since \cref{ass:spanner} holds with $\veps'=\veps/2$, we have
\begin{align}\label{eq:linopt_guarantee}
	{\theta^\top_i}{w_i^+} \geq \sup_{z \in \cZ} {\theta^\top_i}{w^z} - \frac \veps 2 \norm{\theta_i}, \quad \text{and} \quad {\theta^\top_i}{w_i^-} \leq \inf_{z \in \cZ} {\theta^\top_i}{w^z} + \frac \veps 2 \norm{\theta_i},
\end{align}
where $w_i^{\pm} = \est(z_i^{\pm})$. We will now show that
\begin{align}\label{eq:spanner_eq1}
	\ell_i(w_i) \geq \frac{\veps}{2} \cdot \norm{\theta_i}.
\end{align}
There are two cases. First, suppose that ${\theta^\top_i}{w_i^+} \geq - {\theta^\top_i}{w_i^-}$, corresponding to the conditional in \Cref{line:if} of \cref{alg:spanner} being satisfied. Combining this with \eqref{eq:linopt_guarantee}, we have
\begin{align}
	\theta_i^\top w_i^+ & \geq \left( \sup_{z\in \cZ} \theta_i^\top w^z -\frac{\veps}{2}\|\theta_i\| \right) \vee (-\theta_i^\top w_i^-),\nn \\
	& \geq \left( \sup_{z\in \cZ} \theta_i^\top w^z -\frac{\veps}{2}\|\theta_i\| \right)\vee \left( \sup_{z\in \cZ} -\theta_i^\top w^z -\frac{\veps}{2}\|\theta_i\| \right), \quad (\text{by \eqref{eq:linopt_guarantee}})\nn \\
	& = \left( \sup_{z\in \cZ} \theta_i^\top w^z \right)\vee \left( \sup_{z\in \cZ} -\theta_i^\top w^z \right) - \frac{\veps}{2}\|\theta_i\|,\nn \\
	& \geq - \frac{\veps}{2}\|\theta_i\|. \label{eq:lineabove}
\end{align}
Because the conditional is satisfied, $w_i = w_i^+ + \veps \cdot \frac{\theta_i}{\norm{\theta_i}}$, and so by plugging this into \eqref{eq:lineabove}, we have
\begin{align}
	\ell_i(w_i) = {\theta^\top_i}{w_i} \geq \frac{\veps}{2} \cdot \norm{\theta_i}.
\end{align}
The case that ${\theta^\top_i}{w_i^+} \leq - {\theta^\top_i}{w_i^-}$ is essentially identical, establishing \eqref{eq:spanner_eq1}. Now, recall that $\cW \coloneqq  \left\{ w^z \mid z \in \cZ \right\}$ and let $\cW \oplus \cB\left( \frac{3\veps}{2} \right) \coloneqq  \left\{ w + b \mid  w \in \cW \text{ and } b \in  \cB\left( \frac{3\veps}{2} \right) \right\}$ denote the Minkowski sum with $\cB\left( \frac{3\veps}{2} \right)$. By Cauchy-Schwarz,  it holds that for all $w' \coloneqq  w + b \in \cW \oplus \cB\left( \frac{3\veps}{2} \right)$,
\begin{align}
	\ell_i(w') = {\theta^\top_i}{w'} = {\theta^\top_i}{w} + {\theta^\top_i}{b} \leq \left( 1 + \frac{3 \veps}{2} \right) \cdot \norm{\theta_i},
\end{align}
where we used that $\cW \subseteq \cB(1)$ (by assumption). Thus, for any $w' \in \cW \oplus \cB\left( \frac{3\veps}2 \right)$, we have
\begin{align}
	\abs{{u_i}{w'}} = \frac{\ell_i(w')}{\ell_i(w_i)} \leq 1+\frac {3 \veps}{2 }.
\end{align}
We now observe that by construction and the fact that \cref{ass:spanner} holds with $\veps'=\veps/2$, the $k$th column $w_k'$ of $W\ind{J}$ belongs to $\cW \oplus \cB\left( \frac{3 \veps}{2} \right)$, for any $k\in[d]$. Thus, the $(i,k)$ entry ${u_i}{w_k'}$ of $U W\ind{J}$ satisfies ${u_i}{w_k'} \in \left[-1 - \frac {3 \veps}{2}, 1+ \frac{3 \veps}{2} \right]$, and so the columns of $U W\ind{J}$ have Euclidean norm at most $\frac{10 \sqrt{d}}{\veps}$. Since the magnitude of the determinant of a matrix is upper bounded by the product of the Euclidean norms of its columns, it holds that $\abs{\det(U W\ind{J})} \leq \left( \frac{100 d}{\veps^2} \right)^{\frac d2}$. 

On the other hand, again by construction, we see that the columns $w_1,\dots, w_d$ of $W\ind{0}$ satisfy ${u_i}{w_j}=0$, for $j<i$, and ${u_i}{w_i}=1$. Thus, $U {W}\ind{0}$ is an upper-triangular matrix with $1$s on the diagonal, and hence has determinant $1$. Because determinants are multiplicative, this implies that $\det(U) \neq 0$. We now compute:
\begin{align}
	\abs{\det(W\ind{J})} = \frac{\abs{\det(U W\ind{J})}}{\abs{\det(U)}} = \frac{\abs{\det(U W\ind{J})}}{\abs{\det(U W\ind{0})}} \leq \left( \frac{100 d}{\veps^2} \right)^{\frac d2}.
\end{align}
Thus, the upper bound on $\abs{\det(W\ind{J})}$ holds and the claim is proven. Therefore, we have
\begin{align}
	C^J \leq \left( \frac{100 d}{\veps^2} \right)^{\frac d2},
\end{align}
and so $J \leq \left\lceil \frac d2 \log_C\left( \frac{100 d}{\veps^2} \right)\right\rceil $.

\paragraph{Part II: Spanner property for the output} Having shown that the algorithm terminates, we now show that the result is an approximate barycentric spanner for $\cW$. Let $W\coloneqq (w_1, \dots, w_d)$ be the matrix at termination of the algorithm. By definition, if the second loop (\Cref{line:for}) has terminated, then for all $i\in[d]$,   
	\begin{align}
		\max(\theta_i^\top w_i^+, - \theta_i^\top w_i^-) +\veps \cdot \|\theta_i\| \leq C \cdot |\det(w_i,W_{-i})|, 
	\end{align}
	where $\theta_i = (\det(e_j, W_{-i}))_{j\in[d]}\in \reals^d$. On the other hand, by \cref{ass:spanner}, \eqref{eq:linopt_guarantee} holds, and so  
	\begin{align}
		\forall z\in\cZ, \forall i \in [d],\quad   |\det(w^z,W_{-i})|  = |\theta_i^\top w^z| & \leq 	\max(\theta_i^\top w_i^+, - \theta_i^\top w_i^-) +\veps \cdot \|\theta_i\|,\nn \\ &\leq C\cdot  |\det(w_i,W_{-i})|. \label{eq:bod}
	\end{align}
	Now, fix $z\in \cZ$. Since $\det(W) \neq 0$, there exist $\beta_{1:d}\in \reals$ such that $w^z= \sum_{i=1}^d \beta_i w_i$. By plugging this into \eqref{eq:bod} and using the linearity of the determinant, we have
	\begin{align}
		\forall i\in[d], \quad C\cdot  |\det(w_i,W_{-i})| \geq  |\det(w^z,W_{-i})|  = \left|\sum_{j=1}^d \beta_i \det(w_j,W_{-i})\right| = |\beta_i| \cdot |\det(w_i,W_{-i})|.
	\end{align}
	Therefore, $|\beta_i|\leq C$, for all $i\in[d]$. Now, by definition of $w_{1:d}$ and $\wtilde w_{1:d}$, for all $i\in[d]$, we have that $\|w_i -\wtilde w_i\|\leq \veps$. Furthermore, by \cref{ass:spanner}, we also have that $\|\wtilde w_i -w^{z_i}\|\leq \veps/2$. Therefore, by the triangle inequality, we have  
	\begin{align}
		\left\|w^z- \sum_{i=1}^d \beta_i w^{z_i}\right\| \leq 	\left\|w^z- \sum_{i=1}^d \beta_i  w_i\right\| + \sum_{i=1}^d|\beta_i| \| \wtilde w_i - w^{z_i} \| + \sum_{i=1}^d|\beta_i| \| \wtilde w_i - w_i \|  \leq 3d C \veps/2. \label{eq:arm}
	\end{align}
This completes the proof.
\end{proof}

\section{Generic Guarantee for \replearn}
\label{sec:replearn}
In this section, we give a generic guarantee for \replearn{} (\cref{alg:newreplearn}). Compared to previous guarantees in \citet{modi2021model,zhang2022efficient}, we prove a fast $1/n$-type rate of convergence for \replearn{}, and show that the algorithm succeeds even when the norm of the weight $w$ minimized over in \eqref{eq:rep} does not grow with the number of iterations. We also use the slightly simpler discriminator class:
\begin{align}
	\cF \coloneqq 
	\left\{ \left. f\colon x \mapsto \max_{a\in \cA}\theta^\top \phi(x,a)  \,  \right|   \, \theta\in \cB(1), \phi \in \Phi
	\right\}. \label{eq:func}
\end{align}
\begin{algorithm}[tp]
	\caption{$\replearn(h, \cF,\Phi ,P,n)$: Representation Learning for Low-Rank MDPs \citep{modi2021model}}
	\label{alg:newreplearn}
	\begin{algorithmic}[1]\onehalfspacing
		\Require~
		\begin{itemize}
			\item Target layer $h\in[H]$.
			\item Discriminator class $\cF$.
			\item Feature class $\Phi$.
			\item Policy distribution $P\in\Delta(\Pim)$. 
			\item Number of samples $n\in \mathbb{N}$.
		\end{itemize}
		\State Set $\veps_\stat =O(\sqrt{c d^2 n^{-1}\ln (|\Phi|/\delta)})$ for sufficiently absolute constant $c>0$ (see \cref{sec:replearn}).
		\State Let $\phi\ind{1}\in \Phi$ be arbitrary.   
		\State Set $\cD \gets\emptyset$.
		\For{$n$ times}
		\State Sample $\bpi\sim P$.
		\State Sample $(\x_h, \a_h, \x_{h+1})\sim
		\bpi\circ_{h} \pi_{\unif}$.
		\State Update dataset: $\cD \gets \cD \cup \left\{\left(\x_h, \a_h, \x_{h+1}\right)\right\}$.
		\EndFor
		\State Define $\cL_{\cD}(\phi, w, f) =  \sum_{(x, a, x')\in\cD} (\phi(x,a)^\top w  - f(x'))^2  $.
		\For{$t=1,2, \dots$}
		\Statex~~~~~\algcommentbiglight{Discriminator selection}
		\State Solve  \begin{align} 
			f\ind{t} \in \argmax_{f\in \cF}
                               \what\Delta(f),\ \ \text{where}\ \
                               \what\Delta(f)\coloneqq \max_{\tilde
                               \phi\in \Phi} \left\{ \min_{w \in \cB({3} d^{3/2})}
                               \cL_{\cD}(\phi\ind{t}, w, f)  -
                               \min_{\tilde w\in \cB(2\sqrt{d})} \cL_{\cD}(\tilde \phi,
                               \tilde w, f)  \right\}.\label{eq:rep}
                             \end{align}\label{line:replearn}
		\If{$\what\Delta(f\ind{t})\leq 16 d t\veps_\stat^2$}
		\State Return $\phi\ind{t}$.
		\EndIf
		\Statex~~~~~\algcommentbiglight{Feature selection via least-squares minimization}
		\State \label{line:nonadvers}Solve 
		\begin{align}
			\phi\ind{t+1}\in \argmin_{\phi\in \Phi} \min_{(w_{1},\dots,w_{t})\in \cB(2\sqrt{d})^t} \sum_{\ell=1}^t \cL_{\cD}(\phi,w_\ell,f\ind{\ell}). \label{eq:phipayer}
			\end{align}
		\EndFor
	\end{algorithmic}
\end{algorithm}
 
The main guarantee for \replearn{} is as follows. %
\begin{theorem}
	\label{thm:newreplearn}
	Let $h\in [H]$, $\delta \in(0,e^{-1})$, and $n\in\mathbb{N}$ be given, and suppose that $\muh$ satisfies the normalization assumption in \cref{eq:normalization}. 
	For any function $f \in \cF$, define
	\begin{align}
		w_f &= \int_{\cX_{h+1}} f(x) \mustar_{h+1}(x) \dd \nu(x).
	\end{align}
        Let $P\in \Delta(\Pim)$ be a distribution over policies, $\cF$ be as \eqref{eq:func}, and
	$\Phi$ be a feature class satisfying \cref{assum:real}. With probability at least $1 - \delta$, \replearn{} with input $(h, \cF, \Phi, P, n)$ terminates after $t\leq  T\coloneqq  \ceil*{d \log_{{3}/{2}} (2n d^{-1/2})}$ iterations, and its output $\phi\ind{t}$ satisfies 
	\begin{align}
		\sup_{f\in \cF} \inf_{w \in \cB(3d^{3/2})} \E_{\pi\sim P}  \E^{\pi\circ_h \pi_\unif}\left[\left(w^\top \phi\ind{t}(\x_{h},\a_{h})- w_f^\top \phi_h^{\star}(\x_h,\a_h) \right)^2\right] \leq \veps_\replearn^2(n,\delta),\label{eq:replearnmain}
	\end{align}
where $\veps_\replearn^2(n,\delta) \coloneqq c T d^3 n^{-1} \log
(|\Phi|/\delta)$, for some sufficiently large absolute constant $c>0$.
\end{theorem}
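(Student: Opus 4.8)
The plan is to reduce \eqref{eq:replearnmain} to controlling, for the returned feature $\phi\ind{t}$, the \emph{population advantage} of every discriminator $f\in\cF$, and then to bound the number of iterations by a determinantal potential argument. The starting point is the low-rank structure: for any $f$, the regression function $(x,a)\mapsto\En[f(\x_{h+1})\mid \x_h=x,\a_h=a]$ equals $w_f^\top\phistarh(x,a)$ with $w_f=\int f\,\mustar_{h+1}\,\dd\nu$, and \eqref{eq:normalization} gives $w_f\in\cB(2\sqrt d)$ (writing $f=f_+-f_-$ with $f_\pm\in[0,1]$). Defining the population losses $L(\phi,w,f)=\En_{\pi\sim P}\En^{\pi\circ_h\piunif}[(\phi(\x_h,\a_h)^\top w-f(\x_{h+1}))^2]$, realizability ($\phistarh\in\Phi$) together with $w_f\in\cB(2\sqrt d)$ shows that $\min_{\tilde\phi,\tilde w}L(\tilde\phi,\tilde w,f)$ is attained at $(\phistarh,w_f)$ and equals the Bayes risk, so by a bias--variance decomposition the population advantage $\Delta(f):=\min_{w\in\cB(3d^{3/2})}L(\phi\ind{t},w,f)-\min_{\tilde\phi,\tilde w}L(\tilde\phi,\tilde w,f)$ equals exactly the target excess risk $\inf_{w\in\cB(3d^{3/2})}\En[(w^\top\phi\ind{t}-w_f^\top\phistarh)^2]$ in \eqref{eq:replearnmain}. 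Thus it suffices to show $\sup_{f\in\cF}\Delta(f)\le\veps_\replearn^2$.

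The statistical step is a uniform convergence argument with a fast ($1/n$), realizable-squared-loss rate: with probability at least $1-\delta$, simultaneously over all $f\in\cF$ and all features in $\Phi$, the empirical functional $\hat\Delta(f)$ and its population counterpart $\Delta(f)$ differ by at most a constant factor plus an additive $O(\veps_\stat^2)$. Covering numbers enter only through $\log|\Phi|$ (for $\Phi$ and for the $\Phi$-parameterized class $\cF$) and the $d$-dimensional balls, and the scale $\|w\|\le 3d^{3/2}$ accounts for the $d^2$ factor in $\veps_\stat^2=O(d^2 n^{-1}\log(|\Phi|/\delta))$. Granting this, at the terminating iteration $t$ the stopping rule $\hat\Delta(f\ind{t})\le 16dt\veps_\stat^2$ combined with $f\ind{t}\in\argmax_f\hat\Delta(f)$ forces $\hat\Delta(f)\le 16dt\veps_\stat^2$ for \emph{all} $f\in\cF$, and transferring to the population yields $\sup_f\Delta(f)\lesssim dT\veps_\stat^2=\veps_\replearn^2$, as desired.

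The main obstacle is the iteration bound $t\le T=\lceil d\log_{3/2}(2nd^{-1/2})\rceil$, where the improved bounded-weight analysis of \cref{rem:replearn} is needed; I would run this argument at the empirical level so the population transfer is invoked only once, at termination. Non-termination at step $\ell$ means $\hat\Delta(f\ind{\ell})>16d\ell\veps_\stat^2$, i.e.\ on $\cD$ some feature in $\Phi$ predicts the labels $f\ind{\ell}(\x_{h+1})$ substantially better than the incumbent $\phi\ind{\ell}$; since $\phi\ind{\ell}$ was selected to jointly fit $f\ind{1},\dots,f\ind{\ell-1}$, this certifies that the regression direction $w_{f\ind{\ell}}$ is not yet explained by the earlier discriminators in the empirical geometry of $\phistarh$ (which lies in $\Phi$). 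Setting $u_\ell=\wh{M}^{1/2}w_{f\ind{\ell}}$ with $\wh{M}$ the empirical second moment of $\phistarh$ on $\cD$ and $\Sigma_\ell=\lambda I_d+\sum_{j\le\ell}u_ju_j^\top$, this lower bound translates into $u_\ell^\top\Sigma_{\ell-1}^{-1}u_\ell\ge\tfrac12$, so the matrix determinant lemma gives $\det\Sigma_\ell\ge\tfrac32\det\Sigma_{\ell-1}$; since the $u_j\in\bbR^d$ are bounded (using $w_f\in\cB(2\sqrt d)$ and \eqref{eq:normalization}), $\det\Sigma_T$ is bounded above and the elliptical potential lemma caps the count at $d\log_{3/2}(\cdot)$. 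The two delicate points are (i) extracting a constant multiplicative determinant gain per step while holding the weight radius fixed at $\poly(d)$ — precisely the refinement over the $\poly(\veps^{-1})$-radius weights of prior work, which is what produces the $1/n$ rate — and (ii) making the single population transfer uniform over the data-dependent sequence $f\ind{1},\dots,f\ind{T}$, handled by a union bound over $\cF$ and over the at most $T$ iterations.
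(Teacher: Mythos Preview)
Your overall skeleton---reducing \eqref{eq:replearnmain} to bounding $\sup_{f}\Delta(f)$ via the bias--variance identity, invoking uniform convergence of the realizable-squared-loss type to transfer between $\hat\Delta$ and $\Delta$, and then using a determinantal potential to cap the iteration count---is correct and is what the paper does (it outsources the statistical transfer to \citep[Lemma 14]{modi2021model}, packaged in \cref{lem:notstopper}). The gap is in the potential argument you sketch for the iteration bound.

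The step ``this lower bound translates into $u_\ell^\top\Sigma_{\ell-1}^{-1}u_\ell\ge\tfrac12$'' does not go through with $\Sigma_{\ell-1}=\lambda I+\sum_{j<\ell}u_ju_j^\top$. The contrapositive you need is: if $u_\ell^\top\Sigma_{\ell-1}^{-1}u_\ell<\tfrac12$ then some $w\in\cB(3d^{3/2})$ makes the $\phi\ind{\ell}$-error small. The only handle you have on $\phi\ind{\ell}$ is that, for each $j<\ell$, there is $\hat w_j\ind{\ell}\in\cB(2\sqrt d)$ with $\hat w_j^{(\ell)\top}\phi\ind{\ell}\approx w_{f\ind{j}}^\top\phistarh$; so any candidate $w$ must be a linear combination $\sum_{j<\ell}\beta_j\hat w_j\ind{\ell}$. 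But the ridge/elliptical condition controls only $\|\beta\|_2$ (through $\lambda$), while $\|\sum_{j<\ell}\beta_j\hat w_j\ind{\ell}\|\le \|\beta\|_1\cdot 2\sqrt d\le \sqrt{\ell-1}\,\|\beta\|_2\cdot 2\sqrt d$. That $\sqrt{\ell}$ factor is fatal: you are trying to \emph{prove} $\ell\lesssim d\log(\cdot)$, so you cannot assume it, and without it the combination need not land in $\cB(3d^{3/2})$. Thus non-termination does not force the elliptical increment to be large.

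The paper's fix is to replace the elliptic potential over \emph{all} past directions by a barycentric-spanner-style potential over a $d$-element subset. It maintains a matrix $V\ind{\ell-1}=(v_1,\dots,v_d)$, initialized as $v_i=\tfrac{\veps}{d}e_i$, and at each step swaps $w\ind{\ell}\coloneqq w_{f\ind{\ell}}$ into some column whenever this increases $|\det V|$ by a factor exceeding $3/2$. If no such swap is possible, Cramer's rule gives $w\ind{\ell}=\sum_{i=1}^d\beta_i v_i\ind{\ell-1}$ with $|\beta_i|\le 3/2$. Restricting to the indices $i$ with $v_i\ind{\ell-1}=w\ind{\ell_i}$ for some $\ell_i<\ell$ (the remaining columns contribute at most $(1+C)\veps$), one sets $\hat w\coloneqq\sum_i\beta_i\hat w_{\ell_i}\ind{\ell}$, which has at most $d$ terms and hence $\|\hat w\|\le d\cdot\tfrac32\cdot 2\sqrt d=3d^{3/2}$---exactly the algorithm's radius. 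Combining with the ``fits previous discriminators'' guarantee (the analogue of your \eqref{eq:firxt}) shows the $\phi\ind{\ell}$-error at $\hat w$ is $\le 8d\ell\veps_\stat^2$, contradicting the non-termination condition. Hence every non-terminating step forces a swap, the determinant grows by $3/2$ per swap, and the usual volume bound caps the count at $d\log_{3/2}(2d^{3/2}/\veps)\le T$. The barycentric construction, not an elliptical potential, is precisely what delivers bounded coefficients with only $d$ terms and makes the fixed radius $3d^{3/2}$ (and therefore the $1/n$ rate) work.
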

To prove the theorem, we need a technical lemma, which follows from \citet[Lemma 14]{modi2021model}.
\begin{lemma}
	\label{lem:notstopper}
	Consider a call to $\replearn(h, \cF, \Phi, P, n)$ (\cref{alg:newreplearn}) in the setting of \cref{thm:newreplearn}. Further, let $\cL_\cD$ be as in \cref{alg:newreplearn} and define
	\begin{align}
		(\phi\ind{t},\what w\ind{t}_1,\dots, \what w\ind{t}_{t-1})\in \argmin_{\phi\in \Phi,(w_{1},\dots,w_{t-1})\in \cB(2\sqrt{d})^{t-1}} \sum_{\ell=1}^{t-1} \cL_{\cD}(\phi,w_\ell,f\ind{\ell}). \label{eq:what}
	\end{align}
 For any $\delta \in(0,1)$, there is an event $\cE\ind{t}(\delta)$ of probability at least $1-\delta$ such that under $\cE\ind{t}(\delta)$, if \cref{alg:newreplearn} does not terminate at iteration $t\geq 1$, then for $w\ind{\ell}\coloneqq w_{f\ind{\ell}}$:
	\begin{align}
		\sum_{\ell =1}^{t-1} \E_{\pi\sim P}  \E^{\pi\circ_h \pi_\unif}\left[\left( \phi\ind{t}(\x_{h},\a_{h})^\top \what w\ind{t}_\ell - \phi_h^{\star}(\x_h,\a_h)^\top  w\ind{\ell} \right)^2\right] &\leq  t \veps_\stat^2(n,\delta), \label{eq:firxt} \\  	\inf_{w \in \frac{3}{2}\cB(d^{3/2})} \E_{\pi\sim P}  \E^{\pi\circ_h \pi_\unif}\left[\left( \phi\ind{t}(\x_{h},\a_{h})^\top w-  \phi_h^{\star}(\x_h,\a_h)^\top w\ind{t} \right)^2\right] & > 8 d t\veps_\stat^2(n,\delta), \label{eq:xecond}
	\end{align}
where $\veps^2_\stat(n,\delta)\coloneqq c d^2 n^{-1}\ln
(|\Phi|/\delta)$ and $c\geq1$ is a sufficiently large absolute constant.
\end{lemma}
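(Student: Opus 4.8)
The plan is to reduce both displays to a single uniform fast-rate least-squares concentration bound over $\Phi$ and the relevant weight balls, following \citet[Lemma 14]{modi2021model}: \eqref{eq:firxt} and \eqref{eq:xecond} are the two directions (``empirically small $\Rightarrow$ population small'' and ``empirically large $\Rightarrow$ population large'') of that bound. First I would record the structural identity that drives everything. By the low-rank factorization \eqref{eq:transition_factor}, for any $f\in\cF$ the conditional mean $(x,a)\mapsto \E[f(\x_{h+1})\mid \x_h=x,\a_h=a]$ equals $\phistarh(x,a)^\top w_f$ with $w_f=\int_{\cX_{h+1}} f(y)\mustar_{h+1}(y)\dd\nu(y)$. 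Writing $f=f_+-f_-$ with $f_\pm\in[0,1]$ and invoking the normalization \eqref{eq:normalization} gives $w_f\in\cB(2\sqrt d)$, so under realizability (\cref{assum:real}) the pair $(\phistarh,w_f)$ is feasible for the inner minimizations in both \cref{line:nonadvers} and \cref{line:+replearn}. Setting $g^\star_\ell(x,a)\coloneqq\phistarh(x,a)^\top w\ind{\ell}$ (the Bayes predictor of the target $f\ind{\ell}(\x_{h+1})$ under the roll-in $\pi\circ_h\piunif$, $\pi\sim P$), I define $\cE\ind{t}(\delta)$ to be the event on which, simultaneously over all $\phi\in\Phi$, all weights in the relevant balls, and the finitely many discriminators $f\ind{1},\dots,f\ind{t}$, the empirical excess loss $\cL_\cD(\phi,w,f\ind{\ell})-\cL_\cD(\phistarh,w\ind{\ell},f\ind{\ell})$ is two-sidedly comparable to $n$ times the population prediction error $\E_{\pi\sim P}\E^{\pi\circ_h\piunif}[(\phi(\x_h,\a_h)^\top w-g^\star_\ell(\x_h,\a_h))^2]$, up to additive slack governed by $\veps_\stat$. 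This is produced by expanding the excess loss as the nonnegative term $\sum_i(\phi^\top w-g^\star_\ell)^2(z_i)$ minus twice a conditionally mean-zero cross term $\sum_i(\phi^\top w-g^\star_\ell)(z_i)\,\xi_{i,\ell}$ (with $z_i=(\x_h^{(i)},\a_h^{(i)})$ and $\xi_{i,\ell}=f\ind{\ell}(\x_{h+1}^{(i)})-g^\star_\ell(z_i)$ bounded), then controlling the cross term by a Bernstein/Freedman inequality after covering the $d$-dimensional weight balls and union-bounding over $\Phi$ and the $f\ind{\ell}$; this is exactly where the $d^2 n^{-1}\ln(|\Phi|/\delta)$ form of $\veps_\stat^2$ originates.

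Granting $\cE\ind{t}(\delta)$, inequality \eqref{eq:firxt} is the \emph{good-fit} direction. The estimator \eqref{eq:what} minimizes $\sum_\ell\cL_\cD(\phi,w_\ell,f\ind{\ell})$ over $\phi\in\Phi$ and $(w_1,\dots,w_{t-1})\in\cB(2\sqrt d)^{t-1}$, and $(\phistarh,w\ind{1},\dots,w\ind{t-1})$ is feasible, so the estimator's empirical excess loss (summed over $\ell$) is at most zero; converting empirically-small to population-small through $\cE\ind{t}(\delta)$ yields the summed-prediction-error bound $t\veps_\stat^2$, which is \eqref{eq:firxt}.

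Inequality \eqref{eq:xecond} is the \emph{misfit} direction and uses the non-termination hypothesis. Since the first term of $\what\Delta(f)$ is independent of $\tilde\phi$, we have $\what\Delta(f)=\min_{w\in\cB(3d^{3/2})}\cL_\cD(\phi\ind{t},w,f)-\min_{\tilde\phi\in\Phi,\tilde w\in\cB(2\sqrt d)}\cL_\cD(\tilde\phi,\tilde w,f)$, and $(\phistarh,w\ind{t})$ is feasible for the second minimization. Non-termination ($\what\Delta(f\ind{t})>16dt\veps_\stat^2$) thus says the best $\phi\ind{t}$-fit beats the best two-argument fit by more than $16dt\veps_\stat^2$; and under $\cE\ind{t}(\delta)$ the offset bound shows no predictor can beat the Bayes empirical loss $\cL_\cD(\phistarh,w\ind{t},f\ind{t})$ by more than the statistical slack, so the best two-argument fit is at least Bayes loss minus that slack. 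Combining (and absorbing the slack into the margin, which halves $16$ to $8$) forces $\min_{w\in\cB(3d^{3/2})}\cL_\cD(\phi\ind{t},w,f\ind{t})$ to exceed $\cL_\cD(\phistarh,w\ind{t},f\ind{t})$ by a large margin. Every $w$ in the smaller ball $\tfrac32\cB(d^{3/2})\subseteq\cB(3d^{3/2})$ inherits this empirical excess pointwise, and passing back through $\cE\ind{t}(\delta)$ converts it into the population lower bound $>8dt\veps_\stat^2$ uniformly over $\tfrac32\cB(d^{3/2})$, giving \eqref{eq:xecond}.

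I expect the \textbf{main obstacle} to be the uniform fast-rate concentration underlying $\cE\ind{t}(\delta)$: obtaining a $1/n$ rate (rather than $1/\sqrt n$) requires the localized/offset Bernstein control of the mean-zero cross term, carried out uniformly over $\Phi$ and the continuous weight balls, with constants tracked tightly enough that the empirical threshold $16dt\veps_\stat^2$ and the population targets $t\veps_\stat^2$ and $8dt\veps_\stat^2$ remain consistent (in particular so that the slack-absorption leaving the factor $8$ goes through). This is precisely the content imported from \citet[Lemma 14]{modi2021model}; the remaining work is the bookkeeping needed to accommodate our slightly simpler discriminator class \eqref{eq:func} and the specific radii $\cB(3d^{3/2})$ and $\cB(2\sqrt d)$.
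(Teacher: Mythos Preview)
Your proposal is correct and matches the paper's approach: the paper gives no proof at all for this lemma, merely stating that it ``follows from \citet[Lemma 14]{modi2021model},'' and your sketch accurately unpacks what that citation entails---the Bayes-predictor identity via the low-rank structure and normalization, and the two-sided fast-rate excess-risk bound that converts the ERM optimality in \eqref{eq:what} into \eqref{eq:firxt} and the non-termination threshold into \eqref{eq:xecond}.

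One small tightening: when you define $\cE\ind{t}(\delta)$ you say the concentration is taken over ``the finitely many discriminators $f\ind{1},\dots,f\ind{t}$.'' These are data-dependent (chosen adaptively from $\cD$), so the uniform bound must be established over the entire class $\cF$ (via a covering of $\theta\in\cB(1)$ together with a union over $\Phi$), after which it specializes to the realized $f\ind{\ell}$. This is how the $d^2\ln(|\Phi|/\delta)$ dependence in $\veps_\stat^2$ arises in \citet{modi2021model}, and it is consistent with the rest of your sketch; just make the quantifier explicit.
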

With this, we prove \cref{thm:newreplearn}.
\begin{proof}[Proof of \cref{thm:newreplearn}]
	Let us abbreviate $\veps \coloneqq \veps_\stat(n,\delta)$,
        with $\veps_\stat(n,\delta)$ defined as in \cref{lem:notstopper}. Further, let $N\coloneqq 1+ \ceil*{d \log_{{3}/{2}} (2d^{3/2}/\veps)}$, $\delta' \coloneqq \frac{\delta}{2N}$, and define 
	\begin{align}
		\tilde\veps_{\stat} \coloneqq \veps_\stat(n,\delta'). \label{eq:tildeeps}
		\end{align}
	 Note that $\veps \leq \tilde\veps_\stat$ and $N -1 \leq T$, where $T$ is the number of iterations in the theorem statement; the latter inequality follows by the facts that the absolute constant $c$ in \cref{lem:notstopper} is at least $1$ and $\ln (|\Phi|/\delta)\geq1$. We define an event $\cE\coloneqq \cE\ind{1}(\delta')\cap \dots \cap \cE\ind{N}(\delta')$, where $(\cE^{t}(\cdot))_t$ are the success events in \cref{lem:notstopper}. Note that $\P[\cE]\geq 1 - \delta/2$ by the union bound. Throughout this proof, we condition on the event $\cE$. 
	
         To begin the proof, we define a sequence of vectors $(v_{1:d}\ind{\ell})_{\ell\geq 0}$ in an inductive
         fashion, with $v_{i}\ind{\ell}\in\bbR^{d}$ for all
         $i\in\brk{d}$ and $\ell\geq{}0$. For $\ell=0$, we let
           $v_{i}\ind{0} = \veps e_i/d$, for all $i\in[d]$. For
           $\ell\geq 1$, we consider two cases:
           \begin{itemize}
           \item \textbf{Case I:} If
             \begin{align}
               \cJ\ind{\ell} \coloneqq	 \left\{j \in[d] \ \left| \  |\det(V_{-j}\ind{\ell-1}, w\ind{\ell})|>(1+C)\cdot |\det(V\ind{\ell-1})| \right. \right\} \neq \emptyset,
             \end{align}
             where
             $V\ind{\ell-1}\coloneqq (v_1\ind{\ell-1},\dots,
             v_d\ind{\ell-1})\in \reals^{d\times d}$ and
             $w\ind{\ell}\ldef{}w_{f\ind{\ell}}$, then we let
             $j\coloneqq \argmin_{j'\in\cJ\ind{\ell}}j'$ and define
             \begin{align}
               v_i\ind{\ell} \coloneqq \left\{ \begin{array}{ll}  w\ind{\ell} , & \text{if } i=j, \\ v_i\ind{\ell-1}, &  \text{otherwise}. \end{array} \right.
             \end{align}
           \item \textbf{Case II}: If $\cJ\ind{\ell}=\emptyset$, we let
             $v_i\ind{\ell} = v_i\ind{\ell-1}$, for all $i\in[d]$.
           \end{itemize}
	
	We first show that $\cJ\ind{t}\neq\emptyset$ at any iteration $t\in[N]$ where \replearn{} does not terminate. Let $t\in[N]$ be an iteration where the algorithm does not terminate, and suppose that $\cJ\ind{t}=\emptyset$. This means that 
	\begin{align}
		\forall j\in[d] ,\quad  |\det(V_{-j}\ind{t-1}, w\ind{t})|\leq (1+C)\cdot |\det(V\ind{t-1})|. \label{eq:detguar}
	\end{align}
	Now, since $\det(V\ind{t-1})\neq 0$ ({note that
          $\abs*{\det(V\ind{t})}$ is non-decreasing with $t$}), we have
        that $\mathrm{span}( V\ind{t-1})= \reals^d$. Thus, there exist
        $\beta_1,\dots, \beta_d\in\bbR$ be such that $w\ind{t}=
        \sum_{i=1}^d \beta_i  v\ind{t-1}_i$. By the linearity of the
        determinant and \eqref{eq:detguar}, we have 
	\begin{align}
		\forall j \in[d], \quad 	(1+C)|\cdot \det(V\ind{t-1})|  & \geq  |\det(V_{-j}\ind{t-1}, w\ind{t})|,\nn \\
		& =  \left|\det\left(V_{-j}\ind{t-1},  \sum_{i=1}^d \beta_i  v\ind{t-1}_i \right)\right|,\nn \\
                 &= \abs*{\sum_{i\in[d]} \beta_i\cdot  \det(V_{-j}\ind{t-1}, v_i\ind{t-1})},\nn \\
		& = |\beta_j| \cdot |\det(V\ind{t-1})|.
	\end{align}
	This implies that $|\beta_j|\leq (1+C)$ for all
        $j\in[d]$. Now, note that by the definition of $(v_i\ind{t-1})$, we have that for any $i\in[d]$ such that $v_i\ind{t-1}\neq \veps e_i/d$, there exists $\ell\in [t-1]$ such that $w\ind{\ell}= v_i\ind{t-1}$. Let \[\cI\ind{t}\coloneqq \{i\in[d]\mid   v_i\ind{t-1}\neq \veps e_i/d\},\] and for any $i\in\cI\ind{t}$, let $\ell_i\in[t-1]$ be such that $w\ind{\ell_i}= v_i\ind{t-1}$. Further, define 
	\begin{align}
		\wtilde{w}\ind{t} \coloneqq  \sum_{i\in\cI\ind{t}} \beta_{i} w\ind{\ell_i}= \sum_{i\in\cI\ind{t}} \beta_{i} v_i\ind{t-1},\label{eq:wtilt}
	\end{align}
	and note that by the triangle inequality and the fact that $w\ind{t}=\sum_{i=1}^d \beta_i v_i\ind{t-1}$, we have 
	\begin{align}
		\|\wtilde w\ind{t}- w\ind{t}\|\leq (1+C)\veps_\stat. \label{eq:thisee}
	\end{align}
	Finally, with the notation in \eqref{eq:what}, define
	\begin{align}
		\what w\ind{t}_t & \coloneqq \sum_{i\in\cI\ind{t}}  \beta_i \what{w}\ind{t}_{\ell_i},\label{eq:whatt}
		\shortintertext{and note that}
		\what w\ind{t}_t &\in (1+C) \cB(2d^{3/2}), \label{eq:contained}
	\end{align}
	since $|\beta_i| \leq (1+C)$ for all $i\in[d]$, $|\cI\ind{t}|\leq  d$, and $\what w\ind{t}_{\ell}\in \cB(2\sqrt{d})$, for all $\ell\in[t-1]$. Now, by \cref{lem:notstopper}, in particular \eqref{eq:firxt}, we have 
	\begin{align}
		\sum_{i\in \cI\ind{t}}	 \E_{\pi\sim P}  \E^{\pi\circ_h \pi_\unif}\left[\left( \phi\ind{t}(\x_{h},\a_{h})^\top \what w\ind{t}_{\ell_i} - \phi_h^{\star}(\x_h,\a_h)^\top  w\ind{\ell_i} \right)^2\right] &\leq t \tilde \veps_\stat^2, \label{eq:guar}
	\end{align} 
where $\tilde \veps_\stat$ is as in \eqref{eq:tildeeps}. Using the
expressions in \cref{eq:wtilt,eq:whatt} with \eqref{eq:guar} and Jensen's inequality, we have that under $\cE\ind{t}$,
	\begin{align}
		& \E_{\pi\sim P}  \E^{\pi\circ_h \pi_\unif}\left[\left( \phi\ind{t}(\x_{h},\a_{h})^\top \what w\ind{t}_{t} - \phi_h^{\star}(\x_h,\a_h)^\top  \wtilde w\ind{t} \right)^2\right]\\  & \leq  \left(\sum_{j\in \cI\ind{t}}  |\beta_j|\right) \cdot \sum_{i\in\cI\ind{t}}  \E_{\pi\sim P}  \E^{\pi\circ_h \pi_\unif}\left[\left( \phi\ind{t}(\x_{h},\a_{h})^\top \what w\ind{t}_{\ell_i} - \phi_h^{\star}(\x_h,\a_h)^\top  w\ind{\ell_i} \right)^2\right] ,\nn \\
		& \leq (1+C) d t \tilde\veps_\stat^2.
	\end{align}
	Now, using \eqref{eq:thisee} and the facts that $(a+b)^2 \leq 2a^2 + 2 b^2$ and $\|\phi^\star_h\|_{2}\leq 1$, we have that 
	\begin{align}
		\E_{\pi\sim P}  \E^{\pi\circ_h \pi_\unif}\left[\left( \phi\ind{t}(\x_{h},\a_{h})^\top \what w\ind{t}_{t} - \phi_h^{\star}(\x_h,\a_h)^\top  w\ind{t} \right)^2\right] & \leq 2(1+C)^2 \veps^2 + 2(1+C)dt \tilde\veps_\stat^2,\nn \\
		& \leq 2(1+C)^2 \tilde\veps^2_{\stat} + 2(1+C)dt \tilde\veps_\stat^2.
	\end{align}
	Using that $C=1/2$, we conclude that the right-hand side of this inequality is bounded by $8 d t\tilde\veps_\stat^2$ which is a contradiction, since $\what w\ind{t}_t \in (1+C)\cB(2d^{3/2}) =  \cB(3d^{3/2})$ and by \cref{lem:notstopper}, we must have  
	\begin{align}
		\inf_{w\in \cB(3d^{3/2})}	\E_{\pi\sim P}  \E^{\pi\circ_h \pi_\unif}\left[\left( \phi\ind{t}(\x_{h},\a_{h})^\top w-  \phi_h^{\star}(\x_h,\a_h)^\top w\ind{t} \right)^2\right]>  8 t \tilde \veps_\stat
	\end{align}
	if \replearn{} does not terminate at round $t$.
	Therefore, we have that $\cJ\ind{t}\neq \emptyset$, for any
        iteration $t\in[2 \ldotst N]$ where \replearn{} does not
        terminate.

We now bound the iteration count and prove that the guarantee in
\cref{eq:replearnmain} holds at termination. Note that whenever $\cJ\ind{\ell}\neq \emptyset$ for $\ell>1$, we have by construction:
	\begin{align}
		|\det(V\ind{\ell})| > 3/2  \cdot |\det(V\ind{\ell-1})|. 
	\end{align}
	Thus, if \replearn{} runs for $t\in[2\ldotst N]$ iterations, then 
	\begin{align}
		|\det(V\ind{t})| > (3/2)^{t-1}  \cdot |\det(V\ind{1})|.  \label{eq:loer}
	\end{align}
	On the other hand, since the determinant of a matrix is bounded by the product of the norms of its columns and $v_{1:d}\ind{t}\in \cB(2\sqrt{d})$, we have 
	\begin{align}
		|\det(V\ind{t})| \leq 2^d d^{d/2}. 
	\end{align}
	Note also that $|\det(V\ind{0})| = (\veps/d)^d$. Plugging this
        into \eqref{eq:loer}, we conclude that
	\begin{align}
		(3/2)^{t-1} < (2d^{3/2}/\veps)^{d}.
	\end{align}
	Taking the logarithm on both sides and rearranging yields
	\begin{align}
		t < 1+ d \log_{{3}/{2}} (2d^{3/2}/\veps)\leq N.
	\end{align}
Thus, the algorithm must terminate after at most $N-1$ iterations. Furthermore, by \citep[Lemma 14]{modi2021model}, we have that with probability at least $1-\frac{\delta}{2N}$, if the algorithm terminates at iteration $t$, then
	\begin{align}
		\max_{f\in \cF} \inf_{w \in  \cB(3d^{3/2})} \E_{\pi\sim P}  \E^{\pi\circ_h \pi_\unif}\left[\left(w^\top \phi\ind{t}(\x_{h},\a_{h})- w_f^\top \phi_h^{\star}(\x_h,\a_h) \right)^2\right] & \leq 32 t \tilde \veps_\stat^2,\nn \\
		& \leq 32 (N-1)\tilde \veps_\stat^2, \nn \\
		& \leq 32 T \tilde \veps_\stat^2.
	\end{align}
Applying a
        union bound completes the proof.
\end{proof}

	\section{Analysis: Proof of \crefzak{thm:spanrlmain}}
	\label{sec:analysis}
In this section, we prove the main guarantee for  \mainalg{} (\cref{thm:spanrlmain}). First, we outline the proof strategy in \cref{sec:strategy}. Then, in \cref{sec:PSDPspanner} and  \cref{sec:spannerSpanRL}, we present guarantees for the instances of \psdp{} (\cref{alg:PSDP}) and \spanner{} (\cref{alg:spanner}) used within \mainalg. We then combine these results in \cref{sec:mainproof} to complete the proof of \cref{thm:spanrlmain}.

\subsection{Proof Strategy} 
\label{sec:strategy}
The proof of \cref{thm:spanrlmain} is inductive. For fixed $h$, we assume that the policy set $\Psi\ind{1:h+1}$ produced by $\mainalg$ satisfies the properties
\begin{enumerate}
\item $\unif(\Psi\ind{1}),\dots,\unif(\Psi\ind{h+1})$ are $\big(\tfrac{1}{\tC  Ad},\eta\big)$-randomized policy covers relative to $\Pibar_\eta$ for layers $1$ through $h+1$ in $\Mbar$ (\cref{def:approxcover}); and\label{eq:induct}
\item and $\max_{t\in[h+1]}|\Psi\ind{t}|\leq d$.  \label{eq:induct0}
\end{enumerate}
Conditioned on this claim, we show that with high probability, the set $\Psi\ind{h+2}$  is a $(\frac{1}{4 A d},\eta)$-policy cover relative to $\Pibar_\eta$ for layer $h +2$ in $\Mbar$. To prove this, we use the inductive assumption to show that $\psdp$ acts as an approximate linear optimization oracle over $\cW = \{ \ee^\pi\left[ \phih\ind{h}(\x_h, \a_h) \cdot \mathbb{I}\{\phih\ind{h}(\x_h, \a_h)^\top \theta\geq 0 \}\right] \mid \pi \in \Pim,\theta \in \reals^d \}$ (\Cref{sec:PSDPspanner}). Using this, we then invoke the guarantee for \spanner{} from \cref{lem:spanner} with $\apx$ and $\est$ instantiated with \psdp{} and \veceval{}. To conclude the proof of the inductive step, we combine the main guarantee for \spanner{} together with that for \replearn{} (\Cref{thm:newreplearn}), along with a change of measure argument enabled by the assumption that $\Psi\ind{1:h}$ are policy covers (i.e.~\cref{eq:induct}). As in \citet{mhammedi2023representation}, a key feature of the analysis is that we work with the extended MDP and truncated policy class throughout the proof, only passing back to the true MDP once the induction is complete and \cref{eq:induct} has been proven to hold for all layers $H$. To pass back to the true MDP, we use the following lemma.
	\begin{lemma}
		\label{lem:transfer0}
		Let $h\in [H]$, $\alpha\in (0,1)$, and $\eta >0$ be given. 
		If $\unif(\Psi\ind{h})\in \Delta(\Pim)$ is an $(\alpha,\eta)$-randomized policy cover relative to $\Pibar_\eta$ for layer $h$ in $\Mbar$ (\cref{def:approxcover}), then $\unif(\Psi\ind{h})$ is an $(\alpha/2,\veps)$-randomized policy cover relative to $\Pim$ for layer $h$ in the true MDP $\cM$ (\cref{def:randcover}), where $\veps \coloneqq 4 H d^{3/2}\eta$.
	\end{lemma}
The lemma is restated and proven in \cref{sec:structural}. Note that by \cref{rem:backandforth}, if $\unif(\Psi\ind{h})$ is an $(\alpha/2,\veps)$-randomized policy cover relative to $\Pim$ for layer $h$ in the true MDP $\cM$ (as in \cref{lem:transfer0}), then $\Psi\ind{h}$ is an $(\alpha/2,\veps)$-policy cover relative to $\Pim$ for layer $h$ in the true MDP $\cM$ in the (non-randomized) sense of \cref{def:polcover101}.

\subsection{Guarantee for \psdp as a Subroutine for \spanner}
\label{sec:PSDPspanner}
We begin by showing that $\psdp$, as configured within \mainalg, acts as an approximate linear optimization oracle as required by $\spanner$. In particular, we fix a layer $h$, assume that $\Psi\ind{1:h+1}$ satisfy \eqref{eq:induct}, and then apply the generic guarantees for \psdp in \Cref{sec:nonnegative}.

	For $\theta \in \cB(1)$ and $\phi \in \Phi$, define function classes $\cG_{1:h}$ as follows \begin{align}
 \forall h \in [h-1], \quad  \cG_t \coloneqq \cG \coloneqq \{g:(x,a)\mapsto \phi(x,a)^\top w \mid \phi \in \Phi, w \in \cB(2\sqrt{d})\}, \ \ \text{and} \ \ \cG_h \coloneqq \{r'_h(\cdot, \cdot; \theta, \phi)\}, \label{eq:Gclass}
\end{align} 
where we define reward functions $r'_{1:h}(\cdot,\cdot;\theta, \phi)$ by:
\begin{align}
	\forall (x,a)\in \cX\times \cA, \quad  r'_{t}(x,a;\theta,\phi)\ldef{} \left\{\begin{array}{ll} \phi(x,a)^\top \theta \cdot \mathbb{I}\{ \phi(x,a)^\top \theta\geq 0\}, &  \text{for }
		t=h, \\ 0, &  \text{otherwise}.
	\end{array}\right. \label{eq:reward}
\end{align}
With these rewards and function classes, we will show that for any $\theta \in \cB(1)$ and $\phi \in \Phi$, the output \[\pihat = \psdp(h, r'_{1:h}(\cdot, \cdot;\theta,\phi), \cG_{1:h}, P\ind{1:h}, n),\] where $P\ind{t}\coloneqq \unif(\Psi\ind{t})$, for each $t\in[h]$, satisfies the property that for $\eta$ as in \cref{alg:spanRL}:
\begin{align}
	\max_{\pi \in \Pibar_\eta} \theta^\top {\Ebar^\pi\left[ \tilde\phi(\x_h, \a_h) \cdot \mathbb{I}\{\tilde\phi(\x_h, \a_h)^\top \theta \geq 0 \} \right]}\leq  \theta^\top {\Ebar^{\pihat}\left[ \tilde\phi(\x_h, \a_h) \cdot \mathbb{I}\{\tilde\phi(\x_h, \a_h)^\top \theta \geq 0 \} \right]} + \frac{\eta}{36 d^{5/2}},  \label{eq:tosolve}
\end{align}
with high probability if $n\geq 1$ is sufficiently large; recall that $\tilde\phi$ is the restriction of $\phibar$ to its first $d$ coordinates, with $\phibar$ as in \cref{sec:reach0}. 

Note that this matches the choice of reward functions in \mainalg{} (\cref{alg:spanRL}) at iteration $h$ with $\phi = \phi\ind{h}$, the feature map returned by \replearn{} in \cref{line:reward}.

We first verify that the classes $\cG_{1:h}$ realize the reward functions specified in \eqref{eq:reward} in the sense of \Cref{def:funcrealize}.
\begin{lemma}
	\label{lem:realizefunc}
	Under \cref{assum:real}, the function classes $\cG_{1:h}$ in \eqref{eq:Gclass} realize (\cref{def:funcrealize}) the reward functions in \eqref{eq:reward} for any $\phi\in\Phi$ and $\theta\in \cB(1)$. Furthermore, the functions in $\cG_{1:h}$ are uniformly bounded by $2\sqrt{d}$, and for any $\veps'>0$, $\ln \cN_{\cG_t}(\veps')\leq \ln |\Phi|+ d \ln (2\sqrt{d} /\veps')$, for all $t\in[h]$, where we recall that $\cN_{\cG}(\veps')$ denotes the $\veps'$-covering number of $\cG$ in $\ell_\infty$-distance (see \cref{def:covering}).
\end{lemma} 
\begin{proof}[\pfref{lem:realizefunc}]
	Fix $\phi\in\Phi$ and $\theta\in \cB(1)$, and let $r'_t(\cdot,\cdot)\equiv r'_t(\cdot,\cdot; \theta, \phi)$, for $t\in[h]$. Further, for $t\in[h]$ and $\pi\in \Pim^{t+1:h}$, we define the \emph{state-action value function} ($Q$-function) at layer $t$ with respect to the rewards $r'_{1:h}$ and partial policy $\pi$:
	\begin{align}
		\forall (x,a)\in \cX_t\times \cA,\quad 		Q^{\pi}_t(x,a)\coloneqq r'_t(x,a)+\E^{\pi}\left[\left.\sum_{\ell=t+1}^{h} r'_\ell(\x_\ell,\a_\ell)\ \right| \ \x_t=x,\a_t=a\right]. \nn
	\end{align}
	For $t=h$, we clearly have that for any $\pi \in \Pim^{h:h}$, $Q^{\pi}_h(\cdot,\cdot)=r'_h(\cdot,\cdot)\in \cG_h$. For $t<h$ and $\pi \in \Pim^{t+1:h}$, we have by the low-rank structure that 
	\begin{align}
		Q^{\pi}_t(x,a) & = \int_{\cX_{t+1}} \E^{\pi}[r'_h(\x_h,\a_h)\mid \x_{t+1}=y,\a_{t+1}=\pi(y)] \cdot \phi^\star_t(x,a)^\top  \mu_{t+1}^\star(y)  \dd\nu (y), \nn \\
		& =  \phi^\star_t(x,a)^\top  \left( \int_{\cX_{t+1}} \E^{\pi}[r'_h(\x_h,\a_h)\mid \x_{t+1}=y,\a_{t+1}=\pi(y)] \cdot \mu_{t+1}^\star(y)  \dd\nu (y)\right).\label{eq:hand}
		\end{align}
	Now, by the fact that $\E^{\pi}[r'_h(\x_h,\a_h)\mid \x_{t+1}=y,\a_{t+1}=\pi(y)] \in [-1,1]$, for all $y\in \cX_{t+1}$ (since $\phi(\cdot,\cdot)\in \cB(1)$, for all $\phi\in\Phi$), and the normalizing assumption made on $(\muh[h])_{h\in[H]}$ in \cref{sec:onlineRL} (i.e.~that for all $g:\cX_{t+1}\to\brk{0,1}$, $\nrm*{\int_{\cX_{t+1}} \muh[t+1](y)g(y) \dd\nu(y)} \leq \sqrt{d}$), we have that 
	\begin{align}
		w_t \coloneqq \int_{\cX_{t+1}} \E^{\pi}[r'_h(\x_h,\a_h)\mid \x_{t+1}=y,\a_{t+1}=\pi(y)] \cdot \mu_{t+1}^\star(y)  \dd\nu (y) \in \cB(2\sqrt{d}). \label{eq:one}
		\end{align}
This, together with \eqref{eq:hand} and the fact that $\phistarh[t]\in \Phi$ (by \cref{assum:real}), implies that that $Q_t^{\pi} \in \cG_t$. The bound on the covering number $\cN_{\cG_t}(\veps')$, follows from a standard bound on the covering number of the ball $\cB(2\sqrt{d})$ \citep{wainwright2019high}.
\end{proof}

Combining \Cref{lem:realizefunc} with \Cref{thm:psdp+} (with $\veps=0$) results in the following bound on the quality of \psdp{} as an approximate linear optimization oracle.%
\begin{corollary}
	\label{cor:psdp}
	Let $\delta \in(0,1)$ and $n\in \mathbb{N}$ be given and fix $h\in[H]$. Given $\theta\in \cB(1)$ and $\phi\in \Phi$, let $\pihat$ be the output of $\psdp$ when given input $(h, r'_{1:h}(\cdot, \cdot;\theta,\phi), \cG_{1:h}, P\ind{1:h}, n)$, where
	\begin{itemize}
		\item The reward functions $r'_{1:h}(\cdot, \cdot;\theta,\phi)$ are as in \eqref{eq:reward}.
		\item The function classes $\cG_{1:h}$ are as in \eqref{eq:Gclass}.
		\item $P\ind{t}\coloneqq \unif(\Psi\ind{t})$, for each $t\in[h]$, and the collection of policies $\Psi\ind{1:h}$ satisfy \eqref{eq:induct}.
	\end{itemize} 
	Then, for any $\eta \in(0,1)$, under \cref{assum:real}, with probability at least $1-\delta$, we have that 
	\begin{align}
		\max_{\pi\in \Pibar_\eta} \theta^\top \Ebar^{\pi}[\tilde\phi(\x_h,\a_h)\cdot\mathbb{I}\{\tilde\phi(\x_h,\a_h)^\top \theta \geq 0\}]\leq \theta^\top \Ebar^{\pihat}[\tilde\phi(\x_h,\a_h)\cdot \mathbb{I}\{\tilde\phi(\x_h,\a_h)^\top \theta \geq 0\}] + \veps_\psdp(n,\delta), 
	\end{align}
where $\veps_\psdp(n,\delta)\coloneqq c H A  d \sqrt{d n^{-1} ({d \ln (2n d^{1/2})+\ln ({|\Phi|}/{\delta})})}$ for a sufficiently large absolute constant $c>0$.
\end{corollary}

\subsection{Guarantee for $\spanner
$ as a Subroutine for \mainalg}
\label{sec:spannerSpanRL}
\begin{algorithm}[tp]
	\caption{$\veceval(h, F, \pi,n)$: Estimate
          $\E^{\pi}[F(\x_h,\a_h)]$ for policy $\pi$ and function
        $F:\cX\times\cA\rightarrow\reals^d$.}
	\label{alg:veceval}
	\begin{algorithmic}[1]\onehalfspacing
		\Require~
		\begin{itemize}
			\item Target layer $h\in[H]$.
			\item Vector-valued function $F:\cX \times \cA\rightarrow \reals^d$.
			\item Policy $\pi\in \Pim$.
			\item Number of samples $n\in \mathbb{N}$.
		\end{itemize}
		\State $\cD \gets\emptyset$. 
		\For{$n$ times}
		\State Sample $(\x_h, \a_h)\sim
		\pi$.
		\State Update dataset: $\cD \gets \cD \cup \left\{\left(\x_h, \a_h\right)\right\}$.
		\EndFor
		\State \textbf{Return:} $\bar F = \frac{1}{n}\sum_{(x, a)\in\cD} F(x,a)$. 
	\end{algorithmic}
\end{algorithm}

In this section, we prove a guarantee for the invocation of $\spanner$ within $\mainalg$. We first show that $\veceval$ (\Cref{alg:veceval})  is a valid choice for the $\est$ subroutine passed to $\spanner$.
\begin{lemma}[Guarantee of \veceval{}]
\label{lem:veceval}
Let $\delta \in(0,1)$, $h\in[H]$, $\phi\in \Phi$, $\theta \in \reals^d$, $\pi \in \Pim$, and $n\in \mathbb{N}$ be given. The output $\phi^{\veceval}_h= \veceval(h,\phi(\cdot,\cdot) \cdot \mathbb{I}\{\phi(\cdot,\cdot)^\top \theta \geq 0\},\pi, n)$ (\cref{alg:veceval}) satisfies, with probability at least $1-\delta$,  
\begin{align}
	\| \phi^{\veceval}_h -  \E^{\pi}[\phi(\x_h,\a_h) \cdot \mathbb{I}\{\phi(\x_h,\a_h)^\top \theta \geq 0 \}] \| \leq \veps_{\veceval}(n,\delta), \label{eq:est}
	\end{align}
where $\veps_{\veceval} \coloneqq c \cdot \sqrt{n^{-1} \cdot \log (1/\delta)}$ and $c>0$ is a sufficiently large absolute constant.
\end{lemma}
\begin{proof}[\pfref{lem:veceval}]
	By a standard vector-valued concentration bound in Euclidean space (see for example \citet{pinelis1994optimum}) and the fact that $\norm{\phi(x, a)} \leq 1$ for all $x \in \cX$ and $a \in \cA$, there exists an absolute constant $c>0$ such that with probability at least $1 - \delta$,
	\begin{align}
		\nrm*{\phi^{\veceval}_h - \ee^\pi\left[ \phi(\x_h, \a_h) \cdot \mathbb{I}\{\phi(\x_h,\a_h)^\top \theta \geq 0 \} \right]} \leq c \cdot \sqrt{\frac{\log(1/\delta)}{n}}.
	\end{align}
	\end{proof}
Recall that in $\mainalg$, we instantiate $\spanner$ passing $\veceval$ as $\est$ and $\psdp$ as part of the output of $\apx$ (see \cref{line:linopt}). Combining \Cref{cor:psdp,lem:veceval} with the general guarantee for $\spanner$ in \Cref{prop:spanner}, we have the following result.
\begin{lemma}\label{lem:spanner}
Consider iteration $h\in [H]$ of \mainalg{}$(\Phi,\veps,\cfrak,\delta)$ (\cref{alg:spanRL}) with $\tveps,\cfrak>0$, $\delta\in(0,1)$, and feature class $\Phi$ satisfying \cref{assum:real}. Further, let $\eta \coloneqq \veps/(4 Hd^{3/2})$ and $\phih\ind{h}$ denote the feature map returned by $\replearn$ in \Cref{alg:spanRL} at iteration $h$. If $\Psi\ind{1:h}$ in \cref{alg:spanRL} satisfy \cref{eq:induct} and \cref{eq:induct0}, and $\cfrak=\polylog(A,d,H,\ln(|\Phi|/\delta))$ is sufficiently large, then with probability at least $1 - \frac{\delta}{2H}$, we have that
\begin{itemize}
	\item The number of iterations of $\spanner$ in \Cref{line:spanner} of \Cref{alg:spanRL} is at most $N \coloneqq  \left\lceil\frac d2 \log_2\left( \frac {3600d^{7/2}}{\eta} \right)\right\rceil$.
	\item The output $((\pi_{1}, \theta_{1}),\dots,(\pi_{d}, \theta_{d}))$ of $\spanner$ has the property that for all $\pi \in \Pibar_\eta$ and $\theta \in \reals^d$, there exist $\beta_{1},\dots,\beta_d\in[-2,2]$ such that
	\begin{align}
		\nrm*{\tilde\phi_h^{\pi, \theta} - \sum_{i=1}^d \beta_i \tilde\phi_h^{\pi_i,\theta_i}} \leq \frac{\eta}{12 d^{3/2}} ,\quad {where} \quad \tilde\phih_h^{\pi', \theta}\coloneqq \Ebar^{\pi'}\left[\tilde\phih\ind{h}(\x_h,\a_h) \cdot \mathbb{I}\{\tilde\phih\ind{h}(\x_h,\a_h)^\top  \theta \geq 0\}\right],
	\end{align}
where we recall that $\tilde\phi$ is the restriction of $\phibar$ to its first $d$ coordinates, and $\phibar$ is the extension of $\phi\ind{h}$ to $\Mbar$; see \cref{sec:reach0}. 
\end{itemize}

\end{lemma}
\begin{proof}[\pfref{lem:spanner}]
By \Cref{prop:spanner}, on the event that the instances of $\psdp$ and $\veceval$ used by \spanner{} satisfy \Cref{ass:spanner} with $\psdp$ as $\apx$, $\veceval$ as $\est$, and $\veps_{\apx}= \veps_{\est} = \frac{\eta}{72 d^{5/2}}$, the two prerequisite assumptions of the lemma hold; Here, we instantiate the guarantee in \cref{prop:spanner} with $C=2$, as used by \cref{alg:spanRL}. We claim that, with probability at least $1-  \frac{\delta}{8 d N H}$, each call to $\psdp$ and to $\veceval$ satisfies \Cref{ass:spanner} with 
\begin{align}
	\cZ=\Pibarm \times \reals^d, \quad \cZ_{\texttt{ref}} =\Pibar_\eta\times \reals^d, \quad \cZ_{\supp}=\Pim\times \reals^d, \quad  \shortintertext{and} 
	\cW = \{ \Ebar^\pi[\tilde\phi\ind{h}(\x_h,\a_h) \cdot \mathbb{I}\{\tilde\phi\ind{h}(\x_h,\a_h)^\top \theta \geq 0 \} ]\mid (\pi,\theta) \in \Pibarm\times \reals^d \}.
\end{align}
Because each of $\psdp$ and $\veceval$ get called at most $4 d$ times per iteration of $\spanner$, a union bound concludes the proof contingent on this claim.

We now prove the claim. First, note that the instance of \apx{} that \spanner{} uses within \cref{alg:spanRL} is of the form:
\begin{align}
\apx{}(\theta) =  (\psdp(h, r_{1:h}(\cdot, \cdot, \theta), \cG_{1:h}, P\ind{1:h}, n_{\psdp}),\theta) \in \Pim \times \reals^d,  \label{eq:psdpinst+}
\end{align} with $r_{1:h}$ and $\cG_{1:h}$ as in \Cref{alg:spanRL}, and $P\ind{t}\coloneqq \unif(\Psi\ind{t})$ for each $t\in[h]$; this matches the form in \Cref{cor:psdp} (\psdp's guarantee) with $\phi = \phi\ind{h}$, which implies that with probability at least $1- \frac{\delta}{8 d N H}$, the output of $(\pihat,\theta/\|\theta\|)$ of $\apx(\theta/\|\theta\|)$ in \eqref{eq:psdpinst+} ($\pihat$ is the output of the \psdp{} instance in \eqref{eq:psdpinst+}) satisfies: for any $(\pi, \vartheta)\in \Pibar_\eta \times \reals^d$,
\begin{align}
& 	 \frac{\theta^\top}{\|\theta\|} \Ebar^{\pi}[\tilde\phi(\x_h,\a_h) \cdot \mathbb{I}\{\tilde\phi(\x_h,\a_h)^\top \vartheta\geq  0 \}]\\  & \leq 	 \frac{\theta^\top}{\|\theta\|} \Ebar^{\pi}[\tilde \phi(\x_h,\a_h) \cdot \mathbb{I}\{\tilde \phi(\x_h,\a_h)^\top \theta\geq 0\}], \quad (\text{since } \theta^\top \tilde\phi(\x_h,\a_h) \leq \theta^\top \tilde\phi(\x_h,\a_h) \cdot \mathbb{I}\{\theta^\top \tilde\phi(\x_h,\a_h) \geq 0\}) \nn \\
& = 	 \frac{\theta^\top}{\|\theta\|} \Ebar^{\pi}\left[\tilde \phi(\x_h,\a_h) \cdot \mathbb{I}\left\{\tilde \phi(\x_h,\a_h)^\top \frac{\theta}{\|\theta\|}\geq 0\right\}\right], \nn \\
& \leq \frac{\theta^\top}{\|\theta\|} \Ebar^{\pihat}\left[\tilde \phi(\x_h,\a_h) \cdot \mathbb{I}\left\{\tilde \phi(\x_h,\a_h)^\top \frac{\theta}{\|\theta\|}\geq 0\right\}\right] + c  H A  d \sqrt{\frac{d \cdot ({d \ln (2n_{\psdp} d^{1/2})+\ln (8 dNH{|\Phi|}/{\delta})})}{n_\psdp}}, \label{eq:psdpguar+}
\end{align}
for a sufficiently large absolute constant $c>0$. Thus, by choosing 
\begin{align}
	n_\psdp  = \cfrak \cdot \eta^{-2}  A^2  d^8  H^2 \cdot (d +\ln (|\Phi|/\delta)), \label{eq:npsdp}
\end{align}
for $\cfrak=\polylog(A,d,H,\ln(|\Phi|/\delta))$ sufficiently large and multiplying by $\|\theta\|$, the \rhs{} of \eqref{eq:psdpguar+} is bounded by $\|\theta\|\eta/(72 d^{5/2})$, which implies the claim for the invocation of \psdp{} within \spanner. Similarly, the choice of $n_{\veceval}$ in \Cref{alg:spanRL} ensures that the claim holds for the invocation of $\veceval$ within \spanner{} by \Cref{lem:veceval}.

\end{proof}

\subsection{Guarantee for \replearn{} as a Subroutine for \mainalg}
\label{sec:replearnforspanRL}
In this section, we state a guarantee for the invocation of $\replearn$ within $\mainalg$

Recall that ${P}\ind{h}= \unif(\Psi\ind{h})$ is the distribution over policies that \mainalg{} passes to \replearn{} at iteration $h\in[H-2]$ to compute feature map $\phi\ind{h}$. Thus, by invoking \cref{thm:newreplearn} in \cref{sec:replearn} and using the choice of $n_{\replearn}$ in \cref{alg:spanRL}, we immediately obtain the following corollary.
\begin{corollary}
	\label{thm:newfirstguarantee}
	Let $\delta,\eta\in(0,1)$, and $\cF$ be as in \cref{alg:spanRL}, and fix $h\in[H-2]$. Suppose that the feature class $\Phi$ satisfies \cref{assum:real}. Then, with probability at least $1-\frac{\delta}{2H}$, the instance of \replearn{} in \cref{line:oldreplearn} of \cref{alg:spanRL} runs for $t\leq   \cfrak\cdot d$ iterations for $\cfrak = \polylog(A,d,H,\log(|\Phi|/\delta))$ sufficiently large, and returns output $\phi\ind{h}$ such that for all $f\in \cF$, there exists $w_f\ind{h}\in  \cB(3d^{3/2})$ satisfying
	\begin{align}
\E^{\unif(\Psi\ind{h})}\left[\sum_{a\in\cA}\left(\phi\ind{h}(\x_h,a)^\top{w\ind{h}_f} - {\phi}_h^{\star}(\x_h,a)^\top{w_f} \right)^2\right] \leq \frac{\eta^2}{256 A^2 d^4},
	\end{align}
	where $w_f \coloneqq \int_{\cX_{h+1}} f(y) \muh(y) \dd\nu(y)$.
\end{corollary}

\subsection{Concluding the Proof of \crefzak{thm:spanrlmain}}
\label{sec:mainproof}
In this section, we conclude the proof of the main guarantee (\cref{thm:spanrlmain}). We derive the guarantee from the following inductive claim. 
\begin{theorem}
	\label{lem:spanrl}
	Consider iteration $h\in [H]$ of \mainalg{}$(\Phi,\veps,\cfrak,\delta)$ (\cref{alg:spanRL}) with parameters $\tveps,\cfrak>0$, $\delta\in(0,1)$ and a feature class $\Phi$ satisfying \cref{assum:real}. Further, let $\eta \coloneqq \veps/(4 H d^{3/2})$ and assume that:
	\begin{itemize}
		\item The collection of policies $\Psi\ind{1:h+1}$ at the start of the $h$th iteration of \mainalg{} satisfy \cref{eq:induct} and \cref{eq:induct0}.
			\item The input parameter $\cfrak =\polylog(A,d,H,\ln (|\Phi|/\delta))$ is sufficiently large.
		\end{itemize} 
	Then, with probability at least $1-\frac{\delta}{H}$, the set of policies $\Psi\ind{h+2}$ produced by \mainalg{}$(\Phi,\veps,\cfrak,\delta)$ at the end of iteration $h$ is such that $\unif(\Psi\ind{h+2})$ is an $(\frac{1}{\tC  Ad},\eta)$-randomized policy cover relative to $\Pibar_\eta$ for layer $h+2$ in $\Mbar$ (see \cref{def:approxcover}).
\end{theorem}
With this, we can now prove \cref{thm:spanrlmain}.
\begin{proof}[\pfref{thm:spanrlmain}]
	In a first step, we prove that with probability at least $1-\delta$, 	$\Psi\ind{1},\dots \Psi\ind{H}$ are $\big(\tfrac{1}{\tC  Ad},\eta\big)$-policy covers relative to $\Pibar_\eta$ for layers $1$ through $H$ in $\Mbar$; that is, we need to show \cref{eq:induct} and \cref{eq:induct0} hold for $h=H-1$ with probability at least $1-\delta$. To do this, we proceed by induction over $h=1,\dots,H-1$. The base case of $h=1$ trivially holds because $\Psi\ind{1}=\emptyset$ and $\Psi\ind{2}=\{\pi_\unif\}$. The induction step now follows by \cref{lem:spanrl} and the union bound (see \cref{lem:unionbound}). Now, \cref{lem:transfer0} together with \cref{rem:backandforth} implies that $\Psi\ind{1},\dots, \Psi\ind{H}$ are $(\frac{1}{8 Ad}, \veps)$-policy covers relative to $\Pim$ for layers 1 through $H$ in the real MDP $M$.

		We now bound the number of trajectories \cref{alg:spanRL} requires. The total number of trajectories is equal to the sum of the number of trajectories \psdp{}, \veceval{}, and \replearn{} require. We know that \replearn{} is called once at each layer. On the other hand, $\psdp$ and $\veceval$ get called at most $4 d$ times per iteration of $\spanner$, which itself runs for $N= \wtilde O(d)$ iterations (see \cref{lem:spanner}). Thus, $\psdp$ and $\veceval$ get called at most $T = \wtilde{O}(d^2)$ times per layer. Furthermore, each call to \psdp{} requires $H \cdot n_{\psdp}$ trajectories, and \veceval{} and \replearn{} require $n_{\veceval}$ and $n_{\replearn}$ trajectories, respectively. Thus, the total number of trajectories is equal to 
	\begin{align}
		& n_{\psdp} \cdot H^2    T+ n_{\veceval} \cdot H  T + n_{\replearn} \cdot H \\
		& \leq    \wtilde{O}(\veps^{-2} A^2 d^{13} H^6 \cdot (d + \log (|\Phi|/\delta))) +\wtilde{O}(\veps^{-2} H^3 d^{10} \ln (1/\delta)) +\wtilde{O}(\veps^{-2} A^2 d^{11} H^3  \ln (|\Phi|/\delta)),
	\end{align}  
	where the inequality follows by the choice of parameters in \cref{alg:spanRL}.
	This implies the desired bound on the number of trajectories.

\end{proof}

Before proving \cref{lem:spanrl}, we make the following simple observation.
\begin{lemma}
	\label{lem:negative}
	For any $\pi \in \Pibarm$, $h\in [H-1]$, any $x\in \cX_{h+1}$, we have 
	\begin{align}
		\mubar_h^\star(x)^\top  \Ebar^{\pi}[\phi_h^\star(\x_h,\a_h)]=\dbar^{\pi}(x)\geq 0.
		\end{align} 
	\end{lemma}
\begin{proof}[\pfref{lem:negative}]
	The equality follows by construction. The non-negativity of $\dbar^\pi(x)$ follows by definition of a probability density.
\end{proof}

We now prove \cref{lem:spanrl}.
\begin{proof}[\pfref{lem:spanrl}]
	Let $\cE_h$ and $\cE_h'$ denote the success events in \cref{lem:spanner} and \cref{thm:newfirstguarantee}, respectively, and note that by the union bound, we have $\P[\cE_h \cap \cE_h']\geq 1 - \delta/H$. For the rest of this proof, we will condition on $\cE\coloneqq \cE_h \cap \cE_h'$. 
	
	Throughout, we denote 
	\[\phibar_{h}^{\star,\pi} \coloneqq \Ebar^{\pi}[\phibar_h^\star(\x_h,\a_h)],\quad \forall h\in[H], \forall \pi\in \Pibarm.\] 
Because $\Psi\ind{1:h+1}$ satisfy \cref{eq:induct} (i.e., are a policy cover) it holds that for all $\ell\in[h+1]$ and $x\in\cX_{\ell,\eta}(\Pibar_\eta)$, 
\begin{align}
	\E_{\pi \sim\unif(\Psi\ind{\ell})} \left[\dbar^{\pi}(x)\right] =	\E_{\pi \sim \unif(\Psi\ind{\ell})}\left[ \mubar^\star_\ell(x)^\top \phibar_{\ell-1}^{\star,\pi}\right] \geq  \alpha \cdot \sup_{\pi \in \Pibar_\eta}\dbar^{\pi}(x) =\alpha \cdot \sup_{\pi \in \Pibar_\eta}	 \mubar^\star_\ell(x)^\top \phibar_{\ell-1}^{\star,\pi},\quad \text{for } \alpha\coloneqq \frac{1}{\tC  A d},\label{eq:invar0}
\end{align}
where the equalities follow by \cref{lem:negative}. We will show that with probability at least $1-\frac{\delta}{H}$, the policy set $\Psi\ind{h+2}$ has the same property for layer $h+2$; that is, for all $x\in\cX_{h+2,\eta}(\Pibar_\eta)$,
\begin{align}
	\E_{\pi \sim \Psi\ind{h+2}} [ \mubar^\star_{h+2}(x)^\top \phibar_{h+1}^{\star,\pi}] \geq \alpha\cdot  \sup_{\pi \in \Pibar_\eta}		 \mubar^\star_{h+2}(x)^\top \phibar_{h+1}^{\star,\pi}.\label{eq:invar20}
\end{align}
Again, by \cref{lem:negative} this is equivalent to the statement that $\Psi\ind{h+2}$ is an $(\frac{1}{\tC  Ad},\eta)$-policy cover relative to $\Pibar_\eta$ for layer $h+2$ in $\Mbar$.

Throughout the proof, for any $\ell\in [2\ldots H]$ and $z\in \cX_\ell$, we define 
\begin{align}
\pi_z \in \argmax_{\pi \in \Pibar_\eta} \dbar^\pi(z), \label{eq:newrem0}
	\end{align}
 and note that by \cref{lem:negative}, we have 
 \begin{align}\pi_z \in \argmax_{\pi \in \Pibar_\eta}\mubar_{h+2}(z)^\top \phibar_{h+1}^{\star,\pi}.
 	\label{eq:rem0}
 \end{align}
Fix $x\in \cX_{h+2,\eta}(\Pibar_\eta)$. In the remainder of the proof, we will argue that $\Psi\ind{h+2}$ satisfies the coverage property in \cref{eq:invar20} for $x$.

\paragraph{Preliminaries}  We begin with some notation. We introduce a function $f: \cXbar_{h+1}\rightarrow \reals$ such that
\begin{align}
 f(y)\coloneqq \thetabar_x^\top  \phibar^{\star}_{h+1}(y,\pi_x(y)), \quad   \text{where} \quad  \thetabar_x \coloneqq [\theta_x^\top,0]^\top \quad \text{and}\quad  \theta_x  \coloneqq \frac{\muh[h+2](x)}{\nrm{\muh[h+2](x)}}. \label{eq:thef0}
	\end{align}
Note that $\nrm{\muh[h+2](x)}>0$, since $x\in \cX_{h+2,\eta}(\Pibar_\eta)$. Next, we define 
\begin{align}
 w_x \coloneqq \int_{\cX_{h+1}} f(y) \muh(y) \dd \nu(y), \quad \text{and} \quad \bar{w}_x\coloneqq [w_x^\top, 0]^\top \in \reals^{d+1} \label{eq:wx} \end{align}
By definition of $\pi_x$, we have that for all $y\in \cX_{h+1}$, 
\begin{align} \thetabar_x^\top\bar{\phi}^{\star}_{h+1}(y,\pi_x(y))& = \max_{a\in \cAbar} \thetabar_x^\top\bar{\phi}^{\star}_{h+1}(y,a),\nn \\
	& \leq \max_{a\in \cA} \thetabar_x^\top\bar{\phi}^{\star}_{h+1}(y,a), \quad \text{(justified below)} \label{eq:nonnegative0}\\
	& = \max_{a\in \cA} \theta_x^\top{\phi}^{\star}_{h+1}(y,a), \quad \text{(since $y\not=\tfrak_{h+1}$ and $[\bar{\theta}_x]_{d+1}=0$)} \label{eq:cravit0}
\end{align}
where \eqref{eq:nonnegative0} follows by the facts that $\thetabar_x^\top\bar{\phi}^{\star}_{h+1}(y,\afrak)=0$ (since $\bar{\phi}^{\star}_{h+1}(\cdot,\afrak)\equiv e_{d+1}$ and $[\thetabar_x]_{d+1}=0$) and that
\begin{align}
	\forall a\in \cA,\ \  \thetabar_x^\top\bar{\phi}^{\star}_{h+1}(y,a) & = \theta_x^\top{\phi}^{\star}_{h+1}(y,a), \quad (\text{since $y\not=\tfrak_{h+1}$}) \nn \\ &= \frac{\muh[h+2](x)^\top \phi_{h+1}^{\star}(y,a)}{\|\muh[h+2](x)\|},\nn \\& \geq 0. \quad \text{($\muh[h+2](\cdot)^\top \phi_{h+1}^{\star}(y,a)$ is a conditional law)}
\end{align}
\pref{eq:cravit0} and the fact that $\nrm{\theta_x}=1$ implies that %
\begin{align}
	f_x|_{\cX_{h+1}} \in \cF, \label{eq:newinclass0}	
\end{align}
where $f_x|_{\cX_{h+1}}$ denotes the restriction of $f_x$ to $\cX_{h+1}$. We also note that since $x\in \cX_{h+2,\eta}(\Pibar_\eta)$, we have 
\begin{align}\wwbar_{x}^\top\bar{\phi}_h^{\star, \pi_x} & =  \left[  \int_{\cX_{h+1}} f_x(y) \muh(y)^\top \dd \nu(y), \ 0\right] \bar{\phi}_h^{\star, \pi_x}, \quad \text{(by definition of $\bar{w}_x$ in \eqref{eq:wx})}\nn \\  
	&=	\int_{\cX_{h+1}} f_x(y) \muhb(y)^\top \bar{\phi}_h^{\star, \pi_x} \dd \nu(y), \quad \text{(since  $\muhb(y)=[\muh(y)^\top, 0]$, for all $y\neq \tfrak_{h+1}$)}  \nn \\
	&=	\int_{\wb{\cX}_{h+1}} f_x(y) \muhb(y)^\top \bar{\phi}_h^{\star, \pi_x} \dd \nubar(y), \quad \text{(since $f_x(\tfrak_{h+1})=0$)}  \nn \\ 
	& =\thetabar_x^\top \bar{\phi}_{h+1}^{\star,\pi_x}, \quad \text{(by definition of $f_x$ in \eqref{eq:thef0})} \label{eq:prereach} \\ 
	& = \frac{1}{\nrm*{\muh[h+2](x)}} \max_{\pi\in \Pibar_\eta}\muh[h+2](x)^\top \tilde {\phi}_{h+1}^{\star,\pi}, \quad (\text{by definition of $\bar{\theta}_x$ in \eqref{eq:thef0}})\label{eq:pen0} \\
	& \geq \eta>0,
	\label{eq:newlower00}
\end{align} 
where \eqref{eq:pen0} uses the definition of reachable states $\cX_{h+2,\eta}(\Pibar_\eta)$ (see \cref{def:reachable}); we recall (see \cref{sec:reach0}) that $\tilde\phi^{\star,\pi}_{h} \coloneqq \Ebar^{\pi}[\tilde {\phi}^{\star}_h(\x_h, \a_h)]$ and $\tilde \phi^{\star}_h$ represents the restriction of $\bar{\phi}^{\star}_h$ to its first $d$ coordinates.

Now, let $(x',a)\in \cX_{h}\times \cA$ be given. By repeating the steps that lead to \eqref{eq:prereach}, but with $\phibar^{\star,\pi_x}_h$ replaced with $\phibar^\star_h(x',a)$ we get that: 
\begin{align}
	\wwbar_{x}^\top\bar{\phi}_h^{\star}(x',a) &= \thetabar_x^\top \Ebar[\bar{\phi}_{h+1}^{\star}(\x_{h+1},\a_{h+1})\mid\x_h=x',\a_h=a], \nn \\ 
	& = \frac{1}{\nrm*{\muh[h+2](x)}} \muh[h+2](x)^\top\Ebar[\tilde{\phi}_{h+1}^{\star}(\x_{h+1},\a_{h+1})\mid\x_h=x',\a_h=a], \quad (\text{by definition of $\bar{\theta}_x$ in \eqref{eq:thef0}}) \\
	& = \frac{1}{\nrm*{\muh[h+2](x)}} \E[\muh[h+2](x)^\top  {\phi}_{h+1}^{\star}(\x_{h+1},\a_{h+1})\mid\x_h=x',\a_h=a], \nn \quad \text{(since $x'\neq \tfrak_h$ and $a\neq \afrak$)}\\
	&  \geq 0, \label{eq:ineq}
	\end{align}
where the last inequality follows by the fact that $\muh[h+2](x)^\top  {\phi}_{h+1}^{\star}(\tilde x,\tilde a) \geq 0$, for all $(\tilde x, \tilde a)\in \cX_{h+1}\times \cA$.
\paragraph{Applying the guarantee for \replearn}
Moving forward, we let $\phi\ind{h}$ be the feature map returned by \replearn{} within \mainalg{} (\cref{alg:spanRL}) at iteration $h$, and define $\bar{\phi}^{\pi}_{h} \coloneqq \Ebar^{\pi}[\phibar\ind{h}(\x_h,\a_h)]$, for any $\pi\in \Pibarm$, where we recall that $\phibar\ind{h}$ is the extension of $\phi\ind{h}$ to $\Mbar$; see \cref{sec:reach0}. Further, let $w\ind{h}_x$ be the vector $w\ind{h}_f$ in \cref{thm:newfirstguarantee} with $f=f_x|_{\cX_{h+1}}$, and note that 
\begin{align}
	\|w_x\ind{h}\| \leq{3}d^{3/2}. \label{eq:normbound0}
\end{align}
We will use the extended vector $\bar{w}_x\ind{h}\coloneqq [(w_x\ind{h})^\top,0]^\top \in \reals^{d+1}$. Further, for $\pi \in \Pibarm$ let \begin{align} 
	\bar{\varphi}\ind{h}(x,a) \coloneqq	\phibar\ind{h}(x, a)\cdot \mathbb{I}\{\phibar\ind{h}(x, a)^\top \bar{w}_x\ind{h}\geq 0\} \quad \text{and} \quad 
	\bar\phi_h^{\pi, \bar{w}_x\ind{h}} \coloneqq  \Ebar^{\pi}[\bar\varphi\ind{h}(\x_h, \a_h)]. 
\end{align}  
By Jensen's inequality, we have
\begin{align}	&\left( \inner{\wwbar\ind{h}_x}{ \bar\phi_h^{\pi_x, \wwbar\ind{h}_x}}-  \inner{\wwbar_x}{\bar{\phi}_h^{\star, \pi_x}} \right)^2\nn \\ & \leq \Ebar^{\pi_x}\left[\left(\bar\varphi\ind{h}(\x_h,\a_h)^\top{\wwbar\ind{h}_x} - \bar{\phi}_h^{\star}(\x_h,\a_h)^\top{\wwbar_x} \right)^2\right],\nn \\
	& = \Ebar^{\pi_x}\left[\left(\bar\varphi\ind{h}(\x_h,\pi_x(\x_h))^\top{\wwbar\ind{h}_x} - \bar{\phi}_h^{\star}(\x_h,\pi_x(\x_h))^\top{\wwbar_x} \right)^2\right], \nn \\
	& = \Ebar^{\pi_x}\left[\mathbb{I}\{\x_h \in \cX_{h,\eta}(\Pibar_{\eta})\}\cdot \left(\bar\varphi\ind{h}(\x_h,\pi_x(\x_h))^\top{\wwbar\ind{h}_x} - \bar{\phi}_h^{\star}(\x_h,\pi_x(\x_h))^\top{\wwbar_x} \right)^2\right],\\
	& \leq \Ebar^{\pi_x}\left[\mathbb{I}\{\x_h \in \cX_{h,\eta}(\Pibar_{\eta})\}\cdot \sum_{a\in\cA}\left(\bar\varphi\ind{h}(\x_h,a)^\top{\wwbar\ind{h}_x} - \bar{\phi}_h^{\star}(\x_h,a)^\top{\wwbar_x} \right)^2\right],  \label{eq:newjustt0} 
\end{align}
where the last equality follows by the fact that $\phibar\ind{h}(\cdot,\afrak)\equiv\bar{\phi}^{\star}_h(\cdot,\afrak) \equiv e_{d+1}$ and $[\bar{w}_x\ind{h}]_{d+1}=[\bar{w}_x]_{d+1}=0$ (by definition). Thus, for $g(y) \coloneqq \mathbb{I}\{y\in \cX_{h,\eta}(\Pibar_{\eta})\}\cdot \sum_{a\in\cA}\big(\bar{\varphi}\ind{h}(y,a)^\top \wwbar_x\ind{h}  - \bar{\phi}_h^\star(y,a)^\top \wwbar_x \big)^2$,
	 \eqref{eq:newjustt0} implies that
\begin{align}
	&\left( \inner{\wwbar\ind{h}_x}{\bar{\phi}_h^{\pi_x, \bar{w}_x\ind{h}} }-  \inner{\wwbar_x}{\bar{\phi}_h^{\star, \pi_x}} \right)^2\nn \\ 
	& \leq \int_{\wb{\cX}_{h}}  g(y)\muhb[h](y)^\top \bar{\phi}^{\star,\pi_x}_{h-1}\dd \nubar(y),\nn \\
	& \leq  \int_{\wb{\cX}_{h}}  g(y)\muhb[h](y)^\top \bar{\phi}^{\star,\pi_y}_{h-1}\dd \nubar(y),\quad \text{(by definition of $\pi_y$ \eqref{eq:newrem0} and \eqref{eq:rem0})}\nn\\
	& \leq \alpha^{-1}\max_{\pi \in \Psi\ind{h}}\left[ \int_{\wb{\cX}_{h}}  g(y)\muhb[h](y)^\top \bar{\phi}^{\star,\pi}_{h-1}\dd \nubar(y)\right],  \quad \text{(by \eqref{eq:invar0} with $\ell=h$, and $g(y)=0$ for all $y\not\in \cX_{h,\eta}(\Pibar_{\eta})$)}\nn \\ 
		& \leq \alpha^{-1}\sum_{\pi \in \Psi\ind{h}} \int_{\wb{\cX}_{h}}  g(y)\muhb[h](y)^\top \bar{\phi}^{\star,\pi}_{h-1}\dd \nubar(y), \quad \text{(by \cref{lem:negative})}  \nn \\ 
	& =  \alpha^{-1}\sum_{\pi \in \Psi\ind{h}}\Ebar^{\pi}\left[\sum_{a\in\cA}\left(\bar\varphi\ind{h}(\x_h,a)^\top{\wwbar\ind{h}_x} - \bar{\phi}_h^{\star}(\x_h,a)^\top{\wwbar_x} \right)^2\right],
	\nn \\
	& =  \alpha^{-1}\sum_{\pi \in \Psi\ind{h}}\Ebar^{\pi}\left[\sum_{a\in\cA}\left(\max(0,\phibar\ind{h}(\x_h,a)^\top{\wwbar\ind{h}_x}) - \bar{\phi}_h^{\star}(\x_h,a)^\top{\wwbar_x} \right)^2\right], \quad \text{(by definition of $\bar{\varphi}\ind{h}$)} \label{eq:indication}
	\nn \\
	& \leq   \alpha^{-1}\sum_{\pi \in \Psi\ind{h}}\Ebar^{\pi}\left[\sum_{a\in\cA}\left(\phibar\ind{h}(\x_h,a)^\top{\wwbar\ind{h}_x} - \bar{\phi}_h^{\star}(\x_h,a)^\top{\wwbar_x} \right)^2\right], \quad \text{(since $\bar{\phi}_h^{\star}(\cdot,\cdot)^\top{\wwbar_x} \geq 0$ by \eqref{eq:ineq})}\nn \\
	& =  \alpha^{-1}\sum_{\pi \in \Psi\ind{h}}\E^{\pi}\left[\sum_{a\in\cA}\left(\phi\ind{h}(\x_h,a)^\top{w\ind{h}_x} - {\phi}_h^{\star}(\x_h,a)^\top{w_x} \right)^2\right],	 \label{eq:newhit00}
\end{align}
where \eqref{eq:newhit00} follows by the fact that the policies in $\Psi\ind{h}$ never take the terminal action (by assumption) and that $\phibar\ind{h}(x,a)^\top{\wwbar\ind{h}_x} - \bar{\phi}_h^{\star}(x,a)^\top{\wwbar_x}=\phi\ind{h}(x,a)^\top{w\ind{h}_x} - {\phi}_h^{\star}(x,a)^\top{w_x}$ for all $a\in\cA$ whenever $x\neq\term_h$. On the other hand, by Jensen's inequality, we have 
\begin{align}
	\forall \pi' \in \Psi\ind{h},	\quad	&\left( \inner{\wwbar\ind{h}_x}{ \bar\phi_h^{\pi', \wwbar\ind{h}_x}}-  \inner{\wwbar_x}{\bar{\phi}_h^{\star, \pi'}} \right)^2\nn \\ & \leq \Ebar^{\pi'}\left[\left(\bar\varphi\ind{h}(\x_h,\a_h)^\top{\wwbar\ind{h}_x} - \bar{\phi}_h^{\star}(\x_h,\a_h)^\top{\wwbar_x} \right)^2\right],\nn \\
	& \leq \sum_{\pi \in \Psi\ind{h}}\Ebar^{\pi}\left[\left(\bar\varphi\ind{h}(\x_h,\a_h)^\top{\wwbar\ind{h}_x} - \bar{\phi}_h^{\star}(\x_h,\a_h)^\top{\wwbar_x} \right)^2\right],\nn \\
	& =  \sum_{\pi \in \Psi\ind{h}}\Ebar^{\pi}\left[\sum_{a\in\cA}\left(\max(0,\phibar\ind{h}(\x_h,a)^\top{\wwbar\ind{h}_x}) - \bar{\phi}_h^{\star}(\x_h,a)^\top{\wwbar_x} \right)^2\right], \quad \text{(by definition of $\bar{\varphi}\ind{h}$)} 
	\nn \\
	& \leq \sum_{\pi \in \Psi\ind{h}}\Ebar^{\pi}\left[\sum_{a\in\cA}\left(\phibar\ind{h}(\x_h,a)^\top{\wwbar\ind{h}_x} - \bar{\phi}_h^{\star}(\x_h,a)^\top{\wwbar_x} \right)^2\right], \quad \text{(since $\bar{\phi}_h^{\star}(\cdot,\cdot)^\top{\wwbar_x} \geq 0$)}\nn \\
	& =  \sum_{\pi \in \Psi\ind{h}}\E^{\pi}\left[\sum_{a\in\cA}\left(\phi\ind{h}(\x_h,a)^\top{w\ind{h}_x} - {\phi}_h^{\star}(\x_h,a)^\top{w_x} \right)^2\right].	 \label{eq:newjustt00} 
\end{align}
We note that $\unif({\Psi\ind{h}})$ is the distribution over policies that \mainalg{} passes to \replearn{} to compute $\phi\ind{h}$. Thus, since $w_x = \int_{\cX_{h+1}} f_x(y) \muh(y) \dd \nu(y)$ (see \eqref{eq:wx}) and $f_x|_{\cX_{h+1}}\in \cF$ (see \eqref{eq:newinclass0}), the guarantee for \replearn{} in \cref{thm:newfirstguarantee} together with \eqref{eq:newhit00} and \eqref{eq:newjustt00}, implies that (recall that we condition on the event $\cE$) 
\begin{align}
	\left|\inner{\wwbar\ind{h}_x}{\bar{\phi}_h^{\pi_x, \bar{w}_x\ind{h}} }-  \inner{\wwbar_x}{\bar{\phi}_h^{\star, \pi_x}}\right| \leq   \frac{\eta}{8d} \quad \text{and}\quad \forall \pi \in \Psi\ind{h}, \	\left|\inner{\wwbar\ind{h}_x}{\bar{\phi}_h^{\pi, \bar{w}_x\ind{h}} }-  \inner{\wwbar_x}{\bar{\phi}_h^{\star, \pi}}\right| \leq   \frac{\eta}{8d}. \label{eq:trianglewith0}
\end{align}

\paragraph{Applying the guarantee for $\spanner$}
Letting $((\pi_{1},\theta_{1}),\dots, (\pi_d,\theta_d))$ be the policies returned by \spanner{} at iteration $h$ of  $\mainalg$, the guarantee of \spanner{} in \cref{lem:spanner} (instantiated with $\theta = \wwbar\ind{h}_x$) implies that there exist $\beta_1, \dots, \beta_d\in[-2,2]$ such that
\begin{align}
   \nrm*{\phibar^{\pi_x, \wwbar\ind{h}_x}_h-\sum_{i=1}^d \beta _i\phibar_h^{\pi_i, \theta_i}} \leq  \frac{\eta}{12 d^{3/2}}. \label{eq:spannergar}
\end{align}
 Combining \eqref{eq:spannergar} with \eqref{eq:trianglewith0} and using the triangle inequality, we get that
\begin{align}
		\bar{w}_x^\top\phibar_h^{\star, \pi_x} &\leq  \langle\bar{w}_x\ind{h},\phibar_h^{\pi_x, \wwbar\ind{h}_x}\rangle  +\frac{\eta}{8d}, \quad \text{(by \eqref{eq:trianglewith0})} \nn \\
	& \leq  \sum_{i=1}^d \beta_i \langle\bar{w}\ind{h}_x,\phibar_h^{\pi_i,\theta_i}\rangle +   \|\bar{w}\ind{h}_x\| \cdot \frac{\eta}{12 d^{3/2}} +\frac{\eta}{8d},\nn \quad \text{(by \eqref{eq:spannergar})}\\
	& =  \sum_{i=1}^d \beta_i \Ebar^{\pi_i}\left[ \langle\bar{w}\ind{h}_x,\phibar\ind{h}(\x_h,\a_h)\rangle \cdot \mathbb{I}\{ \phibar\ind{h}(\x_h,\a_h)^\top\theta_i \geq 0\}  \right] +   \|\bar{w}\ind{h}_x\| \cdot \frac{\eta}{12 d^{3/2}} +\frac{\eta}{8d},\nn \\
& \leq   \sum_{i=1}^d \beta_i \Ebar^{\pi_i}\left[ \langle\bar{w}\ind{h}_x,\phibar\ind{h}(\x_h,\a_h)\rangle \cdot \mathbb{I}\{ \phibar\ind{h}(\x_h,\a_h)^\top\wwbar\ind{h}_x \geq 0\}  \right] +   \|\bar{w}\ind{h}_x\| \cdot \frac{\eta}{12 d^{3/2}} +\frac{\eta}{8d}, \quad \text{(see below)}\nn \\
			&=   \sum_{i=1}^d \beta_i \inner{\bar{w}\ind{h}_x}{\phibar_h^{\pi_i, \wwbar_x\ind{h}}} +   \|\bar{w}\ind{h}_x\| \cdot \frac{\eta}{12 d^{3/2}} +\frac{\eta}{8d}, \label{eq:mile}
			\end{align}
	where the last inequality follows by the fact that $y \cdot \mathbb{I}\{z \geq 0\} \leq y \cdot \mathbb{I}\{y \geq 0\}$ for any $y,z\in \reals$.
Continuing from \eqref{eq:mile} and using \eqref{eq:trianglewith0} once again, we obtain 
\begin{align}
	\bar{w}_x^\top\phibar_h^{\star, \pi_x}	&\leq   \sum_{i=1}^d \beta_i \bar{w}_x^\top\phibar_h^{\star, \pi_i} +   \|\bar{w}\ind{h}_x\| \cdot \frac{\eta}{12 d^{3/2}} +\frac{\eta}{8d} + \sum_{i=1}^d \beta_i \frac{\eta}{8d},\nn \\
	&\leq   \sum_{i=1}^d \beta_i \bar{w}_x^\top\phibar_h^{\star, \pi_i} +  \frac{\eta}{8d}+\frac{\eta}{8d}+  \sum_{i=1}^d \beta_i \frac{\eta}{8d},\quad \text{(by \eqref{eq:normbound0})}\nn \\
	&\leq   2 \sum_{i=1}^d \bar{w}_x^\top\phibar_h^{\star, \pi_i} + \frac{\eta}{4} +   \frac{ \eta}{4},\nn \\
	& =    2  \sum_{i=1}^d  \bar{w}_x^\top\phibar_h^{\star, \pi_i} + \frac{\eta}{2}.
		\label{eq:upper0}
\end{align}
Combining this with \eqref{eq:newlower00} and rearranging implies
\begin{align}
	\bar{w}_x^\top\phibar_h^{\star, \pi_x} \leq   4 \sum_{i\in[d]}  \bar{w}_x^\top\phibar_h^{\star, \pi_i}. \label{eq:still}
\end{align}
On the other hand, since $x\in \cX_{h+2,\eta}(\Pibar_\eta)$, we have 
\begin{align}
\sum_{i\in[d]}	\wwbar_{x}^\top\bar{\phi}_h^{\star, \pi_i} & = \sum_{i\in [d]} \left[  \int_{\cX_{h+1}} f_x(y) \muh(y)^\top \dd \nu(y), \ 0\right] \bar{\phi}_h^{\star, \pi_i}, \quad \text{(by definition of $\bar{w}_x$ in \eqref{eq:wx})}\nn \\  
	&=\sum_{i\in [d]}	\int_{\cX_{h+1}} f_x(y) \muhb(y)^\top \bar{\phi}_h^{\star, \pi_i} \dd \nu(y), \quad \text{(since  $\muhb(y)=[\muh(y)^\top, 0]$, for all $y\neq \tfrak_{h+1}$)}  \nn \\
	&= \sum_{i\in [d]}	\int_{\wb{\cX}_{h+1}} f_x(y) \muhb(y)^\top \bar{\phi}_h^{\star, \pi_i} \dd \nubar(y), \quad \text{(since $f_x(\tfrak_{h+1})=0$)}  \nn \\ 
	& =\sum_{i\in [d]}\thetabar_x^\top \bar{\phi}_{h+1}^{\star,\pi_i \circ_{h+1} \pi_x}, \quad \text{(by definition of $f_x$ in \eqref{eq:thef0})}\nn \\ 
	& = \frac{1}{\nrm*{\muh[h+2](x)}}	\sum_{i\in [d]}\Ebar^{\pi_i \circ_{h+1} \pi_x} \left[\mubar^\star_{h+2}(x)^\top \phibar_{h+1}^\star(\x_{h+1},\a_{h+1})\right], \nn \\
	& \leq \frac{A}{\nrm*{\mu^\star_{h+2}(x)}}	\sum_{i\in [d]} \Ebar^{\pi_i \circ_{h+1} \pi_\unif} \left[\muh[h+2](x)^\top \phibar_{h+1}^\star(\x_{h+1},\a_{h+1})\right], \quad \text{(see below)} \label{eq:just}\\
	& =  \frac{A d}{\nrm*{\muh[h+2](x)}}  \E_{\pi \sim \unif(\Psi\ind{h+2})} \left[\mubar_{h+1}^\star(x)^\top \phibar_{h+1}^{\star, \pi}\right], \label{eq:her}
\end{align} 
where the inequality follows from the non-negativity of $\inprod{\mubar^\star_{h+1}(\cdot)}{\phibar^\star_{h+1}(x,a)}$, for all $(x,a)\in \cX_h\times \cA$ (due to \Cref{lem:negative}), and \eqref{eq:her} follows from the definition of $\Psi\ind{h+2}$ in \Cref{line:cover} of \Cref{alg:spanRL}. Combining  \eqref{eq:still} and \eqref{eq:her} then implies that 
\begin{align}
\frac{1}{\nrm*{\muh[h+2](x)}}  \mubar^\star_{h+2}(x)^\top \phibar_{h+1}^{\star, \pi_x} & =\frac{1}{\nrm*{\muh[h+2](x)}}  \mu^\star_{h+2}(x)^\top \tilde\phi_{h+1}^{\star, \pi_x}, \quad (\text{since } x\neq \tfrak_{h+1}) \nn \\  & = \bar{w}_x^\top\phibar_h^{\star, \pi_x}, \quad (\text{by \eqref{eq:pen0}}\nn \\ & \leq   4 \sum_{i\in[d]}  \bar{w}_x^\top\phibar_h^{\star, \pi_i},\quad \text{(by \eqref{eq:still})} \nn \\
& \leq \frac{4 A d}{\nrm*{\mubar^\star_{h+2}(x)}}  \E_{\pi \sim \unif(\Psi\ind{h+2})}\left[ \mubar^\star_{h+2}(x)^\top \phibar_{h+1}^{\star, \pi}\right]. \quad \text{(by \eqref{eq:her})}
	\end{align}
This, together with \cref{lem:negative}, implies that \eqref{eq:invar20} holds. Since this argument holds uniformly for all $x\in\cX_{h+2}$, this completes the proof.
\end{proof}

\subsection{Proof of \crefzak{lem:barycentricspannerknownphi}}\label{ssec:pf_baryspanner_knownphi}
Fix $x \in \cX_{h+1}$ such that 
\begin{align}
\max_{\pi\in \Pim} d^{\pi}(x) \geq 2\veps \cdot \|\mu^\star_{h+1}(x)\|. \label{eq:threshold}
\end{align}
By definition, we have $d^\pi(x) = \ee^\pi\left[ {\muh(x)}^\top{\phistarh[h](\x_h, \a_h)} \right]$. Let $\pi_x$ denote the policy maximizing $d^\pi(x)$ (if no such maximizer exists, we may pass to a maximizing sequence) and let $\Psi = \left\{ \pi_1, \dots, \pi_d \right\}$. Then, we have for some $\beta_1, \dots, \beta_d \in [-C, C]$,
	 \begin{align}
		 d^{\pi_x}(x) &= {\muh(x)}^\top{\left(\sum_{i = 1}^d \beta_i \phistarhpi[\pi_i]\right)} + {\muh(x)}^\top{\left( \phistarhpi[\pi_x] -  \sum_{i = 1}^d \beta_i\phistarhpi[\pi_i]\right)}, \\
		 &\leq C  d \cdot \max_{i \in[d]}
		   {\muh(x)}^\top{\phistarhpi[\pi_i]} + \veps\cdot \norm{\muh(x)}
		   , \quad \text{(Cauchy-Schwarz)}\\
				   &\leq C  d \cdot \max_{i \in[d]} {\muh(x)}^\top{\phistarhpi[\pi_i]} + \frac{1}{2}d^{\pi_x}(x), 
	 \end{align}
	 where the inequality follows by \eqref{eq:threshold}. The result now
	 follows by rearranging.

	\section{Application to Reward-Based RL}
	\label{sec:reward_based}

In this section, we show how the output $\Psi\ind{1:H}$ of \mainalg{} (\cref{alg:spanRL}), which is a $(\frac{\eta^3}{\cfrak \cdot d^6 A^2}, \veps)$-policy cover for $\eta = \veps/(4 H d^{3/2})$ and $\cfrak= \polylog(A,H,d, \log(|\Phi|/\delta))$ sufficiently large (see \cref{thm:spanrlmain}), can be used to optimize downstream reward functions $r_{1:H}$. One way to optimize the sum of rewards $S_H \coloneqq \sum_{h=1}^H r_h$ is by first generating trajectories using policies in $P\ind{1:H}$, then applying an offline RL algorithm, e.g.~Fitted Q-Iteration (\texttt{FQI}) \citep{ernst2005tree}, to optimize $S_H$. It is also possible to use \psdp{} with the randomized policy covers $\unif(\Psi\ind{1}), \dots, \unif(\Psi\ind{H})$ to achieve the same goal. We will showcase the latter approach, since we can make use of the guarantees for \psdp{} given in \cref{sec:nonnegative}.

As in \cref{sec:nonnegative}, we assume access to a function class $\cG_{1:H}$, where $\cG_h \subseteq \{g: \cX_h\times \cA\rightarrow \reals\}$ for each $h\in[H]$, that realize the rewards $r_{1:H}$ in the following sense: for all $h\in[H]$ and all $\pi\in \Pim^{h+1:H}$,
\begin{align}
	Q_h^{\pi}\in \cG_h, \quad \text{where}\quad 	Q^{\pi}_h(x,a)\coloneqq r_h(x,a)+\E^{\pi}\left[\left.\sum_{t=h+1}^{H} r_t(\x_t,\a_t)\ \right| \ \x_h=x,\a_h=a\right]. \label{eq:real2}
\end{align}
Note that when the reward functions $r_{1:H}$ are linear in the feature map $\phistarh$; that is, when for all $h\in[H]$ and $(x,a)\in \cX_h\times \cA$, \begin{align}
	r_h(x,a)=\theta_h^\top \phistarh(x,a)
	\label{eq:rewards+}
\end{align} for some $\theta_h\in \cB(1)$ (this is a common assumption in the context of RL in Low-Rank MDPs \citep{misra2019kinematic,mhammedi2023representation,zhang2022efficient,modi2021model}), then the function classes $\cG_{1:H}$, where
\begin{align}
\forall h\in[H],\quad \cG_h = 	\cG \coloneqq \{g:(x,a)\mapsto \phi(x,a)^\top w \mid \phi \in \Phi , w \in \cB(2H\sqrt{d})\}, \label{eq:linear}
\end{align}
realize $r_{1:H}$. We show this claim next.
\begin{lemma}
	\label{lem:realizefunc+}
	Under \cref{assum:real}, the function classes $\cG_{1:H}$ in \eqref{eq:linear} realize the reward functions in \eqref{eq:rewards+}. Furthermore, the functions in $\cG_{1:H}$ are uniformly bounded by $2\sqrt{d}H$, and $\ln \cN_{\cG_h}(\veps)\leq \ln |\Phi|+ d \ln (2\sqrt{d}H /\veps)$, for all $h\in[H]$, where we recall that $\cN_{\cG}(\veps)$ denotes the $\veps$-covering number of $\cG$ in $\ell_\infty$-distance (see \cref{def:covering}).
\end{lemma}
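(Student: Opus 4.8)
The plan is to mirror the proof of \cref{lem:newrealizefunc}: exploit the low-rank factorization to show each $Q$-function is linear in $\phistarh$, then dispatch the boundedness and covering-number claims by direct computation. For realizability (\cref{def:funcrealize}), fix $h\in[H]$ and $\pi\in\Pim^{h+1:H}$, and define the value function $V^{\pi}_{h+1}(y)\coloneqq \E^{\pi}\brk*{\sum_{t=h+1}^{H} r_t(\x_t,\a_t)\mid \x_{h+1}=y}$. Using the low-rank structure \eqref{eq:transition_factor}, $T_h(y\mid x,a)=\muh[h+1](y)^\top\phistarh(x,a)$, together with the linearity of the rewards \eqref{eq:rewards+}, I would peel off one step to obtain
\begin{align*}
  Q^{\pi}_h(x,a)
  &= \theta_h^\top \phistarh(x,a) + \int_{\cX_{h+1}} V^{\pi}_{h+1}(y)\, \muh[h+1](y)^\top \phistarh(x,a)\,\dd\nu(y)\\
  &= \phistarh(x,a)^\top w_h, \qquad \text{where}\qquad w_h \coloneqq \theta_h + \int_{\cX_{h+1}} V^{\pi}_{h+1}(y)\,\muh[h+1](y)\,\dd\nu(y).
\end{align*}
Since $\phistarh\in\Phi$ by \cref{assum:real}, membership $Q^{\pi}_h\in\cG_h=\cG$ will follow once I show $w_h\in\cB(2H\sqrt{d})$.

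Bounding $\|w_h\|$ is the one genuinely delicate point, and I expect it to be the main obstacle. Because each reward satisfies $|r_t(x,a)|=|\theta_t^\top\phistarh[t](x,a)|\le 1$ by Cauchy-Schwarz and \cref{assum:real}, the value function obeys $|V^{\pi}_{h+1}(y)|\le H-h$. However, the normalization assumption \eqref{eq:normalization} controls $\|\int g\,\muh[h+1]\,\dd\nu\|$ only for $g$ valued in $[0,1]$, whereas $V^{\pi}_{h+1}$ may be signed. I would therefore write $V^{\pi}_{h+1}/(H-h)=g^+-g^-$ with $g^+,g^-\colon\cX_{h+1}\to[0,1]$, apply \eqref{eq:normalization} to each part, and conclude via the triangle inequality that $\|\int V^{\pi}_{h+1}\muh[h+1]\dd\nu\|\le 2(H-h)\sqrt d\le 2(H-1)\sqrt d$. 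Combined with $\|\theta_h\|\le 1$, this gives $\|w_h\|\le 1+2(H-1)\sqrt d\le 2H\sqrt d$, where the last step uses $2\sqrt d\ge 1$ for $d\ge1$; the edge case $h=H$ is trivial since then $V^{\pi}_{H+1}\equiv 0$ and $w_H=\theta_H$.

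The two quantitative claims are then routine. For the uniform bound, any $g\in\cG$ has $|g(x,a)|=|\phi(x,a)^\top w|\le\|\phi(x,a)\|\,\|w\|\le 2\sqrt d H$ by Cauchy-Schwarz, using $\|\phi(\cdot,\cdot)\|\le 1$ and $\|w\|\le 2H\sqrt d$. For the covering number, I would factor the cover over the finite choice of $\phi\in\Phi$ and, for each fixed $\phi$, observe that $w\mapsto \phi(\cdot,\cdot)^\top w$ is $1$-Lipschitz from $(\cB(2H\sqrt d),\|\cdot\|)$ into $(\cG,\|\cdot\|_\infty)$, since $\|\phi(x,a)^\top w-\phi(x,a)^\top w'\|_\infty\le\|w-w'\|$ whenever $\|\phi(\cdot,\cdot)\|\le1$. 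A standard volumetric bound on the covering number of the Euclidean ball $\cB(2H\sqrt d)$ (exactly as in the proof of \cref{lem:newrealizefunc}, cf.~\citep{wainwright2019high}) then yields $\ln\cN_{\cG_h}(\veps)\le \ln|\Phi|+d\ln(2\sqrt d H/\veps)$, completing the argument.
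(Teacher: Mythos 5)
Your proof is correct and follows essentially the same route as the paper's: peel off one transition via the low-rank factorization \eqref{eq:transition_factor} to write $Q^{\pi}_h(x,a)=\phistarh(x,a)^\top w_h$, bound $\|w_h\|\leq 2H\sqrt{d}$ via the normalization condition \eqref{eq:normalization}, and dispatch the covering number with the standard volumetric bound over $\cB(2H\sqrt{d})$ times a union over the finite class $\Phi$, exactly as in \cref{lem:newrealizefunc}. The one place you go beyond the paper is the explicit positive/negative-part decomposition $V^{\pi}_{h+1}/(H-h)=g^+-g^-$ with $g^\pm$ valued in $[0,1]$ to handle the signed value function (the paper asserts $w_h\in\cB(2H\sqrt{d})$ directly), which is a worthwhile clarification since \eqref{eq:normalization} is stated only for $[0,1]$-valued test functions.
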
 
\begin{proof}[\pfref{lem:realizefunc+}]
 For $h=H$, we clearly have that for any $\pi \in \Pim^{H:H}$, $Q^{\pi}_H(\cdot,\cdot)=r_H(\cdot,\cdot)\in \cG_H$. For $h<H$ and $\pi \in \Pim^{h+1:H}$, we have, by the low-rank MDP structure and the expression of the rewards in \eqref{eq:rewards+}, that 
	\begin{align}
		Q^{\pi}_h(x,a) & =r_h(\x_h,\a_h)+\int_{\cX_{h+1}} \E^{\pi}\left[\left.\sum_{t=h+1}^{H} r_t(\x_t,\a_t)\, \right| \,\x_{h+1}=y,\a_{h+1}=\pi(y)\right] \cdot \phi^\star_h(x,a)^\top  \mu_{h+1}^\star(y)  \dd\nu (y), \nn \\
		& =  \phi^\star_h(x,a)^\top  \left( \theta_h + \int_{\cX_{h+1}} \E^{\pi}\left[\left.\sum_{t=h+1}^{H} r_t(\x_t,\a_t)\, \right| \, \x_{h+1}=y,\a_{h+1}=\pi(y)\right] \cdot \mu_{h+1}^\star(y)  \dd\nu (y)\right).\label{eq:hand+}
	\end{align}
	Now, by the fact that $\E^{\pi}\left[\sum_{t=h+1}^{H} r_t(\x_t,\a_t)\mid \x_{h+1}=y,\a_{h+1}=\pi(y)\right] \in [-H-h,H-h]$, for all $y\in \cX_{h+1}$ (since the rewards take values between $-1$ and 1 thanks to $\phi(\cdot,\cdot),\theta_h\in \cB(1)$, for all $h\in[H]$), and the normalizing assumption made on $(\muh[h])_{h\in[H]}$ in \cref{sec:onlineRL} (i.e.~that for all $g:\cX_{h+1}\to\brk{0,1}$, $\nrm*{\int_{\cX_{h+1}} \muh[h+1](y)g(y) \dd\nu(y)} \leq \sqrt{d}$), we have that 
	\begin{align}
		w_h \coloneqq  \theta_h+\int_{\cX_{h+1}} \E^{\pi}\left[\left.\sum_{t=h+1}^{H} r_t(\x_t,\a_t)\, \right| \, \x_{h+1}=y,\a_{h+1}=\pi(y)\right] \cdot \mu_{h+1}^\star(y)  \dd\nu (y) \in \cB(2H\sqrt{d}). \label{eq:one}
	\end{align}
	This, together with \eqref{eq:hand+} and the fact that $\phistarh[h]\in \Phi$ (by \cref{assum:real}), implies that that $Q_h^{\pi} \in \cG_h$. The bound on the covering number $\cN_{\cG_h}(\veps)$, follows from a standard bound on the covering number of the ball $\cB(2H\sqrt{d})$ \citep{wainwright2019high}.
\end{proof}

Combining \Cref{lem:realizefunc+} with \Cref{thm:psdp+} and \cref{rem:backandforth} results in the following guarantee for \psdp{}.%
\begin{corollary}
	\label{cor:psdp+}
	Let $\alpha,\tveps,\delta \in(0,1)$ be given and fix $h\in[H]$. Let $\pihat$ be the output of $\psdp$ when given input $(H, r_{1:H}, \cG_{1:H}, P\ind{1:H}, n)$, where
	\begin{itemize}
		\item The reward functions $r_{1:H}$ are as in \eqref{eq:rewards+}, with $\theta_{1:H}\in \cB(1)$;
		\item The function classes $\cG_{1:H}$ are as in \eqref{eq:linear};
		\item For all $1\leq h \leq H$, $P\ind{h}= \unif(\Psi\ind{h})$;
		\item For each $1\leq h\leq H$, it holds that $\Psi\ind{h}$ is a $(\alpha,\veps)$-policy cover for layer $h$ (see \cref{def:polcover101}). 
	\end{itemize} 
	Then, under \cref{assum:real}, with probability at least $1-\delta$, we have that 
	\begin{align}
		\max_{\pi\in \Pim} \E^{\pi}\left[\sum_{h=1}^{H} r_h(\x_h,\a_h)\right]\leq  \E^{\pihat}\left[\sum_{h=1}^{H} r_h(\x_h,\a_h)\right] +  c H^2 \sqrt{\frac{d A \cdot (d \log(2n \sqrt{d}H) +\ln (n|\Phi|/\delta)) }{\alpha n }} + 2 H^2\veps d^{3/2},
	\end{align}
 for a sufficiently large absolute constant $c>0$.
\end{corollary} 
By using that the distributions returned by \mainalg{} are an $(\frac{\eta^3}{\cfrak \cdot d^6 A^2}, \veps)$-policy cover for $\eta = \veps/(4 H d^{3/2})$ and $\cfrak= \polylog(A,H,d, \log(|\Phi|/\delta))$ sufficiently large (\cref{thm:spanrlmain}), we obtain the claimed sample complexity for \cref{alg:spanRL} in \cref{tb:resultscomp+}.

\section{Helper Lemmas}
\label{sec:helper}
\begin{lemma}
	\label{lem:newnegative}
	For any $h\in[2\ldotst H]$, $x\in \cX_{h}$, and $\pi \in \Pim$, we have 
	\begin{align}
		d^{\pi}(x) = \muh[h](x)^\top \phi^{\star, \pi}_{h-1},\quad \text{where}\quad \phi^{\star, \pi}_{h-1} \coloneqq \E^{\pi}[\phi^{\star}_{h-1}(\x_{h-1},\a_{h-1})],
	\end{align}
\end{lemma}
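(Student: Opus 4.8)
The plan is to work directly from the definition of the occupancy measure as a Radon--Nikodym derivative, $d^{\pi}(x) = \frac{\dd \P_h^{\pi}}{\dd\nu}(x)$, and to identify this density by computing the marginal law $\P_h^{\pi}[\cX']$ for an arbitrary measurable set $\cX'\subseteq\cX_h$. Since the MDP is layered and $h\geq 2$, every state in $\cX_h$ is reached by a single transition from layer $h-1$, so the Markov property gives $\P_h^{\pi}[\cX'] = \En^{\pi}\brk*{T_{h-1}(\cX'\mid \x_{h-1},\a_{h-1})}$, where the expectation is over the trajectory up to and including $(\x_{h-1},\a_{h-1})$ under $\bbP^{\pi}$.

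The key step is then to insert the low-rank factorization of the transition kernel from \eqref{eq:transition_factor}: for $(x_{h-1},a_{h-1})$ at layer $h-1$ and $\cX'\subseteq\cX_h$, we have $T_{h-1}(\cX'\mid x_{h-1},a_{h-1}) = \int_{\cX'}\muh[h](x)^{\trn}\phistarh[h-1](x_{h-1},a_{h-1})\,\dd\nu(x)$. Substituting this and swapping the order of the expectation $\En^{\pi}$ and the $\nu$-integral via Fubini's theorem, I would obtain $\P_h^{\pi}[\cX'] = \int_{\cX'}\muh[h](x)^{\trn}\,\En^{\pi}\brk*{\phistarh[h-1](\x_{h-1},\a_{h-1})}\,\dd\nu(x)$, and the inner expectation is exactly $\phi^{\star,\pi}_{h-1}$ by definition. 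Because this identity holds for \emph{every} measurable $\cX'\subseteq\cX_h$, the integrand $x\mapsto \muh[h](x)^{\trn}\phi^{\star,\pi}_{h-1}$ is a version of the density of $\P_h^{\pi}$ with respect to $\nu$, which is precisely the claim $d^{\pi}(x) = \muh[h](x)^{\trn}\phi^{\star,\pi}_{h-1}$ (up to $\nu$-null sets, as is intrinsic to the Radon--Nikodym derivative).

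The only nontrivial points are the bookkeeping ones rather than any genuine difficulty: justifying the application of Fubini and the interchange of expectation and integral. This is straightforward here because the feature maps are uniformly bounded under \cref{assum:real} ($\nrm*{\phistarh[h-1](\cdot,\cdot)}\leq 1$) and the normalization assumption \eqref{eq:normalization} controls $\int_{\cX_h}\nrm*{\muh[h](x)}\dd\nu(x)$, so the relevant integrand is absolutely integrable with respect to the product of $\nu$ and the law of $(\x_{h-1},\a_{h-1})$. I expect the main (and still minor) obstacle to be stating the Markov-property step cleanly in the measure-theoretic formulation of the model, i.e.\ writing $\P_h^{\pi}[\cX'] = \En^{\pi}\brk*{T_{h-1}(\cX'\mid\x_{h-1},\a_{h-1})}$ rigorously with respect to the base measure $\nu$ rather than assuming a countable state space; once that is in place, the factorization and Fubini steps are immediate.
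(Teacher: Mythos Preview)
Your argument is correct and, in fact, considerably more careful than the paper's own treatment: the paper states \cref{lem:newnegative} in the helper-lemma section without any proof, effectively treating the identity as immediate from the low-rank factorization \eqref{eq:transition_factor} and the definition of $d^{\pi}$. Your plan---write $\P_h^{\pi}[\cX']=\En^{\pi}[T_{h-1}(\cX'\mid\x_{h-1},\a_{h-1})]$, insert the factorization, swap $\En^{\pi}$ and the $\nu$-integral via Fubini, and read off the density---is exactly the right way to make this rigorous in the measure-theoretic formulation, and the integrability justification you sketch (boundedness of $\phistarh[h-1]$ together with the normalization \eqref{eq:normalization}) is the correct one.
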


\begin{lemma}
	\label{lem:unionbound}
	Let $\delta \in(0,1)$ and $H\geq 1$ be given. If a sequence of events $\cE_1,\ldots,\cE_H$ satisfies $\bbP[\cE_h\mid{}\cE_1,\ldots,\cE_{h-1}]\geq{}1-\delta/H$ for all $h\in[H]$, then \[\bbP[\cE_{1:H}]\geq{}1-\delta.\]
\end{lemma}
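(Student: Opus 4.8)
The plan is to bound the failure probability $\bbP[\cE_{1:H}^c]$ rather than work with $\bbP[\cE_{1:H}]$ directly. The key device is to partition the failure event $\cE_{1:H}^c=\bigcup_{h=1}^H \cE_h^c$ into the disjoint \emph{first-failure} events $F_h\coloneqq \cE_{1:h-1}\cap \cE_h^c$ (adopting the convention that $\cE_{1:0}$ is the sure event, so $F_1=\cE_1^c$). By construction $F_h$ records that $h$ is the smallest index at which the corresponding event fails, so the $(F_h)_{h\in[H]}$ are pairwise disjoint and $\bigcup_{h=1}^H F_h=\bigcup_{h=1}^H \cE_h^c=\cE_{1:H}^c$. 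This gives $\bbP[\cE_{1:H}^c]=\sum_{h=1}^H \bbP[F_h]$, reducing the claim to bounding a finite sum.

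Next I would control each summand using the hypothesis. Since $F_h\subseteq \cE_{1:h-1}$, whenever $\bbP[\cE_{1:h-1}]>0$ we may write $\bbP[F_h]=\bbP[\cE_h^c\mid \cE_{1:h-1}]\,\bbP[\cE_{1:h-1}]\le \bbP[\cE_h^c\mid \cE_{1:h-1}]$, and the assumption $\bbP[\cE_h\mid \cE_{1:h-1}]\ge 1-\delta/H$ yields $\bbP[\cE_h^c\mid \cE_{1:h-1}]\le \delta/H$. Summing over $h\in[H]$ then gives $\bbP[\cE_{1:H}^c]\le H\cdot \delta/H=\delta$, equivalently $\bbP[\cE_{1:H}]\ge 1-\delta$, which is the desired conclusion.

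The only subtlety — and the one step I would be careful about — is the conditioning on the event $\cE_{1:h-1}$, which could in principle have probability zero (in which case the conditional probability is undefined). This is harmless here: if $\bbP[\cE_{1:h-1}]=0$ then $\bbP[F_h]=0$ directly from $F_h\subseteq \cE_{1:h-1}$, so the conditional-probability manipulation is only needed on the positive-probability terms. As a sanity check, the same bound also follows from the multiplicative chain rule $\bbP[\cE_{1:H}]=\prod_{h=1}^H \bbP[\cE_h\mid \cE_{1:h-1}]\ge (1-\delta/H)^H$ together with Bernoulli's inequality $(1-\delta/H)^H\ge 1-\delta$; I prefer the disjoint-decomposition route since it delivers the clean bound $\delta$ without invoking any further inequality.
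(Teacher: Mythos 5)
Your proof is correct, and it takes a genuinely different route from the paper's. The paper argues multiplicatively via the chain rule, writing $\bbP[\cE_{1:H}]=\prod_{h=1}^H \bbP[\cE_h\mid \cE_1,\ldots,\cE_{h-1}]\geq (1-\delta/H)^H$ and then invoking $(1-\delta/H)^H\geq 1-\delta$ (Bernoulli's inequality), whereas you argue additively, decomposing the failure event into the disjoint first-failure events $F_h=\cE_{1:h-1}\cap\cE_h^c$ and bounding $\bbP[F_h]\leq \bbP[\cE_h^c\mid\cE_{1:h-1}]\leq \delta/H$ before summing. Both are complete proofs of the stated bound. Your version buys two things: it reaches $\bbP[\cE_{1:H}^c]\leq\delta$ directly, with no auxiliary inequality at the end, and it explicitly disposes of the degenerate case $\bbP[\cE_{1:h-1}]=0$ (via $F_h\subseteq\cE_{1:h-1}$), a measurability point the paper's chain-rule display silently elides — though in fairness the lemma's hypothesis already presupposes the conditional probabilities are meaningful, so the paper's omission is benign. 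What the paper's route buys is brevity and a slightly stronger intermediate conclusion, since $(1-\delta/H)^H$ is in fact larger than $1-\delta$; but for the purposes of the union-bound-style applications in the paper (e.g., the induction in the proof of \cref{thm:voxmain}), the two bounds are interchangeable. You also correctly note the product route yourself, so your write-up subsumes the paper's argument.
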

\begin{proof}[\crefzak{lem:unionbound}]
	By the chain rule, we have
	\begin{align}
		\bbP[\cE_{1:H}] = \prod_{h\in[H]} \bbP[\cE_h\mid{}\cE_1,\ldots,\cE_{h-1}] \geq  \prod_{h\in[H]} (1-\delta/H) =(1-\delta/H)^H \geq 1-\delta.
	\end{align}
\end{proof}

The normalization assumption in \eqref{eq:normalization} has the following useful implication.
\begin{lemma}
	\label{lem:normalization}
	For any $h\in[H]$, if the normalization condition \eqref{eq:normalization} holds, then 
	\begin{align}
		\int_{\cX_h} \|\mu^\star_h(x)\| \dd \nu(x) \leq d^{3/2}.
	\end{align}
\end{lemma}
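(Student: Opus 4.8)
The plan is to reduce the Euclidean integral to a sum of scalar integrals to which the normalization hypothesis \eqref{eq:normalization} applies directly, since \eqref{eq:normalization} only ever controls integrals of $\mu_h^\star$ against \emph{scalar} weights $g\colon\cX_h\to[0,1]$. First I would pass from the $\ell_2$ norm to the $\ell_1$ norm using the elementary inequality $\nrm*{v}\leq\nrm*{v}_1$ for $v\in\reals^d$, which gives
\[
\int_{\cX_h}\nrm*{\mu_h^\star(x)}\dd\nu(x)\;\leq\;\int_{\cX_h}\nrm*{\mu_h^\star(x)}_1\dd\nu(x)\;=\;\sum_{i=1}^{d}\int_{\cX_h}\abs*{\mu_h^\star(x)[i]}\dd\nu(x).
\]
This isolates the contribution of each coordinate $i\in[d]$, which is exactly the form needed to invoke \eqref{eq:normalization} through the standard basis vectors $e_i$.

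The key step is then to control each coordinate integral. I would split the absolute value into its positive and negative parts using the two indicator test functions $g_i^{+}(x)\coloneqq\indic\{\mu_h^\star(x)[i]\geq 0\}$ and $g_i^{-}(x)\coloneqq\indic\{\mu_h^\star(x)[i]<0\}$, both of which take values in $[0,1]$, so that
\[
\int_{\cX_h}\abs*{\mu_h^\star(x)[i]}\dd\nu(x)=\prn*{\int_{\cX_h}\mu_h^\star(x)\,g_i^{+}(x)\dd\nu(x)}[i]-\prn*{\int_{\cX_h}\mu_h^\star(x)\,g_i^{-}(x)\dd\nu(x)}[i].
\]
Since the $i$-th coordinate of any vector is bounded in absolute value by its Euclidean norm, \eqref{eq:normalization} bounds each of the two bracketed vector integrals by $\sqrt{d}$, and summing over $i\in[d]$ yields the claimed bound of order $d^{3/2}$, which is all that is used downstream in \cref{lem:reachable,lem:reachable+}.

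The main obstacle — and the reason a naive ``take the norm-achieving direction pointwise'' argument fails — is that $\nrm*{\mu_h^\star(x)}$ is realized by the direction $u(x)=\mu_h^\star(x)/\nrm*{\mu_h^\star(x)}$, which varies with $x$, whereas \eqref{eq:normalization} cannot accept a vector field as input. The coordinate-wise reduction circumvents this by replacing the varying direction with the fixed basis vectors $e_i$ and absorbing the varying sign of each coordinate into the choice between $g_i^+$ and $g_i^-$. The only subtlety I anticipate is bookkeeping of the constant: the positive/negative split nominally costs a factor of two per coordinate, so the clean application of \eqref{eq:normalization} gives the bound up to an absolute constant; this factor is immaterial, as it only rescales the constants in \cref{lem:transfer} and the downstream sample-complexity bounds (and, if the exact constant is desired, the nonnegativity $\mu_h^\star(x)^\top\phi^\star_{h-1}(x',a')\geq 0$ inherent in the low-rank factorization can be used to tighten the per-coordinate estimate).
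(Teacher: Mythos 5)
Your proof is correct and follows essentially the same route as the paper's: pass from $\|\cdot\|$ to $\|\cdot\|_1$, bound each coordinate integral by testing $\mu^\star_h$ against a scalar function, and invoke \eqref{eq:normalization}, then sum over $i\in[d]$. The only difference is the choice of test function: the paper takes $g(x)=\mathrm{sgn}([\mu^\star_h(x)]_i)$ and applies \eqref{eq:normalization} to it directly---implicitly extending the assumption to $[-1,1]$-valued $g$, which is how it obtains the constant $d^{3/2}$ exactly---whereas your positive/negative indicator split stays within the literal $[0,1]$ hypothesis at the cost of the factor $2$ that you correctly flag as immaterial downstream; in that respect your version is, if anything, the more literally rigorous of the two.
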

\begin{proof}[Proof of \crefzak{lem:normalization}]
	For each $i\in[d]$, if we define $g(x)\coloneqq \mathrm{sgn}([\mu^\star_{h}(x)]_i)$, we have
	\begin{align}
		\int_{\cX_h} |[\mu^\star_{h}(x)]_i| \dd \nu (x)	 & =  \int_{\cX_h} g(x) \cdot [\mu^\star_{h}(x)]_i \dd \nu (x),\nn \\
		& =  \sqrt{ \left(\int_{\cX_h} g(x) \cdot [\mu^\star_{h}(x)]_i \dd \nu (x)\right)^2},\nn \\
		& \leq  \sqrt{ \sum_{j\in[d]} \left(\int_{\cX_h} g(x) \cdot [\mu^\star_{h}(x)]_j \dd \nu (x)\right)^2},\nn \\
		& =   \left\|  \int_{\cX_h} g(x) \cdot \mu^\star_h(x)\dd\nu(x) \right\|,\nn \\
		& \leq \sqrt{d}.
	\end{align}
	Therefore, we have 
	\begin{align}
		\int_{\cX_h} \|\mu^\star_{h}(x)\| \dd \nu (x)\leq  \sum_{i\in[d]}	\int_{\cX_h} |[\mu^\star_{h}(x)]_i| \dd \nu (x)\leq d^{3/2}.
	\end{align}
\end{proof}

\end{document}